\documentclass[manuscript, screen]{jair}
\setcopyright{cc}
\copyrightyear{2026}
\acmDOI{}
\JAIRAE{}
\JAIRTrack{} % Insert JAIR Track Name only if part of a special track
\acmVolume{}
\acmArticle{}
\acmMonth{9}
\acmYear{2026}

\RequirePackage[
  datamodel=acmdatamodel,
  style=acmauthoryear,
  backend=biber,
  giveninits=true,
  uniquename=init
  ]{biblatex}

\usepackage{IEEEtrantools}
\usepackage{bbm}
\usepackage{algorithm}
\usepackage{algpseudocode}
\usepackage{booktabs}
\usepackage{multirow}
\usepackage{xurl}
\usepackage{longtable}
\usepackage{placeins}
\usepackage{enumitem}
\usepackage{tabularx}
\newcolumntype{Y}{>{\raggedright\arraybackslash}X}

\DeclareMathOperator{\argmin}{argmin}
\DeclareMathOperator{\argmax}{argmax}

\DeclareMathOperator{\spanop}{span}
\DeclareMathOperator{\Proj}{Proj}

\newtheorem{definition}{Definition}
\newtheorem{theorem}{Theorem}

\newtheorem{proposition}{Proposition}
\newtheorem{corollary}{Corollary}
\newtheorem{remark}{Remark}
\newtheorem{result}{Result}
\newtheorem{problem}{Problem}

\makeatletter
\let\jairSavedFormatDOI\@formatdoi
\renewcommand{\@formatdoi}[1]{\ifx\@acmDOI\@empty\else\jairSavedFormatDOI{#1}\fi}
\makeatother
\AtBeginDocument{%
  \fancypagestyle{firstpagestyle}{%
    \fancyhf{}\fancyfoot[C]{\thepage}%
  }%
}

\begin{document}
\raggedbottom
\title[Prediction Limits and Koopman Closure of Geometry-Induced Soft State Abstractions]{Prediction Limits and Koopman Closure of Geometry-Induced Soft State Abstractions}
\author{Mohit Kumar}
\authornote{Corresponding Author.}
\orcid{0000-0002-7368-5157}
\email{mohit.kumar@uni-rostock.de}
\affiliation{%
  \institution{University of Rostock}
  \city{Rostock}
  \state{Mecklenburg-Vorpommern}
  \country{Germany}
}
\affiliation{%
   \institution{Software Competence Center Hagenberg}
   \city{Hagenberg}
  \state{Upper Austria}
  \country{Austria}
}

\author{Somayeh Kargaran}
\orcid{0009-0005-8421-0447}
\email{somayeh.kargaran@scch.at}
\affiliation{%
\institution{Software Competence Center Hagenberg}
   \city{Hagenberg}
  \state{Upper Austria}
  \country{Austria}
}

\renewcommand{\shortauthors}{Kumar and Kargaran}
\begin{abstract}
{\bf Background:} Soft state abstractions represent each state by weights, called soft coordinates, over reference classes. We construct these coordinates from class-specific reconstruction errors and study whether a linear model can predict their next values.

{\bf Objectives:} We seek a computable test of whether every linear predictor within a specified prediction-matrix norm limit exceeds a chosen root-mean-square error when predicting next-state coordinates. The limit uses the matrix spectral norm.

{\bf Methods:} A predictor using only a class label gives every state in that class the same successor prediction. Variation in successor coordinates therefore causes unavoidable error. Soft coordinates can distinguish these states, but the prediction matrix's norm limit restricts how strongly it can amplify coordinate differences. Independent, identically distributed current–successor state pairs give a lower confidence bound, called an exclusion certificate, without fitting a prediction matrix. The result allows deterministic or stochastic transitions. Kernel Affine Hull Machine (KAHM) reconstruction scores define the coordinates. Differences between class scores bound their distance from one-hot labels, which assign weight one to the reference class and zero to all others. 

{\bf Results:} The bound holds simultaneously for all prediction matrices within the norm limit. For fixed coordinates and evaluation distribution, the certificate converges with probability one to a population lower bound as the i.i.d. sample grows. Any root-mean-square error tolerance below this limit is eventually certified unattainable by every permitted prediction matrix. A four-state KAHM example has exact prediction under one dynamics map and unavoidable error under another. Under deterministic dynamics and exact coordinate prediction at every state, a prediction-matrix eigenvector defines a Koopman eigenfunction if its coordinate combination is nonzero. This function is multiplied by the eigenvalue at each step. 

Experiments compare two forecasting routes: predicting soft coordinates directly, or predicting the physical state with polynomial extended dynamic mode decomposition (EDMD) and then converting the forecast to soft coordinates. Both routes are evaluated against the same target coordinates within each system. After 200 time steps, EDMD gives smaller errors on Duffing, whereas direct coordinate prediction gives smaller errors on Van der Pol. In CartPole, MountainCar, and Acrobot, average 50-step $R^2$ is negative: under this metric, the models perform worse than always predicting the mean evaluation-target coordinates. 

{\bf Conclusions:} More evaluation data reduce sampling uncertainty, but the certificate can remain below the minimum attainable error. Proofs and the four-state example establish the certificate; the benchmarks assess prediction and spectral diagnostics, without evaluating the certificate numerically.
\end{abstract}

\maketitle
\fancyfoot{}
\fancyfoot[C]{\thepage}
\section{Introduction}
\label{sec:introduction}
State abstraction determines which distinctions between states remain available for prediction, reasoning, and control. Hard aggregation assigns each state to one class; soft aggregation assigns weights to several classes \cite{singh1994soft,abel2016,gelada2019}. Two questions arise: \emph{which states does the representation distinguish?} and \emph{how accurately can the representation of the next state be predicted?} A nearly constant representation can have small prediction error while distinguishing few states. State-abstraction methods therefore often seek to preserve specific reward, transition, behavioral, or Markov information \cite{abel2016,gelada2019,allen2021markov}. 

Koopman methods use a linear model to approximate the evolution of a \emph{dictionary}: a finite set of functions of the state. Here each dictionary function assigns a weight based on how well one reference class reconstructs the state. We fix these functions before fitting a matrix to predict their next values. We then ask when differences between successor coordinates within a reference class prevent accurate linear prediction. 

For $C$ reference classes, write $\Phi(x)=(\Phi_1(x),\ldots,\Phi_C(x))^\top\in\Delta_C$, where $\Delta_C$ is the simplex of nonnegative $C$-dimensional vectors whose coordinates sum to one. Coordinate $\Phi_c(x)$ is the normalized weight of class $c$. Classes may be prescribed or estimated; our experiments fix $K$-means classes before fitting dynamics. A Kernel Affine Hull Machine (KAHM) reconstructs each state from class-specific samples. A smaller reconstruction score gives that class greater relative weight. We call the coordinates \emph{geometry-induced} because reconstruction determines their values. A fitted linear prediction matrix predicts $\Phi(x^+)$ from $\Phi(x)$, where $x^+$ denotes the next state. It approximates the evolution of the coordinate functions, described below by the Koopman operator. We call the representation together with its prediction matrix a \emph{Kernel Affine Hull Koopman Machine (KAHKM)}.
\subsection{Finite-dimensional closure of a soft state abstraction}
\label{sec:intro_problem}
For a deterministic map $F$, exact closure means that $\Phi(F(x))=B^\top\Phi(x)$ for every state, using a single matrix $B\in\mathbb R^{C\times C}$. For any candidate prediction matrix $B$, the residual $\zeta_{\Phi,B}(x):=\Phi(F(x))-B^\top\Phi(x)$ measures one-step coordinate error. Its expected squared norm, $\mathbb E\|\zeta_{\Phi,B}\|^2$, is the one-step prediction risk. 

The Koopman operator acts by composition: $(K_Fg)(x)=g(F(x))$. For a linear combination $g_a(x)=a^\top\Phi(x)$, exact closure gives $g_a(F(x))=(Ba)^\top\Phi(x)$. Thus the evolved observable remains a linear combination of the same coordinates. This is the invariance principle underlying extended dynamic mode decomposition (EDMD)~\cite{williams2015}; our analysis asks how reconstruction scores constrain prediction error for the chosen coordinates. 

Only linear combinations of the chosen coordinates are covered by this invariance statement. Prediction error can arise from discarded state information, nonlinear coordinate evolution, or inaccurate matrix estimation. The error magnitude alone does not distinguish these causes. For stochastic transitions, predicting the conditional mean of the next-state coordinates also does not determine the full next-state distribution~\cite{gelada2019,colbrook2024stochastic}. Remark~\ref{rem:stochastic_risk} gives the corresponding one-step risk interpretation. The spectral identities developed here require deterministic dynamics and exact closure.
\subsection{Reference geometry and the meaning of the coordinates}
\label{sec:intro_abstraction_scope}
KAHM reconstructs a state as a kernel-regularized affine combination of reference states: the weights sum to one but need not be nonnegative~\cite{kumar2024kahm,kumar2025collaborative}. Transforming and normalizing the reconstruction scores produces the nonnegative coordinates $\Phi_c(x)$. Each coordinate expresses relative agreement with class $c$'s reconstruction model. Additional evidence would be needed to interpret it as a calibrated class probability, a semantic concept, or a causal explanation. 

Three objects play different roles. Hard labels partition the reference data. The full vector $\Phi(x)$ is the prediction target. Its largest-coordinate index groups states for visualization and need not equal the hard label. Here ``finite-dimensional'' refers to the span of the $C$ coordinate functions and their $C\times C$ prediction matrix, also called the closure matrix. The number of coordinates $C$ may exceed the state dimension. 

To detect nearly constant representations, we measure coordinate variance and whether that variance is concentrated in a few directions (effective rank). When comparing predictors, we fix the target map $\Phi$ across methods. Fitting the prediction matrix is then linear regression. Our bounds ask what error remains unavoidable under a specified spectral-norm limit on this matrix. They compare the deviation of soft coordinates from one-hot reference labels with the variation in successor coordinates within each reference class.
\subsection{Contributions}\label{sec:intro_contributions}
The central contribution is a computable lower confidence bound on root-mean-square prediction error for a fixed soft-coordinate map. When the bound exceeds a chosen tolerance, it certifies that no linear prediction matrix within the specified spectral-norm limit achieves that tolerance. We call this an \emph{exclusion certificate}. It uses the distance from one-hot reference labels and the variation in successor coordinates within each reference class. The result applies to any fixed measurable map into the simplex. KAHM supplies one such map, with reconstruction-score differences controlling its distance from the labels. 

Proposition~\ref{prop:population_obstruction} derives the lower bound from population quantities. Proposition~\ref{prop:exclusion_certificate} computes a lower confidence bound without fitting a prediction matrix. It uses i.i.d. evaluation pairs drawn independently of the data used to construct and select the representation. Proposition~\ref{prop:class_occupancy_bias} quantifies the downward bias in successor variance caused by using sample class means. Corollary~\ref{cor:certificate_consistency} gives conditions under which larger evaluation samples eventually certify that a chosen tolerance is unattainable. Section~\ref{sec:certificate_comparison} compares this certificate with a lower bound obtained by minimizing empirical risk under a uniform regression guarantee. Our certificate explains how agreement with reference labels can limit prediction accuracy; neither bound is claimed to be uniformly tighter. 

A reconstruction-score margin measures how much better the labelled class reconstructs a state than competing classes. These margins bound the deviation from one-hot labels and give the lower and upper prediction-error bounds in Proposition~\ref{prop:attainable_closure}. The four-state construction holds the KAHM representation fixed while changing the dynamics. Exact closure is possible in one case and impossible in the other, showing why reconstruction quality alone does not establish predictability. Proposition~\ref{prop:evaluable_risk} gives a complementary upper bound for a fitted predictor. We also construct the reproducing kernel Hilbert space (RKHS) generated by the coordinates. Different coefficient vectors can define the same function of the coordinates. We remove this redundancy before identifying matrix eigenvectors that represent nonzero Koopman or adjoint eigenfunctions. Such functions change only by a scalar factor under the corresponding operator. The experiments measure coordinate variation, prediction error, effects of simplex projection, spectral diagnostics based on weighted feature sums, and behavior under fixed policies. They do not evaluate the certificate numerically or establish preservation of rewards, the Markov property, or information needed for planning.
\subsection{Related work and positioning}\label{sec:related_work}
We position the method by comparing how representations are defined, which state information they preserve, and which prediction guarantees they provide.
\paragraph{State aggregation and learned abstractions.}
Soft aggregation uses graded memberships for value approximation~\cite{singh1994soft}. State-abstraction methods preserve specified behavioral, reward, transition, Markov, or causal information~\cite{abel2016,gelada2019,allen2021markov,wang2024causal}. For finite Markov chains, lumpability requires states in each class to have the same probabilities of transitioning to every class~\cite{zhang2020compression}. In deterministic hard aggregation, this reduces to a common successor class. Our lower bound limits prediction accuracy when soft coordinates approach one-hot labels and the spectral norm of the prediction matrix is bounded. Its finite-sample version uses a fixed evaluation distribution and allows stochastic successors. Predicting coordinate means does not, by itself, preserve transition distributions.
\paragraph{Grounded symbols and concept-based representations.}
\textcite{konidaris2018symbols} construct symbols describing when actions apply and their effects, provably sufficient for evaluating plans of high-level actions. Concept bottleneck models learn intermediate representations from concept annotations \cite{koh2020}. Our coordinates are normalized weights assigned to reference datasets according to their KAHM reconstruction scores. Inspecting the reference datasets and reconstruction scores explains what each coordinate measures; externally supplied concept labels or symbolic predicates are not required. In fixed-policy experiments, we group states by the index of their largest coordinate and report action frequencies within each group. These summaries describe observed behavior; causal effects and planning sufficiency are outside their scope. 
\paragraph{Relation to prior KAHM work.}
Prior work provides RKHS-based affine-hull reconstruction \cite{kumar2024kahm}, induced kernels and hypothesis spaces \cite{kumar2025collaborative}, bounded reconstruction scores \cite{kumar2025operator}, and normalized representations with adaptive refinement \cite{kumar2026semantic}. These results define reconstruction and encoding relative to reference data. Here we fix the coordinate map before fitting its evolution. We relate reconstruction-score differences to one-step prediction error and show when successor variation forces every norm-bounded linear prediction matrix to incur positive error. The experiments measure prediction errors and deviations from candidate eigenrelations; they do not numerically evaluate the margin bounds.
\paragraph{Finite-dimensional Koopman models.}
DMD and EDMD fit linear evolution of measured or prescribed observables~\cite{schmid2010,williams2015,williams2015kernel}; learned dictionaries and latent coordinates adapt the representation~\cite{li2017dictionary,takeishi2017,lusch2018,otto2019,mondal2024interactive}. RKHS Koopman regression supplies risk and spectral guarantees, including uniform bounds over operator classes~\cite{kostic2022regression}. After minimizing empirical risk over the specified operator class, such a bound can certify that every operator in that class exceeds a chosen error tolerance. Section~\ref{sec:certificate_comparison} makes this comparison explicit. Our certificate uses the variation in successor coordinates among states whose current coordinates are close to the same one-hot label. The prediction matrix's norm limit restricts how strongly it can amplify differences between those current coordinates. Established finite-feature RKHS theory provides the function space used in the later operator analysis.
\paragraph{Coherent sets and soft kinetic models.}
Transfer-operator methods identify coherent regions whose probability mass exhibits limited dispersion over a prescribed time interval \cite{froyland2013}. Variational approach for Markov processes networks (VAMPnets) jointly learn soft state assignments and kinetic models through a variational objective \cite{mardt2018}. We first construct soft coordinates from reconstruction scores. We then ask whether one fixed matrix can accurately predict their next values. Reference classes are not required to trap trajectories or retain probability mass. We assess predictability after fixing the reconstruction-based coordinates.
\paragraph{Residual validation and spectral interpretation.}
A finite-dimensional matrix fitted to observable data may possess eigenvalues and eigenvectors that do not correspond to reliable spectral structure of the underlying Koopman operator. Residual DMD addresses this issue by evaluating operator residuals to distinguish well-supported modes from artifacts of finite-dimensional approximation \cite{colbrook2023}. RKHS-based spectral methods similarly construct operator approximations under explicit assumptions on the function space and operator \cite{boulle2025}. Our exact-closure results specify when eigenvectors of the full or reduced closure matrix define Koopman or adjoint eigenfunctions in the chosen observable space. In the experiments, we compare weighted sums of current and successor features and check how accurately the weights represent each candidate function. Random successor variation also enters these sums. Their discrepancy is not, without additional assumptions, a stochastic Koopman-operator residual~\cite{colbrook2024stochastic}.

Section~\ref{sec:prediction_limits} derives a lower bound on the error attainable when predicting fixed coordinates. The later spectral analysis additionally assumes that the coordinate span is invariant under the deterministic dynamics. This assumption makes it possible to restrict the Koopman operator to that span.
\subsection{Notation and basic conventions}\label{sec:intro_notation}
We use $\mathbb Z_+$, $\mathbb R$, and $\mathbb C$ for positive integers, real numbers, and complex numbers; $\mathbb R_+=[0,\infty)$. Logarithms are natural. Unless stated otherwise, $n,N,C,m,M$ are positive integers. The index $c$ denotes a reference class. For a scalar, $|\cdot|$ is absolute value; for a finite set, it is cardinality. 

Vector norms $\|\cdot\|$ and $\|\cdot\|_1$ are Euclidean and $1$-norms. Matrix norms $\|\cdot\|_2$ and $\|\cdot\|_F$ are spectral and Frobenius norms. Superscript $\top$ denotes transpose. In a matrix written as \([v^1\ \cdots\ v^N]\), the displayed vectors are its columns. The symbols $I_N$, $\mathbf1_C$, $\mathbf0_C$, $\mathrm e^c$,
and $\mathbbm1_A$ denote the $N\times N$ identity matrix, the all-ones vector in $\mathbb R^C$, the zero vector in $\mathbb R^C$, the $c$th canonical basis vector in $\mathbb R^C$, and the indicator function of the set $A$, respectively.

The state region $\mathcal X\subset\mathbb R^n$ carries the trace Borel $\sigma$-algebra $\mathcal B(\mathcal X)=\{A\cap\mathcal X:A\in\mathcal B(\mathbb R^n)\}$~\cite{folland1999}. Expectations use the stated probability law. We write $\mathbb P_x$ for a state law and $\mathbb P_{x\mid y}$ for its conditional law given the reference label. The coordinate simplex is
\begin{IEEEeqnarray}{rCl}
\label{eq:simplex_definition}
\Delta_C&:=&\{y\in\mathbb{R}^{C}_{\geq0}\mid\mathbf{1}_C^{\top}y=1\},
\end{IEEEeqnarray}
with $\Phi_c(x)=(\Phi(x))_c$.
\subsection{Paper organization}
Section~\ref{sec:prediction_limits} states the general certificate, proves its convergence as the evaluation sample grows, and compares it with a lower bound derived from a uniform regression-risk guarantee. Sections~\ref{sec:mathematical_background} and~\ref{sec:method} introduce KAHM reconstruction and its Koopman interpretation. Section~\ref{sec:adaptive_closure_estimation} connects reconstruction geometry, fitting, and prediction risk, and gives the four-state example. Section~\ref{sec:experiments} reports the prediction, coordinate-variation, and spectral measurements. Sections~\ref{sec:discussion} and~\ref{sec:conclusion} discuss the implications and conclude.
\section{Prediction Limits for Fixed Soft Coordinates}
\label{sec:prediction_limits}
We first derive a prediction-error bound for any fixed soft representation. It requires a fixed map assigning states to reference classes and a coordinate vector whose entries are nonnegative and sum to one. No particular reconstruction method or RKHS is needed. Sections~\ref{sec:method} and~\ref{sec:kahm_geometric_fidelity} construct these coordinates using KAHM and relate their deviation from the labels to reconstruction scores.

\subsection{Evaluation law and the population obstruction}
\label{sec:population_obstruction}
Fix a finite class count $C$, a measurable label map $c:\mathcal X\to\{1,\ldots,C\}$, and a measurable coordinate map $\Phi:\mathcal X\to\Delta_C$. Let $X$ be the current state and $X^+$ its successor. Their joint evaluation distribution is $\mathbb P$. Write $\phi=\Phi(X)$, $Z=\Phi(X^+)$, and $y=\mathrm e^{c(X)}$, the one-hot reference label. Every retained class has positive probability $\pi_c=\mathbb P(c(X)=c)>0$. The coordinate map, class map, and evaluation distribution remain fixed throughout the test. Transitions may be deterministic or stochastic. We do not require $X$ and $X^+$ to have the same marginal distribution. For the linear predictor $B^\top\phi$, let $R_{\mathbb P}$ denote mean squared prediction error. Let $\mathcal D_\kappa$ be the smallest root-mean-square error over prediction matrices $B\in\mathbb R^{C\times C}$ with spectral norm at most $\kappa$:
\begin{equation}
R_{\mathbb P}(\Phi,B):=\mathbb E\|Z-B^\top\phi\|^2,
\qquad
\mathcal D_\kappa(\Phi):=\inf_{\|B\|_2\leq\kappa}R_{\mathbb P}(\Phi,B)^{1/2},
\qquad \kappa\geq0.
\label{eq:joint_risk_defect}
\end{equation}
If $X^+=F(X)$, we also write $R_{\mathbb P_x}$ to emphasize that the initial-state law determines this risk. The matrix-norm limit controls how strongly a predictor can amplify small differences between coordinate vectors. Average the successor coordinates within each current reference class:
\begin{equation}
z_*^c:=\mathbb E[Z\mid c(X)=c],
\qquad
B_*^\top:=[z_*^1\ \cdots\ z_*^C],
\qquad
\sigma_{\mathrm{res}}^2:=\mathbb E\|Z-z_*^{c(X)}\|^2.
\label{eq:joint_class_means}
\end{equation}
The prediction $B_*^\top y$ assigns the same mean successor vector to every sample in a class. It minimizes mean squared error among all predictors that use only the hard label. Its root-mean-square error is $\sigma_{\mathrm{res}}$. This error can arise because states in a class have different successors, or because the successor is random even at a fixed state. To compare label-based prediction with soft-coordinate prediction, define
\begin{equation}
a_\Phi^2:=\mathbb E\|\phi-y\|^2,
\qquad
A_{\mathrm{fid}}:=\mathbb E(1-\phi_{c(X)})^2.
\label{eq:joint_assignment_losses}
\end{equation}
The quantity $a_\Phi$ is the root-mean-square distance from the one-hot label. The loss $A_{\mathrm{fid}}$ measures how far the weight of the labelled class falls below one. Neither quantity counts misclassifications by the largest-coordinate rule. The other coordinates are nonnegative and sum to $1-\phi_{c(X)}$, so their squared sum is at most $(1-\phi_{c(X)})^2$. Consequently, $a_\Phi^2\leq2A_{\mathrm{fid}}$.
\begin{proposition}[Population lower bound on linear prediction error]
\label{prop:population_obstruction}
For every $\kappa\geq0$, with $(t)_+:=\max(t,0)$,
\begin{equation}
\mathcal D_\kappa(\Phi)
\geq(\sigma_{\mathrm{res}}-\kappa a_\Phi)_+
\geq(\sigma_{\mathrm{res}}-\kappa\sqrt{2A_{\mathrm{fid}}})_+.
\label{eq:population_obstruction}
\end{equation}
If $\kappa\geq\|B_*\|_2$, then also
\begin{equation}
\mathcal D_\kappa(\Phi)\leq\sigma_{\mathrm{res}}+\|B_*\|_2a_\Phi.
\label{eq:population_attainability}
\end{equation}
The choice $\kappa=\sqrt C$ always admits $B_*$.
\end{proposition}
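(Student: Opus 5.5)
The plan is to compare any admissible $B$ with the best label-only predictor via the triangle inequality in $L^2(\mathbb P;\mathbb R^C)$, writing $\|W\|_{L^2}:=(\mathbb E\|W\|^2)^{1/2}$.

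\textbf{Lower bound.} Fix $B$ with $\|B\|_2\le\kappa$ and decompose
\[
Z-B^\top\phi=(Z-B^\top y)-B^\top(\phi-y).
\]
The reverse triangle inequality then gives
\[
R_{\mathbb P}(\Phi,B)^{1/2}\ge\|Z-B^\top y\|_{L^2}-\|B^\top(\phi-y)\|_{L^2}.
\]
For the first term, $B^\top y$ depends on $X$ only through $c(X)$, since it is the $c(X)$th column of $B^\top$. The conditional-mean characterization of $z_*^{c}$ therefore gives
\[
\mathbb E\|Z-B^\top y\|^2=\sigma_{\mathrm{res}}^2+\sum_c\pi_c\|z_*^c-B^\top\mathrm e^c\|^2\ge\sigma_{\mathrm{res}}^2.
\]
This uses the orthogonality $\mathbb E[(Z-z_*^{c(X)})\mid c(X)]=0$, which is valid because $Z$ is bounded (it lies in $\Delta_C$) and each $\pi_c>0$. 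For the second term, $\|B^\top(\phi-y)\|\le\|B\|_2\|\phi-y\|$ pointwise, so its $L^2$ norm is at most $\kappa a_\Phi$. Combining the two bounds gives $R^{1/2}\ge\sigma_{\mathrm{res}}-\kappa a_\Phi$, and $R^{1/2}\ge0$ trivially. Taking the infimum over $B$ yields the first inequality of \eqref{eq:population_obstruction}. The second inequality follows from $a_\Phi^2\le 2A_{\mathrm{fid}}$, shown just before the proposition, together with monotonicity of $t\mapsto(\sigma_{\mathrm{res}}-\kappa t)_+$.

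\textbf{Upper bound.} If $\kappa\ge\|B_*\|_2$, then $B_*$ is admissible, so $\mathcal D_\kappa\le R(\Phi,B_*)^{1/2}$. Write
\[
Z-B_*^\top\phi=(Z-B_*^\top y)+B_*^\top(y-\phi).
\]
Since $B_*^\top y=z_*^{c(X)}$, the first term has $L^2$ norm exactly $\sigma_{\mathrm{res}}$, and the second has $L^2$ norm at most $\|B_*\|_2a_\Phi$. The forward triangle inequality gives \eqref{eq:population_attainability}.

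\textbf{Admissibility of $\kappa=\sqrt C$.} The columns of $B_*^\top$ are the vectors $z_*^c\in\Delta_C$, because $\Delta_C$ is convex and closed. Each column therefore has Euclidean norm at most its $1$-norm, which equals $1$. Hence $\|B_*\|_2=\|B_*^\top\|_2\le\|B_*^\top\|_F\le\sqrt C$.

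The only step needing some care is justifying that the label-only risk is minimized by the class means, that is, the conditional-expectation orthogonality with a predictor measurable with respect to $c(X)$. Since there are finitely many classes, each with positive probability, this reduces to elementary conditioning on events and should not be a real obstacle.
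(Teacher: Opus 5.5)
Your proposal is correct and follows essentially the same argument as the paper. You use conditional centering to show the hard-label risk is at least $\sigma_{\mathrm{res}}^2$, then the reverse triangle inequality with $\|B^\top(\phi-y)\|\le\|B\|_2\|\phi-y\|$, then $B_*$ as a feasible matrix with the forward triangle inequality, and finally $\|B_*\|_2\le\|B_*\|_F\le\sqrt C$ from simplex-valued class means. You also spell out the second inequality in \eqref{eq:population_obstruction} (via $a_\Phi^2\le 2A_{\mathrm{fid}}$ and monotonicity of the positive part), which the paper takes from the text preceding the proposition.
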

\begin{proof}
The residual $Z-B_*^\top y$ has conditional mean zero given the label. It is therefore orthogonal in mean to any function of that label, including $(B_*-B)^\top y$. Expanding the squared error gives, for every $B$,
\begin{equation}
\mathbb E\|Z-B^\top y\|^2
=\sigma_{\mathrm{res}}^2+\mathbb E\|(B_*-B)^\top y\|^2
\geq\sigma_{\mathrm{res}}^2.
\label{eq:hard_label_orthogonality}
\end{equation}
Replacing the input $y$ by $\phi$ changes the prediction by $B^\top(\phi-y)$, whose $L^2$ norm is at most $\|B\|_2a_\Phi$. The reverse triangle inequality therefore gives
$R_{\mathbb P}(\Phi,B)^{1/2}\geq\sigma_{\mathrm{res}}-\|B\|_2a_\Phi$.
Taking the infimum proves the lower bound. For the upper bound, choose the feasible matrix $B_*$ and apply the triangle inequality to
$Z-B_*^\top\phi=(Z-B_*^\top y)+B_*^\top(y-\phi)$.
Every row of $B_*$ belongs to $\Delta_C$, so $\|B_*\|_2\leq\|B_*\|_F\leq\sqrt C$.
\end{proof}

Prediction from hard labels has minimum root-mean-square error $\sigma_{\mathrm{res}}$. Using soft coordinates can reduce this error by at most $\kappa a_\Phi$ when the prediction matrix satisfies $\|B\|_2\leq\kappa$. The lower bound subtracts this upper limit on improvement; it does not assert that improvement occurs. If the bound exceeds $\varepsilon$, every such linear predictor exceeds that tolerance. When $B_*$ also satisfies the norm limit, the upper bound shows that small $a_\Phi$ places the minimum error close to $\sigma_{\mathrm{res}}$. This comparison holds for fixed $\Phi$; changing the coordinates can also change $\sigma_{\mathrm{res}}$.

\subsection{A computable exclusion certificate}
\label{sec:main_certificate}
The population lower bound contains unknown expectations. We estimate it using two statistics: assignment loss and within-class successor variance. The evaluation pairs are independent of the data used to construct or select the representation. We condition on those earlier data and choices, treating the resulting coordinate and class maps as fixed.
\begin{proposition}[Computable prediction-error bound under a spectral-norm limit]
\label{prop:exclusion_certificate}
Fix the reference-class map $c$ and measurable $\Phi:\mathcal X\to\Delta_C$ using construction and selection data. Draw $M$ i.i.d. pairs $(X_i,X_i^+)$ from a fixed joint law, independently of construction and selection, and set $\phi_i=\Phi(X_i)$, $Z_i=\Phi(X_i^+)$, and $c_i=c(X_i)$. Let $n_c:=|\{i:c_i=c\}|$. For $n_c>0$, let $\bar Z_c:=n_c^{-1}\sum_{i:c_i=c}Z_i$ be the mean successor coordinates in class $c$; for an empty class, choose any $\bar Z_c\in\Delta_C$, since it contributes no summands. Define the empirical assignment loss $\widehat f$, within-class successor variance $\widehat s$, and confidence corrections by
\begin{align*}
\widehat f&:=\frac1M\sum_i(1-(\phi_i)_{c_i})^2,
&\widehat s&:=\frac1M\sum_i\|Z_i-\bar Z_{c_i}\|^2,\\
r_\delta&:=\frac{\log(2/\delta)}{M},
&V_\delta&:=(\widehat s-\sqrt{2r_\delta})_+,\\
F_\delta&:=\min\{1,\widehat f+r_\delta+
\sqrt{r_\delta^2+2r_\delta\widehat f}\},
&&0<\delta<1.
\end{align*}
Conditional on construction and selection, with probability at least $1-\delta$, simultaneously for every $\kappa\geq0$,
\begin{equation}
\inf_{\|B\|_2\leq\kappa}
\left(\mathbb E\|\Phi(X^+)-B^\top\Phi(X)\|^2\right)^{1/2}
\geq L_{\kappa,\delta}
:=\bigl(\sqrt{V_\delta}-\kappa\sqrt{2F_\delta}\bigr)_+.
\label{eq:exclusion_certificate}
\end{equation}
\end{proposition}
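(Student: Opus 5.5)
Given Proposition~\ref{prop:population_obstruction}, it suffices to build two one-sided confidence events, each of probability at least $1-\delta/2$:
\[
\text{(i)}\ \ \sigma_{\mathrm{res}}^2\ge V_\delta,
\qquad\qquad
\text{(ii)}\ \ A_{\mathrm{fid}}\le F_\delta .
\]
On their intersection, which has probability at least $1-\delta$ by a union bound, we have $\sigma_{\mathrm{res}}\ge\sqrt{V_\delta}$ and $\sqrt{2A_{\mathrm{fid}}}\le\sqrt{2F_\delta}$. Then \eqref{eq:population_obstruction} gives $\mathcal D_\kappa(\Phi)\ge(\sigma_{\mathrm{res}}-\kappa\sqrt{2A_{\mathrm{fid}}})_+\ge L_{\kappa,\delta}$. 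Neither event involves $\kappa$ or $B$, so the bound holds simultaneously for all $\kappa\ge0$. Throughout, we condition on construction and selection, so $c$ and $\Phi$ are deterministic and the pairs are i.i.d.

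For (ii), set $W_i=(1-(\phi_i)_{c_i})^2\in[0,1]$, with mean $A_{\mathrm{fid}}$ and $\mathrm{Var}(W_i)\le\mathbb E W_i^2\le\mathbb E W_i=A_{\mathrm{fid}}$. I would use a lower-tail Bernstein (or Bennett) inequality for variables bounded above by $1$. With probability at least $1-\delta/2$,
\[
A_{\mathrm{fid}}-\widehat f\le\sqrt{2A_{\mathrm{fid}}r_\delta}+c\,r_\delta .
\]
For variables in $[0,1]$ the self-bounding / lower-tail Bernstein form even gives $A_{\mathrm{fid}}-\widehat f\le\sqrt{2A_{\mathrm{fid}}r_\delta}$ with no linear term, and I would aim for that version. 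Treating this as a quadratic inequality in $u=\sqrt{A_{\mathrm{fid}}}$, namely $u^2-\sqrt{2r_\delta}\,u-\widehat f\le0$, gives
\[
u\le\frac{\sqrt{2r_\delta}+\sqrt{2r_\delta+4\widehat f}}{2}.
\]
Squaring yields $A_{\mathrm{fid}}\le\widehat f+r_\delta+\sqrt{r_\delta^2+2r_\delta\widehat f}$, which is exactly the expression in $F_\delta$. Capping at $1$ is harmless because $A_{\mathrm{fid}}\le1$. The main thing to check is that the chosen concentration inequality has precisely this constant structure; the expression in $F_\delta$ strongly suggests the inversion just described.

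For (i), the obstacle is that $\widehat s$ is not an average of i.i.d. terms, because the class means $\bar Z_c$ are estimated. Two facts resolve this. First, since $\bar Z_c$ minimizes the within-class sum of squares,
\[
\widehat s\le\widehat s_*:=\frac1M\sum_i\|Z_i-z_*^{c_i}\|^2 .
\]
This is the downward bias that Proposition~\ref{prop:class_occupancy_bias} later quantifies, and here it only helps. Second, the terms $\|Z_i-z_*^{c_i}\|^2$ are i.i.d. with mean $\sigma_{\mathrm{res}}^2$. They lie in $[0,2]$, because two points of $\Delta_C$ are at squared distance at most $2$ and $z_*^c\in\Delta_C$. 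Hoeffding's inequality on the upper tail then gives, with probability at least $1-\delta/2$,
\[
\widehat s_*\le\sigma_{\mathrm{res}}^2+2\sqrt{\log(2/\delta)/(2M)}=\sigma_{\mathrm{res}}^2+\sqrt{2r_\delta}.
\]
Hence $\sigma_{\mathrm{res}}^2\ge\widehat s-\sqrt{2r_\delta}$. Since $\sigma_{\mathrm{res}}^2\ge0$, this gives $\sigma_{\mathrm{res}}^2\ge V_\delta$. Empty classes contribute no summands and need no separate treatment.

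The step I expect to need the most care is the constant in (ii). If the lower-tail inequality without a linear term is not available in the needed form, I would fall back on an empirical-Bernstein or Bennett bound and accept slightly different constants. The remaining steps are routine.
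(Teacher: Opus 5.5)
Your proposal is correct. Part (ii) is the paper's argument. The paper also uses a lower-tail bound with no linear term, derived from the convexity estimate $\mathbb E e^{-t\ell}\le1+f(e^{-t}-1)$ for a $[0,1]$-valued loss of mean $f$, and then inverts $f-\widehat f\le\sqrt{2fr_\delta}$ exactly as you do. Your fallback is therefore unnecessary. Combining that estimate with $e^{-t}-1\le-t+t^2/2$ gives the lower tail $\exp(-M\epsilon^2/(2f))$ directly, and this is precisely what produces $F_\delta$.

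Part (i) takes a genuinely different route. The paper writes $\widehat s$ as a minimum of the empirical loss over choices of class means. It uses the pointwise comparison with the population means only in expectation, to get $\mathbb E\widehat s\le\sigma_{\mathrm{res}}^2$. It then applies McDiarmid's inequality to $\widehat s$ itself, after checking that replacing one pair changes the minimized average by at most $2/M$, even with estimated means and empty classes.

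You instead keep the comparison pointwise, $\widehat s\le\widehat s_*$. The statistic $\widehat s_*$ is an i.i.d. average of $[0,2]$-valued terms with mean $\sigma_{\mathrm{res}}^2$, so ordinary Hoeffding applies. The constants agree: McDiarmid with $\sum_i(2/M)^2=4/M$ and Hoeffding with range $2$ both give $\exp(-Mt^2/2)$, hence the deviation $\sqrt{2r_\delta}$.

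Your argument is more elementary. It avoids a bounded-differences analysis of a statistic built from data-dependent means, and empty classes need no comment. It also carries over directly to the trajectory setting of Appendix~H, because the weighted empirical means again minimize the weighted loss. The paper's argument instead shows that $\widehat s$ concentrates around its own downward-biased mean. That fits naturally with the occupancy-bias identity of Proposition~\ref{prop:class_occupancy_bias} and the subsequent consistency analysis. For the certificate itself, the two arguments are equivalent.
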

Here $V_\delta$ bounds within-class successor variance from below, and $F_\delta$ bounds assignment loss from above. Thus $\sqrt{V_\delta}$ bounds the root-mean-square error of hard-label prediction from below, while $\kappa\sqrt{2F_\delta}$ bounds the possible improvement from soft coordinates from above. Their difference, truncated at zero, gives the certificate.
\begin{proof}
All expectations and probabilities below condition on the fixed construction. The empirical class means minimize the within-class sum of squares, so
\[
\widehat s=\min_{a^1,\ldots,a^C\in\Delta_C}
\frac1M\sum_i\|Z_i-a^{c_i}\|^2,
\qquad \mathbb E\widehat s\leq\sigma_{\mathrm{res}}^2.
\]
The sample class means minimize the empirical loss, so their loss is no larger than the loss at the population class means. Taking expectations proves the second inequality. For any fixed set of class means, each loss lies in $[0,2]$. Replacing one evaluation pair changes the average by at most $2/M$; taking the minimum preserves this bound. McDiarmid's inequality~\cite{mcdiarmid1989} therefore gives
$\mathbb P(\widehat s>\mathbb E\widehat s+\sqrt{2r_\delta})\leq\delta/2$, hence $\sigma_{\mathrm{res}}^2\geq V_\delta$ on the complementary event. The bounded-differences argument remains valid when class means are estimated from these same evaluation data and when some classes are empty. For a $[0,1]$ loss $\ell$ of mean $f$, the exponential-moment bound
$\mathbb E e^{-t\ell}\leq1+f(e^{-t}-1)$ gives the multiplicative lower-tail inequality~\cite{hoeffding1963}
$\mathbb P(\widehat f<f-\sqrt{2fr_\delta})\leq\delta/2$.
Solving $f-\widehat f\leq\sqrt{2fr_\delta}$ for $f=A_{\mathrm{fid}}$ yields $f\leq F_\delta$; the claim is immediate if $f=0$ or $f<\widehat f$.
A union bound, $a_\Phi^2\leq2A_{\mathrm{fid}}$, and Proposition~\ref{prop:population_obstruction} prove~\eqref{eq:exclusion_certificate}. The confidence event does not depend on $B$ or $\kappa$.
\end{proof}
If $L_{\kappa,\delta}>\varepsilon$, the certificate establishes, at confidence level $1-\delta$, that every prediction matrix with spectral norm at most $\kappa$ has root-mean-square error above $\varepsilon$. A zero bound is inconclusive. The guarantee holds simultaneously for all $\kappa$, so inspecting several norm limits requires no additional confidence correction. For finitely many representations fixed before evaluation, a union bound gives a joint guarantee by assigning a failure probability to each representation. The stated result does not cover choosing a new representation adaptively from these evaluation data.

\subsection{When additional evaluation data make the certificate informative}
\label{sec:certificate_informativeness}
At a specified sample size, the confidence guarantee limits the probability of falsely declaring a prediction-error tolerance unattainable. We now ask when additional evaluation pairs eventually certify that no permitted linear predictor meets a tolerance below the population lower bound. We use the first $M$ pairs of an infinite i.i.d. sequence, keeping the representation, class count, and evaluation distribution fixed. The subscript $M$ marks quantities that depend on sample size.

\begin{proposition}[Class occupancy and downward bias of empirical successor variance]
\label{prop:class_occupancy_bias}
Under the assumptions of Proposition~\ref{prop:exclusion_certificate}, let
$v_c:=\mathbb E[\|Z-z_*^c\|^2\mid c(X)=c]$.
Conditional on the fixed construction,
\begin{equation}
\mathbb E\widehat s_M
=\sigma_{\mathrm{res}}^2-
\frac1M\sum_{c=1}^C v_c\bigl[1-(1-\pi_c)^M\bigr],
\qquad
0\leq\sigma_{\mathrm{res}}^2-\mathbb E\widehat s_M\leq\frac{C}{M}.
\label{eq:class_occupancy_bias}
\end{equation}
\end{proposition}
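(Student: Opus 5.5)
Write $\widehat s_M=\frac1M\sum_{c=1}^C S_c$, where $S_c:=\sum_{i:c_i=c}\|Z_i-\bar Z_c\|^2$ is the within-class sum of squares for class $c$, and $S_c=0$ when $n_c=0$. The approach is to compute $\mathbb E S_c$ by first conditioning on the label vector $(c_1,\ldots,c_M)$ and then averaging over the multinomial occupancy counts. All expectations condition on the fixed construction, exactly as in Proposition~\ref{prop:exclusion_certificate}.

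Condition on the labels and fix a class with $n_c=k\geq1$. By i.i.d. sampling, the successors $Z_i$ with $c_i=c$ are i.i.d. draws from the conditional law of $Z$ given $c(X)=c$. This conditional law has mean $z_*^c$ and total variance $v_c=\mathbb E[\|Z-z_*^c\|^2\mid c(X)=c]$, which is finite since $Z\in\Delta_C$. The classical sample-variance identity gives
\[
\textstyle\sum_{i:c_i=c}\|Z_i-\bar Z_c\|^2=\sum_{i:c_i=c}\|Z_i-z_*^c\|^2-k\|\bar Z_c-z_*^c\|^2 .
\]
Taking conditional expectations therefore gives $\mathbb E[S_c\mid n_c=k]=kv_c-v_c=(k-1)v_c$. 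For $k=0$ we have $S_c=0$, so in all cases $\mathbb E[S_c\mid n_c]=v_c(n_c-\mathbbm 1_{\{n_c\geq1\}})$.

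Next average over the occupancy count. Since $n_c\sim\mathrm{Binomial}(M,\pi_c)$, we have $\mathbb E n_c=M\pi_c$ and $\mathbb P(n_c\geq1)=1-(1-\pi_c)^M$. Summing over classes and dividing by $M$ gives
\[
\mathbb E\widehat s_M=\sum_c\pi_cv_c-\frac1M\sum_cv_c[1-(1-\pi_c)^M].
\]
The law of total variance, applied with the definitions in \eqref{eq:joint_class_means}, gives $\sum_c\pi_cv_c=\sigma_{\mathrm{res}}^2$, which proves the identity in \eqref{eq:class_occupancy_bias}.

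For the two-sided bound, each term $v_c[1-(1-\pi_c)^M]$ is nonnegative, so the bias is at least $0$. For the upper bound, note that $Z$ and $z_*^c$ both lie in $\Delta_C$, whose diameter is $\sqrt2$. The bound $v_c\leq2$ is too weak here. The sharper fact needed is that the variance of a $\Delta_C$-valued random vector is $\mathbb E\|Z\|^2-\|\mathbb EZ\|^2\leq1-\|\mathbb EZ\|^2\leq1$, because $\|Z\|^2\leq\|Z\|_1^2=1$. Hence $v_c\leq1$, and since $1-(1-\pi_c)^M\leq1$, the bias is at most $C/M$.

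The main point requiring care is the conditioning step. One must justify that, given the label vector, the within-class successors are i.i.d. with the class-conditional law, and handle the empty-class convention consistently. Independence of the pairs across $i$ makes this a routine disintegration. The only genuinely non-obvious ingredient is the variance bound $v_c\leq1$ on the simplex, which replaces the cruder diameter bound and is what yields exactly $C/M$.
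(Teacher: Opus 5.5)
Your proposal is correct and follows the paper's proof essentially step for step. You condition on occupancy to get $\mathbb E[S_c\mid n_c]=v_c(n_c-\mathbbm 1_{\{n_c\geq1\}})$, average over the binomial count, use $\sigma_{\mathrm{res}}^2=\sum_c\pi_cv_c$, and bound $v_c=\mathbb E[\|Z\|^2\mid c]-\|z_*^c\|^2\leq1$ using $Z\in\Delta_C$; the identity $\sigma_{\mathrm{res}}^2=\sum_c\pi_cv_c$ follows from the tower property alone, so invoking the law of total variance is unnecessary.
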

\begin{proof}
Given $n_c=n>0$, the expected within-class sum of squared deviations from the sample mean is $(n-1)v_c$; for $n=0$ it is zero. Since $n_c$ is binomial with parameters $M$ and $\pi_c$,
$\mathbb E[n_c-\mathbbm1_{\{n_c>0\}}]=M\pi_c-[1-(1-\pi_c)^M]$.
Sum over classes and divide by $M$, using $\sigma_{\mathrm{res}}^2=\sum_c\pi_cv_c$.
Finally, $v_c=\mathbb E[\|Z\|^2\mid c]-\|z_*^c\|^2\leq1$, because $Z\in\Delta_C$.
\end{proof}

The number of evaluation samples in each class affects the certificate, even though $C$ does not appear explicitly in the confidence correction. A class containing one evaluation pair has its sample mean equal to that pair's successor vector, so its empirical variance is zero. If every observed class contains only one pair, then $\widehat s_M=0$ and the certificate cannot rule out any nonnegative tolerance. Equation~\eqref{eq:class_occupancy_bias} quantifies this downward bias from estimating class means on the same sample.

\begin{corollary}[Consistent detection of a strict population obstruction]
\label{cor:certificate_consistency}
Under the preceding fixed-construction and i.i.d.-pair assumptions, for fixed finite $C$, fixed $\kappa\geq0$, and fixed $0<\delta<1$, the certificate satisfies
\begin{equation}
L_{\kappa,\delta,M}\longrightarrow
\ell_\kappa(\Phi):=
\bigl(\sigma_{\mathrm{res}}-\kappa\sqrt{2A_{\mathrm{fid}}}\bigr)_+
\quad\text{almost surely as }M\to\infty.
\label{eq:certificate_consistency}
\end{equation}
If $\ell_\kappa(\Phi)>\varepsilon\geq0$, then $L_{\kappa,\delta,M}>\varepsilon$ for every sufficiently large $M$, almost surely. If $\ell_\kappa(\Phi)<\varepsilon$, then $L_{\kappa,\delta,M}<\varepsilon$ for every sufficiently large $M$, almost surely. When $\ell_\kappa(\Phi)=\varepsilon$, convergence alone does not determine on which side of the tolerance the finite-sample bound lies.
\end{corollary}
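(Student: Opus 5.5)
The plan is to show that each ingredient of $L_{\kappa,\delta,M}$ converges almost surely to its population counterpart, and then to use continuity of the map $(s,f)\mapsto(\sqrt{(s)_+}-\kappa\sqrt{2f})_+$. Recall that $r_\delta=\log(2/\delta)/M\to0$ deterministically, since $\delta$ is fixed.

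\emph{Step 1: the assignment loss.} The losses $(1-(\phi_i)_{c_i})^2\in[0,1]$ are i.i.d. with mean $A_{\mathrm{fid}}$. The strong law of large numbers therefore gives $\widehat f_M\to A_{\mathrm{fid}}$ almost surely. Because $r_\delta\to0$, we get $F_{\delta,M}=\min\{1,\widehat f_M+r_\delta+\sqrt{r_\delta^2+2r_\delta\widehat f_M}\}\to\min\{1,A_{\mathrm{fid}}\}=A_{\mathrm{fid}}$ almost surely.

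\emph{Step 2: the empirical successor variance.} This is the main obstacle, because the class means are estimated from the same sample. I would avoid relying only on the expectation bound of Proposition~\ref{prop:class_occupancy_bias} and decompose directly. Write
\[
\widehat s_M=\frac1M\sum_i\|Z_i-z_*^{c_i}\|^2-\sum_{c:\,n_c>0}\frac{n_c}{M}\|\bar Z_c-z_*^c\|^2,
\]
which is the usual within-class identity. The first term is an i.i.d. average of bounded variables, so it converges to $\sigma_{\mathrm{res}}^2$ almost surely. For the second term, fix a class $c$. Since $\pi_c>0$, the strong law gives $n_c/M\to\pi_c$, so $n_c\to\infty$ almost surely. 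Given that, $\bar Z_c\to z_*^c$ almost surely by the strong law applied to the i.i.d. products $Z_i\mathbbm1_{\{c_i=c\}}$ and $\mathbbm1_{\{c_i=c\}}$, together with $\bar Z_c=\bigl(\sum_iZ_i\mathbbm1_{\{c_i=c\}}/M\bigr)/(n_c/M)$. Because $C$ is finite, the correction term vanishes almost surely. Hence $\widehat s_M\to\sigma_{\mathrm{res}}^2$, and $V_{\delta,M}=(\widehat s_M-\sqrt{2r_\delta})_+\to\sigma_{\mathrm{res}}^2$ almost surely.

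\emph{Step 3: conclusion.} The map $(v,f)\mapsto(\sqrt v-\kappa\sqrt{2f})_+$ is continuous on $\mathbb R_+^2$. Combining Steps 1 and 2 gives $L_{\kappa,\delta,M}\to(\sigma_{\mathrm{res}}-\kappa\sqrt{2A_{\mathrm{fid}}})_+=\ell_\kappa(\Phi)$ almost surely. The two threshold claims are the standard consequence of almost-sure convergence to a limit strictly on one side of $\varepsilon$. On the probability-one convergence event, take $\eta=|\ell_\kappa(\Phi)-\varepsilon|>0$. Then $|L_{\kappa,\delta,M}-\ell_\kappa(\Phi)|<\eta$ for all large $M$, which places $L_{\kappa,\delta,M}$ on the same side of $\varepsilon$ as $\ell_\kappa(\Phi)$. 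When $\ell_\kappa(\Phi)=\varepsilon$, no such $\eta$ exists. The sequence may approach $\varepsilon$ from either side, or oscillate around it, so convergence alone decides nothing; no further argument is needed for this case.
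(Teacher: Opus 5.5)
Your proof is correct and follows the paper's argument: the strong law for $\widehat f_M$, $n_c/M$ and $\bar Z_c$, the deterministic decay $r_{\delta,M}\to0$, and continuity of the square-root and positive-part maps, followed by the strict-gap argument for the two threshold claims. The only difference is cosmetic. You write $\widehat s_M$ as the sum of squares about the population means $z_*^c$ minus the correction $\sum_c (n_c/M)\|\bar Z_c-z_*^c\|^2$, so its limit is $\sigma_{\mathrm{res}}^2$ directly. The paper instead uses the uncentered form $M^{-1}\sum_i\|Z_i\|^2-\sum_c (n_c/M)\|\bar Z_c\|^2$ and identifies its limit $\mathbb E\|Z\|^2-\sum_c\pi_c\|z_*^c\|^2$ with $\sigma_{\mathrm{res}}^2$.
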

\begin{proof}
The strong law gives $\widehat f_M\to A_{\mathrm{fid}}$, $n_c/M\to\pi_c$, and $\bar Z_c\to z_*^c$ for each class of positive probability. Also,
\begin{equation}
\widehat s_M
=\frac1M\sum_{i=1}^M\|Z_i\|^2
-\sum_{c:n_c>0}\frac{n_c}{M}\|\bar Z_c\|^2
\longrightarrow
\mathbb E\|Z\|^2-\sum_c\pi_c\|z_*^c\|^2
=\sigma_{\mathrm{res}}^2.
\label{eq:sample_variance_consistency}
\end{equation}
Because $r_{\delta,M}=\log(2/\delta)/M\to0$, the confidence corrections satisfy
$F_{\delta,M}\to A_{\mathrm{fid}}$ and $V_{\delta,M}\to\sigma_{\mathrm{res}}^2$.
Continuity of the square root and positive-part functions proves the limit. A strict gap from $\varepsilon$ then proves the eventual inequalities.
\end{proof}
As the sample grows, the certificate approaches $\ell_\kappa(\Phi)$, which can be smaller than the minimum prediction error $\mathcal D_\kappa(\Phi)$. More data reduce sampling uncertainty but cannot close this gap between the population lower bound and the true minimum. This convergence requires fixed $C$ and fixed coordinates throughout evaluation. It does not cover a representation that changes with sample size. Also, the confidence guarantee in Proposition~\ref{prop:exclusion_certificate} applies at a prespecified sample size. Stopping as soon as the certificate exceeds the tolerance requires a confidence guarantee that covers all inspected sample sizes. For example, allocating failure probabilities $\delta_M=\delta_0/[M(M+1)]$, where $\delta_0\in(0,1)$, gives total failure probability at most $\delta_0$ by a union bound. Since $\log(2/\delta_M)/M\to0$, the same consistency proof applies. Appendix~H gives a certificate for i.i.d. trajectories, allowing dependent transitions within each trajectory. It gives every trajectory equal weight and uses the number of trajectories in the confidence correction. The class-occupancy identity above assumes i.i.d. pairs, so it cannot be applied to dependent transitions within a trajectory.
\subsection{Comparison with uniform regression-risk guarantees}
\label{sec:certificate_comparison}
A uniform regression-risk bound limits the difference between expected and sample-averaged error simultaneously for every predictor in a specified class. Such bounds are available for specified classes of RKHS Koopman operators~\cite{kostic2022regression}. They can also yield a lower bound on the best attainable error. To see this, suppose that, for the same representation, evaluation distribution, and matrix class, a valid confidence event gives
\begin{equation}
\sup_{\|B\|_2\leq\kappa}
\bigl|R_{\mathbb P}(\Phi,B)-\widehat R_M(\Phi,B)\bigr|
\leq\Gamma_{M,\delta},
\qquad
\widehat R_M(\Phi,B):=\frac1M\sum_i\|Z_i-B^\top\phi_i\|^2.
\label{eq:uniform_regression_comparison}
\end{equation}
On that event, taking the infimum yields
\begin{equation}
\mathcal D_\kappa(\Phi)
\geq\left(\inf_{\|B\|_2\leq\kappa}\widehat R_M(\Phi,B)
-\Gamma_{M,\delta}\right)_+^{1/2}.
\label{eq:optimized_empirical_lower_bound}
\end{equation}
This derivation states what follows if an appropriate uniform guarantee is available. Applying a particular theorem also requires checking its assumptions. Theorem~3 of \textcite{kostic2022regression} uses pairs drawn from an invariant state distribution and its transition kernel; this is more restrictive than our arbitrary fixed joint law. For example, the operator class in a theorem using a Hilbert--Schmidt norm or rank constraint must contain every prediction matrix being tested. For real $C\times C$ matrices, $\|B\|_F\leq\sqrt C\|B\|_2$ converts the spectral-norm limit to a valid, possibly conservative Frobenius-norm limit. Restricting rank below $C$ generally excludes some tested matrices; that restriction needs a separate justification based on the effective dimension. Both~\eqref{eq:optimized_empirical_lower_bound} and Proposition~\ref{prop:exclusion_certificate} can certify that no prediction matrix within the norm limit meets a chosen error tolerance. Equation~\eqref{eq:optimized_empirical_lower_bound} requires the minimum empirical risk, or a certified lower bound on that minimum. An arbitrary fitted matrix supplies an upper bound on the minimum, so its loss cannot support this exclusion argument. Our certificate uses two statistics: assignment loss and within-class successor variance. Together they bound the error of prediction from hard labels and the possible improvement from soft coordinates. Computing them costs $O(MC)$ operations once coordinate vectors are available, without fitting a prediction matrix.
\begin{table}[tbp]
\centering
\caption{Two lower bounds on the minimum prediction error for fixed coordinates. Numerical comparisons require both confidence guarantees to cover the same evaluation distribution and prediction-matrix class.}
\label{tab:certificate_comparison}\label{tab:positioning}
\small
\begin{tabularx}{\linewidth}{@{}p{0.20\linewidth}YY@{}}
\toprule
 & Uniform regression-risk route & Reference-label certificate \\
\midrule
Empirical quantity & Minimum prediction risk over the matrix class, or a certified lower bound on it. & Assignment loss and within-class successor variance. \\
Role of labels & No reference-label map is needed. & Groups successor vectors by their current reference label. \\
Use of soft coordinates & Finds their best linear predictor within the matrix class. & Bounds the possible reduction in error relative to hard-label prediction. \\
Main limitation & Needs a uniform confidence correction and a valid lower bound on minimum empirical risk. & May be zero despite positive minimum error, especially with few samples per class or large deviations from one-hot labels. \\
\bottomrule
\end{tabularx}
\end{table}
This comparison does not establish that either approach always gives a tighter bound or runs faster. Our certificate detects error caused by differences between successor coordinates within a reference class. When current soft coordinates lie close to the same one-hot label, the prediction matrix's norm limit restricts how strongly it can amplify their differences. For KAHM, reconstruction-score margins bound the distance from the labels. Section~\ref{sec:attainable_closure} gives a four-state example where this limitation occurs in the KAHM construction itself.
\subsection{Interpreting and using the certificate} \label{sec:certificate_use}
Fix the coordinate map, evaluation distribution, and an acceptable root-mean-square prediction error $\varepsilon\geq0$. Specify a spectral-norm limit for the prediction matrix, or use the simultaneous guarantee in Proposition~\ref{prop:exclusion_certificate} to inspect several limits. The certificate exceeds $\varepsilon$ exactly when
\begin{equation}
\widehat s>\sqrt{2r_\delta}
+\bigl(\varepsilon+\kappa\sqrt{2F_\delta}\bigr)^2.
\label{eq:observable_exclusion_condition}
\end{equation}
When this inequality holds, the lower confidence bound exceeds the requested tolerance for every permitted linear prediction matrix. Refitting the matrix or changing the fitting algorithm cannot reduce population prediction error below this bound. Meeting the tolerance would require changing the representation, norm limit, or evaluation target; alternatively, one can accept a larger tolerance. The result does not show which change will work. For stochastic transitions, successor coordinates can vary even at a fixed current state. This variation contributes error even when their conditional mean is predicted exactly. If the certificate is zero, the test cannot decide whether the tolerance is attainable. Proposition~\ref{prop:evaluable_risk} gives an upper bound for a fitted matrix. If that matrix satisfies the same norm limit and its upper bound is at most the tolerance, it establishes that the requested accuracy is attainable on the confidence event. To use both confidence bounds jointly, their failure probabilities must sum to at most the allowed total. Selecting a changed representation from evaluation results also requires fresh evaluation data or a guarantee covering that selection.
\section{Reconstruction and Operator Background}\label{sec:mathematical_background}
We now describe how to construct the soft coordinates and interpret their evolution. KAHM reconstructs a state from each class's reference data, and reconstruction scores determine the coordinate weights. Koopman operators describe how scalar functions of these coordinates evolve with the state. These scalar functions may be complex-valued in the spectral analysis, although the coordinates themselves are real. Appendix~A gives the KAHM construction details.
\subsection{Projection and function-space conventions}
An overline denotes complex conjugation. We use $\langle u,v\rangle=u^\top\overline v$, linear in its first argument, and $\langle A,B\rangle_F=\operatorname{tr}(A^\top B)$ for real matrices. Subscripts $i,:$, $:,j$, and $i,j$ select a row, column, and entry. Inequalities between vectors or matrices are component-wise; $W\succ0$ means that $W$ is symmetric positive definite. The symbols $F^m$, $\otimes$, and $\oplus$ denote repeated composition, a product $\sigma$-algebra, and a direct sum. The Euclidean projection onto $\Delta_C$ is
\begin{equation}
\label{eq:simplex_projection_operator}
\Proj_{\Delta_C}(u):=\mathop{\argmin}_{p\in\Delta_C}\|p-u\|^{2}.
\end{equation}
The projection is unique because $\Delta_C$ is nonempty, closed, and convex~\cite{bauschke2017}. We write $\operatorname{aff}(S)$ and $\spanop S$ for an affine hull and a linear span. For a finite-dimensional subspace $V$, $V^\perp$ is its orthogonal complement and $P_V$ is its orthogonal projection, characterized by $P_Vw\in V$ and $w-P_Vw\in V^\perp$~\cite{conway1990,bauschke2017}. A reproducing kernel Hilbert space (RKHS) $\mathcal H_k(\mathcal X)$ is a Hilbert space of functions with $k(\cdot,x)\in\mathcal H_k$ and $\langle f,k(\cdot,x)\rangle_{\mathcal H_k}=f(x)$ for every $x$ and $f$~\cite{aronszajn1950,paulsen2016}. The complex-valued version uses the same convention that the inner product is linear in its first argument.
\subsection{Kernel Affine Hull Machine (KAHM)}
Let $\mathbf{X}=\left[\begin{IEEEeqnarraybox*}[][c]{,c/c/c,}x^1&\cdots&x^N\end{IEEEeqnarraybox*}\right]^{\top}\in\mathbb{R}^{N\times n}$ be a reference dataset. Appendix~A constructs a linear encoding $\mathbf P_{\mathbf X}$ and kernel-regression functions $h_{\mathbf X}^i$, one for each reference sample. Their outputs determine reconstruction weights. The reconstruction is defined on
\begin{equation}
\Omega_{\mathbf X}:=\left\{x\in\mathbb R^n:
\sum_{i=1}^{N}h_{\mathbf X}^{i}(\mathbf P_{\mathbf X}x)\neq0\right\}.
\end{equation}
A KAHM~\cite{kumar2024kahm,kumar2025collaborative,kumar2025operator} reconstructs each query $x\in\Omega_{\mathbf X}$ by
\begin{IEEEeqnarray}{rCl}
\label{eq_220420251843}
\mathcal{A}_{\mathbf{X}}(x)
&:=&
\frac{h_{\mathbf{X}}^1(\mathbf{P}_{\mathbf{X}}x)}{\sum_{i=1}^{N}h_{\mathbf{X}}^i(\mathbf{P}_{\mathbf{X}}x)}x^1+\cdots+
\frac{h_{\mathbf{X}}^N(\mathbf{P}_{\mathbf{X}}x)}{\sum_{i=1}^{N}h_{\mathbf{X}}^i(\mathbf{P}_{\mathbf{X}}x)}x^N.
\end{IEEEeqnarray}
Regularized kernel regression fits each $h_{\mathbf X}^i$ to target $1$ at encoded sample $i$ and target $0$ at the other samples. Normalization makes the reconstruction coefficients sum to one. Thus $\mathcal A_{\mathbf X}(x)$ lies in the reference samples' affine hull, although the coefficients may be negative.
\subsection{KAHM Reconstruction-Dissimilarity Score}
The score below measures how far a state is from its KAHM reconstruction. It combines a bounded transform of Euclidean distance with a normalized angular distance.
\begin{definition}[KAHM Reconstruction-Dissimilarity Score~\cite{kumar2025operator}]\label{def_space_folding_measure}
For a reference dataset $\mathbf{X}$, define the admissible domain
\begin{equation}
D_{\mathbf{X}}
:=\{x\in\Omega_{\mathbf X}:\|x\|>0\ \text{and}\ \|\mathcal{A}_{\mathbf{X}}(x)\|>0\}.
\end{equation}
For $x\in D_{\mathbf{X}}$, define the reconstruction-dissimilarity score $\mathcal{T}_{\mathbf{X}}:D_{\mathbf{X}}\to[0,1]$ by
\begin{IEEEeqnarray}{rCl}
\label{eq_170220251438}
\mathcal{T}_{\mathbf{X}}(x)
&:=&
\sqrt{\frac{1}{2}\left(\left|\mathcal{T}_{\mathbf{X}}^{\mathrm{Euc}}(x)\right|^{2}+\left|\mathcal{T}_{\mathbf{X}}^{\mathrm{Cos}}(x)\right|^{2}\right)},
\end{IEEEeqnarray}
where
\begin{IEEEeqnarray}{rCl}
\mathcal{T}_{\mathbf{X}}^{\mathrm{Euc}}(x)
&:=&1-\exp\!\left(-\|x-\mathcal{A}_{\mathbf{X}}(x)\|\right),\\
\mathcal{T}_{\mathbf{X}}^{\mathrm{Cos}}(x)
&:=&\frac{1}{\pi}\arccos\!\left(
\frac{\mathcal{A}_{\mathbf{X}}(x)^{\top}x}
{\|\mathcal{A}_{\mathbf{X}}(x)\|\,\|x\|}
\right).
\end{IEEEeqnarray}
\end{definition}
Both components lie in $[0,1]$ on $D_{\mathbf{X}}$, and therefore so does $\mathcal{T}_{\mathbf{X}}$. Prior KAHM work calls this quantity the ``space-folding measure'' \cite{kumar2025operator}; here we use \emph{reconstruction score} because the quantity is a bounded dissimilarity score rather than a measure in the measure-theoretic sense. A smaller combined score indicates better reconstruction according to this distance-and-angle criterion; it need not mean that both components decrease. The score depends on how the state is represented numerically. Rescaling changes the Euclidean component. Rescaling coordinates by different factors or shifting the origin generally also changes the angular component. The resulting soft coordinates and their predictability therefore depend on the specified state representation. The experiments use one fixed coordinate convention within each benchmark and do not compare score magnitudes across systems. The analysis requires $x\in D_{\mathbf X}$. Section~\ref{sec:exp_protocol} specifies the numerical rules used when an experimental query falls outside this domain.
\subsection{Koopman Operators and Adjoints in RKHSs}
Let \(F:\mathcal{X}\rightarrow\mathcal{X}\) be a measurable discrete-time state map, and let
\(\mathcal{H}_{\mathcal{K}}(\mathcal{X})\) denote the RKHS associated with the
positive semidefinite kernel
\(\mathcal{K}:\mathcal{X}\times\mathcal{X}\rightarrow\mathbb{R}\).
The Koopman operator associated with \(F\) acts on scalar-valued observables by
composition with the state map~\cite{koopman1931,budisic2012,das2020}. More
precisely, for an observable \(g\in\mathcal{H}_{\mathcal{K}}(\mathcal{X})\)
such that \(g\circ F\in\mathcal{H}_{\mathcal{K}}(\mathcal{X})\), its Koopman
image is defined by
\begin{equation}
    (K_F g)(x) := g(F(x)),
    \qquad x\in\mathcal{X}. \label{eq:koopman_operator_background}
\end{equation}
The domain therefore contains exactly the RKHS functions whose composition with $F$ remains in the RKHS:
\begin{equation}
    \mathcal{D}(K_F)
    :=
    \left\{
        g\in\mathcal{H}_{\mathcal{K}}(\mathcal{X})
        \;:\;
        g\circ F\in\mathcal{H}_{\mathcal{K}}(\mathcal{X})
    \right\},
\end{equation}
If composition with \(F\) preserves the RKHS, namely,
\begin{equation}
    g\in\mathcal{H}_{\mathcal{K}}(\mathcal{X})
    \quad\Longrightarrow\quad
    g\circ F\in\mathcal{H}_{\mathcal{K}}(\mathcal{X}),
\end{equation}
then evolving any function in the RKHS gives another function in the same space. This property is called Koopman invariance, and it gives
\(\mathcal{D}(K_F)=\mathcal{H}_{\mathcal{K}}(\mathcal{X})\). Whenever we use an adjoint, we also assume that the Koopman operator is bounded on the whole RKHS \(\mathcal{H}_{\mathcal{K}}(\mathcal{X})\). Under this assumption,
\(K_F\) admits a unique bounded adjoint
\(K_F^{*}:\mathcal{H}_{\mathcal{K}}(\mathcal{X})
\rightarrow\mathcal{H}_{\mathcal{K}}(\mathcal{X})\), characterized by
the Hilbert-space adjoint relation~\cite{conway1990}
\begin{equation}
    \left\langle K_F g,h\right\rangle_{\mathcal{H}_{\mathcal{K}}(\mathcal{X})}
    =
    \left\langle g,K_F^{*}h\right\rangle_{\mathcal{H}_{\mathcal{K}}(\mathcal{X})},
    \qquad
    g,h\in\mathcal{H}_{\mathcal{K}}(\mathcal{X}). \label{eq:koopman_adjoint_definition_background}
\end{equation}
\begin{proposition}\label{proposition_300620260944}
Assume $K_F$ is bounded on $\mathcal{H}_{\mathcal{K}}(\mathcal{X})$. Then, for every $x\in\mathcal{X}$,
\begin{equation}
K_F^*\mathcal{K}(\cdot,x)=\mathcal{K}(\cdot,F(x)).
\label{eq:adjoint_kernel_section_identity_background}
\end{equation}
\end{proposition}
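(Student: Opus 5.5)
The plan is to identify $K_F^*\mathcal K(\cdot,x)$ through the reproducing property. Since $K_F$ is assumed bounded on all of $\mathcal H_{\mathcal K}(\mathcal X)$, the adjoint $K_F^*$ exists and is characterized by \eqref{eq:koopman_adjoint_definition_background}. An element of a Hilbert space is determined by its inner products against every element. So it suffices to show that, for every $g\in\mathcal H_{\mathcal K}(\mathcal X)$,
\begin{equation*}
\langle g,K_F^*\mathcal K(\cdot,x)\rangle_{\mathcal H_{\mathcal K}}=\langle g,\mathcal K(\cdot,F(x))\rangle_{\mathcal H_{\mathcal K}}.
\end{equation*}
The second function lies in the RKHS because $F(x)\in\mathcal X$ and every kernel section belongs to $\mathcal H_{\mathcal K}$.

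Fix $x\in\mathcal X$ and $g\in\mathcal H_{\mathcal K}$, and argue in three steps.

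\begin{enumerate}
\item Apply the adjoint relation with $h=\mathcal K(\cdot,x)$:
\begin{equation*}
\langle g,K_F^*\mathcal K(\cdot,x)\rangle=\langle K_Fg,\mathcal K(\cdot,x)\rangle.
\end{equation*}
\item Since $K_Fg=g\circ F\in\mathcal H_{\mathcal K}$ by boundedness on the whole space, the reproducing property evaluates this as $(K_Fg)(x)=g(F(x))$.
\item Use the reproducing property once more, now at the point $F(x)$:
\begin{equation*}
g(F(x))=\langle g,\mathcal K(\cdot,F(x))\rangle.
\end{equation*}
\end{enumerate}

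Combining the three steps gives $\langle g,K_F^*\mathcal K(\cdot,x)-\mathcal K(\cdot,F(x))\rangle=0$ for all $g$. Choosing $g$ equal to this difference forces the difference to be zero, which proves \eqref{eq:adjoint_kernel_section_identity_background}.

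The only point needing care is the convention for inner products. In the complex-valued setting the inner product is linear in its first argument, so the reproducing property reads $f(x)=\langle f,\mathcal K(\cdot,x)\rangle$ with the function in the first slot. The argument above always places $g$ or $K_Fg$ in the first slot, so no conjugation appears. Because $\mathcal K$ is real, the kernel sections are the same in the real and complex versions of the space. Apart from this convention check there is no real obstacle: the hypothesis that $K_F$ is bounded on the entire space is exactly what guarantees that the adjoint exists and that $K_Fg$ can be evaluated through the kernel.
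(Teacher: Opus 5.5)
Your proposal is correct and follows the paper's own proof (Appendix~B) essentially line for line. You chain the adjoint relation with the reproducing property at $x$ and at $F(x)$, then conclude that the difference $K_F^*\mathcal K(\cdot,x)-\mathcal K(\cdot,F(x))$ is orthogonal to the whole RKHS and hence zero.
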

\begin{proof}
The proof is given in Appendix~B.
\end{proof}
Real kernels and coordinates can still produce complex eigenvalues and eigenfunctions. For the spectral analysis in Section~\ref{sec:method}, we therefore extend the real RKHS to complex-valued functions by standard complexification.
\begin{definition}[Complexified RKHS]\label{def_050720262001}
The standard complexification of $\mathcal{H}_{\mathcal{K}}(\mathcal{X})$~\cite{conway1990} is
\begin{IEEEeqnarray}{rCl}
\mathcal{H}_{\mathcal{K}}^{\mathbb{C}}(\mathcal{X})
&:=&\{f_1+\mathrm{i}f_2\mid f_1,f_2\in\mathcal{H}_{\mathcal{K}}(\mathcal{X})\},
\end{IEEEeqnarray}
with inner product, for $f=f_1+\mathrm{i}f_2$ and $g=g_1+\mathrm{i}g_2$,
\begin{IEEEeqnarray}{rCl}
\label{eq_050720262018}
\left\langle f,g\right\rangle_{\mathcal{H}_{\mathcal{K}}^{\mathbb{C}}(\mathcal{X})}
&:=&
\left\langle f_1,g_1\right\rangle_{\mathcal{H}_{\mathcal{K}}(\mathcal{X})}
+\left\langle f_2,g_2\right\rangle_{\mathcal{H}_{\mathcal{K}}(\mathcal{X})}
+\mathrm{i}\left(
\left\langle f_2,g_1\right\rangle_{\mathcal{H}_{\mathcal{K}}(\mathcal{X})}
-\left\langle f_1,g_2\right\rangle_{\mathcal{H}_{\mathcal{K}}(\mathcal{X})}
\right).
\end{IEEEeqnarray}
The reproducing property extends to the complexification:
\begin{equation}
f(x)=\left\langle f,\mathcal{K}(\cdot,x)\right\rangle_{\mathcal{H}_{\mathcal{K}}^{\mathbb{C}}(\mathcal{X})},
\qquad x\in\mathcal{X}.
\end{equation}
Under the boundedness assumption above, the complexified Koopman operator is
\begin{IEEEeqnarray}{rCl}
K_{F,\mathbb{C}}(f_1+\mathrm{i}f_2)
&:=&K_Ff_1+\mathrm{i}K_Ff_2,
\qquad f_1,f_2\in\mathcal{H}_{\mathcal{K}}(\mathcal{X}),
\end{IEEEeqnarray}
and is bounded on $\mathcal{H}_{\mathcal{K}}^{\mathbb{C}}(\mathcal{X})$. A \emph{kernel section} is the function $\mathcal K(\cdot,x)$ obtained by fixing the second argument. The adjoint of the complexified operator satisfies the same kernel-section identity:
\begin{equation}
K_{F,\mathbb{C}}^*\mathcal{K}(\cdot,x)=\mathcal{K}(\cdot,F(x)).
\label{eq_020720261144}
\end{equation}
The identity follows by the same reproducing-kernel argument as Proposition~\ref{proposition_300620260944}.
\end{definition}
\begin{definition}[Adjoint Koopman Eigenpair]\label{def_020720160714}
A pair $(\lambda,f)\in\mathbb{C}\times\mathcal{H}_{\mathcal{K}}^{\mathbb{C}}(\mathcal{X})$ is an adjoint Koopman eigenpair if
\begin{equation}
f\neq0,
\qquad
K_{F,\mathbb{C}}^*f=\lambda f.
\end{equation}
\end{definition}
\section{Geometry-Induced Soft State Abstraction and Koopman Closure}\label{sec:method}
We first construct $\Phi$ from reconstruction scores. Linear combinations of its coordinates form a finite-dimensional function space. If a single prediction matrix $B$ gives the exact next coordinates at every state, composing any function in this space with the dynamics leaves it in the same space. We specify which eigenvectors then define Koopman or adjoint eigenfunctions. For fitted models, we define prediction-error, simplex-distance, and spectral diagnostics. The experiments report prediction errors, effects of simplex projection, and discrepancies between weighted feature sums. 
\subsection{Data, reference classes, and transition samples}
Let $\mathcal X\subset\mathbb R^n$ be a measurable state space and let $x\sim\mathbb P_x$. A measurable reference-class map
\[
c:\mathcal X\rightarrow\{1,\ldots,C\}
\]
defines the one-hot label $y(x):=\mathrm e^{c(x)}$, with a $1$ at the assigned class and zeros elsewhere. Together with $\mathbb P_x$, it determines the joint distribution $\mathbb P_{x,y}$. Partitions may be predefined or learned from training data. Each analysis fixes the class count $C$, assignment map $c$, reference datasets, and resulting family of soft abstractions. Alternative partitions and class counts are compared separately during model selection. For the operator-theoretic development, let $F:\mathcal X\rightarrow\mathcal X$ be a measurable deterministic one-step map and write $x^+=F(x)$. For stochastic transitions, including those generated by a fixed randomized policy, a corresponding closure condition predicts the conditional mean:
\[
\mathbb E[\Phi(x^+)\mid x]=B^\top\Phi(x).
\]
Remark~\ref{rem:stochastic_risk} gives the stochastic one-step risk interpretation. The Koopman identities below concern deterministic $F$.
\begin{definition}[Snapshots]\label{definition_200720261153}
Let
\begin{IEEEeqnarray}{rCl}
\label{eq_290620261445}
\mathcal S:=\{(x^i,y^i,x^{+,i})\}_{i=1}^N, \qquad x^{+,i}=F(x^i),
\end{IEEEeqnarray}
be a finite collection of observed one-step transitions with labels supplied by the fixed reference-class map. The representation construction and the identities for a fixed number of matrix updates remain valid with dependent training transitions. Each statistical guarantee separately states the sampling assumptions it requires. For each reference class $c\in\{1,\cdots,C\}$, define
\[
\mathcal I^c:=\{i\in\{1,\cdots,N\}:y^i=\mathrm e^c\},\qquad N_c:=|\mathcal I^c|,
\]
and let $\mathrm I^c=(\mathrm I_1^c,\cdots,\mathrm I_{N_c}^c)$ denote the elements of $\mathcal I^c$ in increasing order. The class-$c$ reference matrix is
\begin{IEEEeqnarray}{rCl}
\label{eq_200720261205}
\mathbf X^c:=\left[x^{\mathrm I_1^c}\ \cdots\ x^{\mathrm I_{N_c}^c}\right]^\top\in\mathbb{R}^{N_c\times n}.
\end{IEEEeqnarray}
Each KAHM dataset must contain at least two samples. Their encoded vectors must be pairwise distinct, have at least one coordinate, and have a positive-definite sample covariance (Appendix~A). Datasets that fail these conditions are excluded. Reference datasets are fixed before coordinate construction; Section~\ref{sec:exp_protocol} explains how experiments augment classes containing one sample.
\end{definition}
\begin{definition}[$t$-Step Evaluation Dataset]
For a fixed $t\in\mathbb{Z}_+$, let
\begin{IEEEeqnarray}{rCl}
\label{eq_200720262025}
\mathrm E_t:=\{(x'^i,F^t(x'^i))\}_{i=1}^M
\end{IEEEeqnarray}
be an evaluation set independent of the data used to construct the abstraction and fit the closure model.
\end{definition}
\begin{definition}[Spectral Evaluation Dataset]
Let
\begin{equation}
\mathrm E_{\mathrm{spec}}:=\{(x'^i,F(x'^i))\}_{i=1}^M
\end{equation}
be a one-step dataset kept independent of representation construction and matrix fitting. It supplies the spectral diagnostics and may also serve as a one-step prediction-evaluation set.
\end{definition}
\subsection{Construction of the soft abstraction}
Soft coordinates are normalized affinities derived from class reconstruction scores.
\begin{definition}[Family of KAHM Soft Abstractions]
Given $\mathcal S$ and fixed measurable class scores $\mathcal T_c:\mathcal X\to[0,1]$ constructed below, define
\begin{IEEEeqnarray}{rCl}
\label{eq_220720260633}
\mathcal G_{\mathcal S}:=\Bigg\{
\Phi:\mathcal X\rightarrow\Delta_C\;\Bigg|\;
(\Phi(x))_c=
\frac{\bigl(1-\mathcal T_c(x)+\tau\bigr)^\omega}
{\sum_{j=1}^C\bigl(1-\mathcal T_j(x)+\tau\bigr)^\omega},
\quad c=1,\cdots,C,\ \omega>0,\ \tau>0
\Bigg\}.
\end{IEEEeqnarray}
The offset $\tau$ keeps every affinity positive. Increasing $\omega$ increases the relative weight of classes with smaller reconstruction scores.
\end{definition}
To reduce the computational cost of KAHM fitting on large datasets, we follow prior work~\cite{kumar2025collaborative,kumar2025operator} and partition each $\mathbf X^c$ into fixed subsets
$\mathbf X_s^c$, $s=1,\ldots,S_{\mathrm{sub}}^c$. On $\mathcal X\subseteq\bigcap_{c,s}D_{\mathbf X_s^c}$, set
\begin{align*}
\mathcal T_c(x)
&:=\min_{1\leq s\leq S_{\mathrm{sub}}^c}
\mathcal T_{\mathbf X_s^c}(x),\\
s_c(x)
&:=\min\operatorname*{argmin}_{1\leq s\leq S_{\mathrm{sub}}^c}
\mathcal T_{\mathbf X_s^c}(x),
&
\mathcal A_c(x)
&:=\mathcal A_{\mathbf X_{s_c(x)}^c}(x).
\end{align*}
For each query, $s_c(x)$ selects the submodel with the smallest combined score; ties are resolved by the smallest index. The Euclidean and angular components both use that submodel's reconstruction $\mathcal A_c(x)$. When $S_{\mathrm{sub}}^c=1$, the class has a single KAHM.
\begin{remark}[Reference partitions and candidate families]
\label{rem:conditional_abstraction_class}
States with the same label $c(x)$ form a reference class. Fixing these classes, their reference datasets, and internal subpartitions determines $\mathcal G_{\mathcal S}$; varying $\omega$ or $\tau$ selects a map within this family. Model selection compares maps from this or alternative reference constructions.
\end{remark}
For every finite $\omega>0$, the map in~\eqref{eq_220720260633} is well defined whenever the reconstruction-dissimilarity scores are measurable and take values in $[0,1]$. Indeed,
\[
\tau\le 1-\mathcal T_c(x)+\tau\le 1+\tau,
\]
so the denominator is at least $C\tau^\omega>0$. Thus $\Phi$ is measurable, every coordinate is strictly positive, and the coordinates sum to one. Taking powers preserves affinity ordering: $\omega>1$ concentrates weights on larger affinities, while $0<\omega<1$ makes them more uniform. For fixed $C$ and $\tau>0$, the same bounds imply $\Phi(x)\to C^{-1}\mathbf 1_C$ uniformly on $\mathcal X$ as $\omega\downarrow0$.
\begin{remark}[Coordinate Interpretability]
Coordinate $c$ uses the smallest reconstruction score among the submodels in class $c$. Normalization compares this affinity with those of all other classes. For $C>1$, increasing $1-\mathcal T_c(x)$ while fixing competing affinities increases $(\Phi(x))_c$. Thus a larger coordinate means better relative reconstruction by that class's reference data.
\end{remark}
Once $\Phi$ has been fixed, define the successor abstraction vector
\begin{IEEEeqnarray}{rCl}
\label{eq_240720262158}
z &:=& \Phi(F(x))=\Phi(x^+),
\end{IEEEeqnarray}
and, for the raw transitions in $\mathcal S$, let $z^i:=\Phi(x^{+,i})$. The closure-training set is therefore
\begin{equation}
\mathrm E_{\mathrm{trn}}:=\left \{\left(x^i,z^i = \Phi(F(x^i))\right) \right\}_{i=1}^N.
\end{equation}
The labelled current states first determine the representation. We then encode their successors using that fixed map and fit the coordinate dynamics.
\subsection{Finite-rank kernel and spectral correspondence}
We study linear combinations of the coordinates of $\Phi$. Linear dependence can make their coefficient vectors nonunique. Restricting coefficients to the span of feature vectors removes this ambiguity and supplies the RKHS inner product used below.
Fix $\Phi\in\mathcal G_{\mathcal S}$. The associated
positive-semidefinite finite-rank kernel is
\begin{IEEEeqnarray}{rCl}
\mathcal K(x,x')&:=&\Phi(x)^\top\Phi(x'),
\qquad \Phi\in\mathcal G_{\mathcal S}.
\label{eq_040720260823}
\end{IEEEeqnarray}
If $v\in\mathbb R^C$ is orthogonal to every $\Phi(x)$, then $v^\top\Phi(x)=0$ for every state. Thus $w$ and $w+v$ represent the same function. We remove this redundancy by restricting coefficients to the span of feature vectors, called the \emph{effective coefficient space}, with orthogonal projection
\begin{IEEEeqnarray}{rCl}
V_\Phi&:=&\spanop\{\Phi(x):x\in\mathcal X\}\subseteq\mathbb{R}^C,\\
P_\Phi&:=&P_{V_\Phi}:\mathbb{R}^C\rightarrow V_\Phi.
\end{IEEEeqnarray}
By the definition of orthogonal projection, $w-P_\Phi(w)\in V_\Phi^\perp$. Since $\Phi(x)\in V_\Phi$ for every $x\in\mathcal X$,
\begin{equation}
w^\top\Phi(x)=P_\Phi(w)^\top\Phi(x),
\qquad x\in\mathcal X.
\end{equation}
Thus $w$ and $P_\Phi(w)$ represent the same function. Restricting coefficients to $V_\Phi$ gives the RKHS of~\eqref{eq_040720260823}:
\[
\mathcal H_\Phi
:=\{f_w:\mathcal X\rightarrow\mathbb{R}\mid f_w(x)=w^\top\Phi(x),\ w\in V_\Phi\},
\]
with inner product and norm
\[
\langle f_w,f_{w'}\rangle_{\mathcal H_\Phi}:=w^\top w',
\qquad
\|f_w\|_{\mathcal H_\Phi}:=\|w\|,
\qquad w,w'\in V_\Phi.
\]
The coefficient representation is unique in $V_\Phi$: if $w,w'\in V_\Phi$ and $w^\top\Phi(x)=w'^\top\Phi(x)$ for all $x\in\mathcal X$, then $w-w'\in V_\Phi\cap V_\Phi^\perp=\{\mathbf{0}_C\}$. Hence the inner product above is well defined. Moreover, $\mathcal K(\cdot,x)=f_{\Phi(x)}\in\mathcal H_\Phi$ and, for every $f_w\in\mathcal H_\Phi$,
\[
\langle f_w,\mathcal K(\cdot,x)\rangle_{\mathcal H_\Phi}
=w^\top\Phi(x)
=f_w(x),
\]
which verifies the reproducing property. The RKHS dimension, and hence the kernel rank, is $r_\Phi:=\dim V_\Phi\le C$. Let $\mathcal H_\Phi^{\mathbb{C}}$ denote the complexification defined in Definition~\ref{def_050720262001}. The corresponding complex coefficient space and projection are
\begin{IEEEeqnarray}{rCl}
V_\Phi^{\mathbb{C}}&:=&V_\Phi\oplus \mathrm i V_\Phi\subseteq\mathbb{C}^C,\\
P_\Phi^{\mathbb{C}}(w^1+\mathrm i w^2)&:=&P_\Phi(w^1)+\mathrm i P_\Phi(w^2).
\end{IEEEeqnarray}
Thus, as in the real case, only the component of a complex coefficient vector lying in $V_\Phi^{\mathbb{C}}$ affects the represented function.

\begin{proposition}\label{proposition_070720261027}
For $w\in\mathbb{C}^C$, define
\begin{IEEEeqnarray}{rCl}
\label{eq_220820261131}
f_{w}(x)&:=&w^\top\Phi(x).
\end{IEEEeqnarray}
Then, for all $w,w'\in\mathbb{C}^C$,
\begin{IEEEeqnarray}{rCl}
\label{eq_060720261925}
f_{w}&=&f_{P_\Phi^{\mathbb{C}}(w)},\\
f_{w}&\in&\mathcal H_\Phi^{\mathbb{C}},\\
\|f_{w}\|_{\mathcal H_\Phi^{\mathbb{C}}}^2&=&\|P_\Phi^{\mathbb{C}}(w)\|^2,\\
\langle f_{w},f_{w'}\rangle_{\mathcal H_\Phi^{\mathbb{C}}}
&=&\bigl(P_\Phi^{\mathbb{C}}(w)\bigr)^\top
\overline{P_\Phi^{\mathbb{C}}(w')}.
\end{IEEEeqnarray}
\end{proposition}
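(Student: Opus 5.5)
The plan is to reduce everything to the real case by splitting each complex coefficient vector into real and imaginary parts, and then read off the identities from the definition of the complexified inner product in Definition~\ref{def_050720262001}. Write $w=w^1+\mathrm i w^2$ and $w'=w'^1+\mathrm i w'^2$ with $w^j,w'^j\in\mathbb R^C$.

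\emph{Function identity and membership.} Since $\Phi(x)$ is real, linearity of the bilinear form $w^\top\Phi(x)$ gives
\[
f_w=f_{w^1}+\mathrm i f_{w^2}.
\]
By the real projection identity $w^\top\Phi(x)=P_\Phi(w)^\top\Phi(x)$ established before the statement, each part satisfies $f_{w^j}=f_{P_\Phi(w^j)}$. Adding these gives $f_w=f_{P_\Phi^{\mathbb C}(w)}$. Because $P_\Phi(w^j)\in V_\Phi$, each $f_{w^j}$ lies in $\mathcal H_\Phi$. By Definition~\ref{def_050720262001}, $f_w=f_{w^1}+\mathrm i f_{w^2}$ then lies in $\mathcal H_\Phi^{\mathbb C}$.

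\emph{Inner product.} Set $p^j=P_\Phi(w^j)$ and $q^j=P_\Phi(w'^j)$. By the uniqueness of coefficients in $V_\Phi$, the real parts of $f_w$ and $f_{w'}$ are $f_{p^1}$ and $f_{q^1}$, and the imaginary parts are $f_{p^2}$ and $f_{q^2}$. This uniqueness is needed so that the decomposition $f=f_1+\mathrm i f_2$ used in Definition~\ref{def_050720262001} is well defined. It holds because real-valued functions $f_1,f_2$ are determined by $f$ pointwise, and each has a unique representative in $V_\Phi$. Substituting into the complexified inner product with $\langle f_p,f_q\rangle_{\mathcal H_\Phi}=p^\top q$ gives
\[
\langle f_w,f_{w'}\rangle_{\mathcal H_\Phi^{\mathbb C}}
=p^{1\top}q^1+p^{2\top}q^2+\mathrm i\bigl(p^{2\top}q^1-p^{1\top}q^2\bigr).
\]

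\emph{Matching the right-hand side.} Expand
\[
(p^1+\mathrm i p^2)^\top\overline{(q^1+\mathrm i q^2)}=(p^1+\mathrm i p^2)^\top(q^1-\mathrm i q^2).
\]
The product gives $p^{1\top}q^1+p^{2\top}q^2+\mathrm i(p^{2\top}q^1-p^{1\top}q^2)$, which equals the inner product above. Taking $w'=w$ gives the norm identity $\|f_w\|^2=\|P_\Phi^{\mathbb C}(w)\|^2$, since the imaginary part cancels.

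\emph{Main obstacle.} The only subtle point is the well-definedness step: identifying the real and imaginary components of $f_w$ with elements of $\mathcal H_\Phi$ having the projected coefficients. It is settled by the pointwise reality of $\Phi$ together with the uniqueness argument $V_\Phi\cap V_\Phi^\perp=\{\mathbf 0_C\}$. The remaining steps are routine algebra.
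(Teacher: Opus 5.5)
Your proposal is correct and takes essentially the same route as the paper's proof in Appendix~C. Both split $w$ into real and imaginary parts, apply the real projection identity to each part, and then combine $\langle f_u,f_v\rangle_{\mathcal H_\Phi}=u^\top v$ with the complexified inner product~\eqref{eq_050720262018}; your well-definedness check and your expansion of $(p^1+\mathrm i p^2)^\top(q^1-\mathrm i q^2)$ only spell out steps the paper leaves implicit.
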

\begin{proof}
The proof is provided in Appendix~C.
\end{proof}

\begin{proposition}[Orthogonal Projection onto the Effective Coefficient Space]\label{proposition_170720262159}
Let $Q_\Phi\in\mathbb{R}^{C\times r_\Phi}$ have orthonormal columns spanning $V_\Phi$, where $r_\Phi=\dim V_\Phi$. Then
\begin{IEEEeqnarray}{rCl}
\label{eq_170720261001}
P_\Phi^{\mathbb{C}}(w)&=&Q_\Phi Q_\Phi^\top w,
\qquad w\in\mathbb{C}^C.
\end{IEEEeqnarray}
\end{proposition}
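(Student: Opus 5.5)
The plan is to prove the identity first for real vectors and then extend it to complex ones using the definition $P_\Phi^{\mathbb C}(w^1+\mathrm i w^2)=P_\Phi(w^1)+\mathrm i P_\Phi(w^2)$.

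For real $w\in\mathbb R^C$, we verify the two properties that characterize the orthogonal projection onto $V_\Phi$: $P_Vw\in V$ and $w-P_Vw\in V^\perp$. Uniqueness of the projection then gives $P_\Phi(w)=Q_\Phi Q_\Phi^\top w$.

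\emph{Membership in $V_\Phi$.} The vector $Q_\Phi Q_\Phi^\top w=Q_\Phi(Q_\Phi^\top w)$ is a linear combination of the columns of $Q_\Phi$. These columns span $V_\Phi$, so the vector lies in $V_\Phi$.

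\emph{Orthogonal residual.} Take any $v\in V_\Phi$ and write $v=Q_\Phi\alpha$ for some $\alpha\in\mathbb R^{r_\Phi}$. Orthonormality of the columns gives $Q_\Phi^\top Q_\Phi=I_{r_\Phi}$. Hence
\[
v^\top(w-Q_\Phi Q_\Phi^\top w)=\alpha^\top Q_\Phi^\top w-\alpha^\top Q_\Phi^\top Q_\Phi Q_\Phi^\top w=0 .
\]
So $w-Q_\Phi Q_\Phi^\top w\in V_\Phi^\perp$, and the characterization yields $P_\Phi(w)=Q_\Phi Q_\Phi^\top w$.

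\emph{Complex extension.} For $w=w^1+\mathrm i w^2$ with real $w^1,w^2$, the matrix $Q_\Phi Q_\Phi^\top$ is real. Complex-linearity of matrix multiplication therefore gives
\[
Q_\Phi Q_\Phi^\top w=Q_\Phi Q_\Phi^\top w^1+\mathrm i\,Q_\Phi Q_\Phi^\top w^2=P_\Phi(w^1)+\mathrm i P_\Phi(w^2)=P_\Phi^{\mathbb C}(w).
\]
The decomposition into real and imaginary parts is unique, so this is consistent with the definition of $P_\Phi^{\mathbb C}$.

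\emph{Degenerate case.} The only point needing care is $r_\Phi=0$. It cannot occur, because every $\Phi(x)\in\Delta_C$ is nonzero, so $V_\Phi\neq\{\mathbf 0_C\}$. Even if it did occur, the convention that the empty product is the zero matrix makes the formula hold trivially.

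No real obstacle arises; the argument reduces to the standard characterization of orthogonal projection cited in the notation section.
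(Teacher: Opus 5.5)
Your proof is correct and follows the same idea as the paper's Appendix~D. Using $Q_\Phi^\top Q_\Phi=\mathbf I_{r_\Phi}$, you check that $Q_\Phi Q_\Phi^\top w$ lies in the subspace and that the residual is orthogonal to it, and then invoke uniqueness of the orthogonal projection. The one difference is in the complex case: the paper runs the argument directly in $\mathbb C^C$ against $V_\Phi^{\mathbb C}$, whereas you prove the real case and lift it through the componentwise definition $P_\Phi^{\mathbb C}(w^1+\mathrm i w^2)=P_\Phi(w^1)+\mathrm i P_\Phi(w^2)$. Your route ties the result to that definition slightly more explicitly.
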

\begin{proof}
The proof is provided in Appendix~D.
\end{proof}
Constructing this RKHS does not show that it is preserved by the dynamics. The next proposition assumes exact closure, so $f\circ F$ belongs to $\mathcal H_\Phi$ whenever $f$ does. The Koopman operator restricted to this finite-dimensional space is then bounded and has an adjoint. That adjoint is defined using the RKHS inner product above and can differ from an adjoint defined using an $L^2$ inner product. Exact closure is essential for the stated eigenfunction relations. Without exact closure, the vector formulas below remain computable, but the Koopman operator need not map this space into itself. The formulas then cannot be interpreted as residuals of its restricted adjoint.
\begin{proposition}[Spectral Correspondence under Exact Closure]\label{prop_koopman_eigenfunction}
Assume that the soft-abstraction feature map is exactly closed under a matrix $B\in\mathbb{R}^{C\times C}$:
\begin{equation}
\Phi(F(x))=B^\top\Phi(x),\qquad x\in\mathcal X.
\label{eq:background_feature_closure}
\end{equation}
Let $Q_\Phi\in\mathbb{R}^{C\times r_\Phi}$ have orthonormal columns spanning $V_\Phi$, and define
\begin{equation}
\label{eq:background_reduced_matrices_QPhi}
\widetilde B_\Phi:=Q_\Phi^\top BQ_\Phi.
\end{equation}
Then:
\begin{enumerate}
\item for every $w\in\mathbb{C}^C$,
\begin{equation}
\label{eq_190720260749}
K_{F,\mathbb{C}}f_{w}=f_{Bw};
\end{equation}

\item \label{prop_koopman_eigenfunction_1} if $Bw=\lambda w$ and $f_{w}\neq0$, then $f_{w}$ is a Koopman eigenfunction with eigenvalue $\lambda$;

\item for every $\widetilde w\in\mathbb{C}^{r_\Phi}$,
\begin{equation}
\label{eq_190720260756}
K_{F,\mathbb{C}}^*f_{Q_\Phi\widetilde w}
=f_{Q_\Phi\widetilde B_\Phi^\top\widetilde w};
\end{equation}

\item \label{prop_koopman_eigenfunction_2} if $\widetilde B_\Phi^\top\widetilde w=\lambda\widetilde w$ and $\widetilde w\neq0$, then $f_{Q_\Phi\widetilde w}$ is an adjoint Koopman eigenfunction with eigenvalue $\lambda$;

\item \label{prop_koopman_eigenfunction_5} for a held-out set $\mathrm E_{\mathrm{spec}}=\{(x'^i,F(x'^i))\}_{i=1}^M$ and $v=(v_1,\cdots,v_M)^\top\in\mathbb{C}^M$, define
\begin{equation}
g_v:=\Phi(\cdot)^\top\sum_{i=1}^M v_i\Phi(x'^i).
\end{equation}
Then
\begin{IEEEeqnarray}{rCl}
\label{eq_210720260634}
\|K_{F,\mathbb{C}}^*g_v-\lambda g_v\|_{\mathcal H_\Phi^{\mathbb{C}}}^2
&=&\left\|\sum_{i=1}^M v_i\Phi(F(x'^i))-\lambda\sum_{i=1}^M v_i\Phi(x'^i)\right\|^2,\\
\label{eq_210720260635}
\|K_{F,\mathbb{C}}^*g_v\|_{\mathcal H_\Phi^{\mathbb{C}}}^2
&=&\left\|\sum_{i=1}^M v_i\Phi(F(x'^i))\right\|^2.
\end{IEEEeqnarray}
\end{enumerate}
\end{proposition}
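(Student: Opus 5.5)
The plan is to reduce every claim to coefficient algebra in $V_\Phi$, using Proposition~\ref{proposition_070720261027} to identify functions with projected coefficients and Proposition~\ref{proposition_170720262159} to write $P_\Phi^{\mathbb C}=Q_\Phi Q_\Phi^\top$.

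\textbf{Items 1 and 2.} For item~1, fix $w\in\mathbb C^C$. Applying exact closure~\eqref{eq:background_feature_closure} gives
\[
(K_{F,\mathbb C}f_w)(x)=w^\top\Phi(F(x))=w^\top B^\top\Phi(x)=(Bw)^\top\Phi(x)=f_{Bw}(x).
\]
Linearity in real and imaginary parts justifies the complex case. By Proposition~\ref{proposition_070720261027}, $f_{Bw}\in\mathcal H_\Phi^{\mathbb C}$, so $K_F$ maps $\mathcal H_\Phi$ into itself. Since $\mathcal H_\Phi$ is finite-dimensional, $K_F$ is bounded there and the adjoint exists. Item~2 is then immediate: $K_{F,\mathbb C}f_w=f_{\lambda w}=\lambda f_w$, and $f_w\neq0$ by hypothesis.

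\textbf{Items 3 and 4.} For item~3, use the kernel-section identity~\eqref{eq_020720261144}: $K^*_{F,\mathbb C}f_{\Phi(x)}=f_{\Phi(F(x))}=f_{B^\top\Phi(x)}$. Since $V_\Phi^{\mathbb C}$ is spanned by the vectors $\Phi(x)$, linearity and conjugate-linearity bookkeeping give $K^*f_u=f_{B^\top u}$ for all $u\in V_\Phi^{\mathbb C}$. Alternatively, I would verify the adjoint directly through the inner product: for $u,w\in V_\Phi^{\mathbb C}$,
\[
\langle f_{Bw},f_u\rangle=(P Bw)^\top\overline u=w^\top B^\top\overline u=w^\top\overline{PB^\top u},
\]
where $P=P_\Phi^{\mathbb C}$ and I used $Pu=u$ and $Pw=w$. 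This gives $K^*f_u=f_{PB^\top u}=f_{B^\top u}$. Now put $u=Q_\Phi\widetilde w$. Then
\[
f_{B^\top Q_\Phi\widetilde w}=f_{Q_\Phi Q_\Phi^\top B^\top Q_\Phi\widetilde w}=f_{Q_\Phi\widetilde B_\Phi^\top\widetilde w}
\]
by Propositions~\ref{proposition_070720261027} and~\ref{proposition_170720262159}. For item~4, the eigenrelation yields $K^*f_{Q_\Phi\widetilde w}=\lambda f_{Q_\Phi\widetilde w}$. Nonvanishing follows because $\|f_{Q_\Phi\widetilde w}\|=\|Q_\Phi\widetilde w\|=\|\widetilde w\|>0$, using the orthonormal columns and $Q_\Phi\widetilde w\in V_\Phi^{\mathbb C}$.

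\textbf{Item 5.} Write $g_v=f_u$ with $u=\sum_i v_i\Phi(x'^i)\in V_\Phi^{\mathbb C}$. By item~3's identity, $K^*g_v=f_{B^\top u}$. Closure gives $B^\top u=\sum_i v_i\Phi(F(x'^i))$, which lies in $V_\Phi^{\mathbb C}$. Hence $K^*g_v-\lambda g_v=f_{s}$ with $s=\sum_i v_i\bigl(\Phi(F(x'^i))-\lambda\Phi(x'^i)\bigr)\in V_\Phi^{\mathbb C}$. The norm identity in Proposition~\ref{proposition_070720261027}, with $P s=s$, yields both displayed formulas.

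\textbf{Main obstacle.} I expect the main difficulty to be the adjoint step. It requires care with the conjugate-linear second slot of the complexified inner product. It also requires checking that projecting $B^\top u$ is harmless, since $B^\top$ need not preserve $V_\Phi$ on all of $\mathbb C^C$, although it does on the span of feature vectors by closure. I would first try the kernel-section route, because it handles both issues automatically.
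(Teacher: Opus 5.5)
Your proposal is correct and follows essentially the same route as the paper's proof in Appendix~E. Exact closure gives item~1 by direct substitution. The adjoint is identified through the coefficient inner product of Proposition~\ref{proposition_070720261027}: the paper computes in reduced coordinates $\widetilde w$, while you compute in $V_\Phi^{\mathbb C}$ and then substitute $u=Q_\Phi\widetilde w$. Nonvanishing in item~4 comes from $\|Q_\Phi\widetilde w\|=\|\widetilde w\|$, and item~5 follows from the kernel-section identity together with the norm formula.

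Your preferred kernel-section derivation of $K_{F,\mathbb C}^*f_u=f_{B^\top u}$ for all $u\in V_\Phi^{\mathbb C}$ is also valid. It needs no conjugate-linear bookkeeping, because both $w\mapsto f_w$ and $K_{F,\mathbb C}^*$ are complex-linear. It simply moves earlier the argument that the paper uses only for item~5.
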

\begin{proof}
The proof is provided in Appendix~E.
\end{proof}
The next two definitions use the exact-closure assumption in Proposition~\ref{prop_koopman_eigenfunction}. Under that assumption, they measure how closely a nonzero RKHS function satisfies an adjoint eigenrelation. Without exact closure, we can still compute the vector expressions, but interpret them only as checks of the proposed spectral relation in the observed coordinates.
\begin{definition}[Squared Relative Adjoint Residual]\label{def_210720260931}
For $g_v\neq0$ with $\|K_{F,\mathbb{C}}^*g_v\|_{\mathcal H_\Phi^{\mathbb{C}}}>0$, define
\begin{IEEEeqnarray}{rCl}
\rho(\lambda,v;\Phi,\mathrm E_{\mathrm{spec}})
&:=&
\frac{\|K_{F,\mathbb{C}}^*g_v-\lambda g_v\|_{\mathcal H_\Phi^{\mathbb{C}}}^2}
{\|K_{F,\mathbb{C}}^*g_v\|_{\mathcal H_\Phi^{\mathbb{C}}}^2}\\
&=&
\frac{\left\|\sum_{i=1}^M v_i\Phi(F(x'^i))-\lambda\sum_{i=1}^M v_i\Phi(x'^i)\right\|^2}
{\left\|\sum_{i=1}^M v_i\Phi(F(x'^i))\right\|^2}.
\end{IEEEeqnarray}
\end{definition}

\begin{definition}[$\varepsilon$-Approximate Adjoint Koopman Eigenpair]\label{def_0307202260713}
For the held-out set $\mathrm E_{\mathrm{spec}}=\{(x'^i,F(x'^i))\}_{i=1}^M$, let
\[
g_v=\sum_{i=1}^M v_i\mathcal K(\cdot,x'^i),\qquad v\in\mathbb{C}^M.
\]
The pair $(\lambda,g_v)$ is called an $\varepsilon$-approximate adjoint Koopman eigenpair on $\mathrm E_{\mathrm{spec}}$ if $g_v\neq0$, the denominator in Definition~\ref{def_210720260931} is positive, and
\[
\rho(\lambda,v;\Phi,\mathrm E_{\mathrm{spec}})\le\varepsilon.
\]
The sampled states determine the kernel sections used to construct $g_v$. Under exact closure, the numerator of $\rho$ is the squared RKHS norm of $K_{F,\mathbb C}^*g_v-\lambda g_v$, and the denominator is $\|K_{F,\mathbb C}^*g_v\|^2$. Thus $\rho$ measures the relative error in the eigenrelation for the whole function. It is not a sample average of pointwise errors. A positive tolerance permits a nonzero error and does not prove that $g_v$ is an exact eigenfunction. Without closure, only the vector-based diagnostic remains justified.
\end{definition}

\begin{remark}[Spectral Validation]\label{rem_spectal_validation}
Suppose $(\lambda,\widetilde w)$ is an eigenpair of $\widetilde B_\Phi^\top$, so that the closure model suggests the candidate function $f_{Q_\Phi\widetilde w}$. To express this candidate using kernel sections at held-out states $x'^1,\cdots,x'^M$, choose the minimum-norm solution among the minimizers below:
\begin{equation}
\widehat v(\widetilde w;\Phi,\mathrm E_{\mathrm{spec}})
\in\argmin_{v\in\mathbb{C}^M}
\left(
\left\|\sum_{i=1}^M v_i\Phi(x'^i)-Q_\Phi\widetilde w\right\|^2
+\beta_{\mathrm{rep}}\|v\|^2
\right),\qquad \beta_{\mathrm{rep}}\ge0.
\end{equation}
The normalized squared error in approximating the candidate coefficient vector is
\begin{equation}
\eta_{\mathrm{rep}}(\widehat v;\widetilde w,\Phi,\mathrm E_{\mathrm{spec}})
:=
\frac{\left\|\sum_{i=1}^M\widehat v_i\Phi(x'^i)-Q_\Phi\widetilde w\right\|^2}
{\|Q_\Phi\widetilde w\|^2}.
\end{equation}
If $\eta_{\mathrm{rep}}=0$, then $g_{\widehat v}=f_{Q_\Phi\widetilde w}$: the kernel sections represent the candidate exactly. Under exact closure, $\rho$ measures the relative adjoint residual of $g_{\widehat v}$; when $\eta_{\mathrm{rep}}>0$, this is the approximating function. Without exact closure, different weights can represent the same function but produce different weighted successor vectors. Thus the vector-based $\rho$ can depend on the chosen weights as well as the candidate and sample; it is not an intrinsic operator residual.
\end{remark}

\subsection{Finite-dimensional Koopman closure of the soft abstraction}
\label{sec:finite_koopman_closure}
For $\Phi:\mathcal X\rightarrow\Delta_C$, exact one-step closure under $B\in\mathbb{R}^{C\times C}$ means
\begin{IEEEeqnarray}{rCl}
\label{eq_250720260737}
\Phi(F(x))&=&B^\top\Phi(x),\qquad x\in\mathcal X.
\end{IEEEeqnarray}
For any fitted matrix $\widehat B$, define the expected squared one-step prediction error under the evaluation distribution $\mathbb P_x$, called the closure risk
\begin{equation}
R_{\mathbb{P}_x}(\Phi,\widehat B)
:=\mathbb{E}_{x\sim\mathbb{P}_x}\!\left[\|\Phi(F(x))-\widehat B^\top\Phi(x)\|^2\right],
\end{equation}
and the intrinsic closure defect of the abstraction
\begin{equation}
\mathcal D_{\mathbb{P}_x}(\Phi)
:=\inf_{B\in\mathbb{R}^{C\times C}} R_{\mathbb{P}_x}(\Phi,B)^{1/2}.
\end{equation}
Risk measures the mean squared error of a specified prediction matrix. The intrinsic closure defect is the smallest root-mean-square error attainable by any linear prediction matrix for fixed $\Phi$. The prediction functions form a finite-dimensional, hence closed, subspace of $L^2(\mathbb P_x;\mathbb R^C)$. Least-squares projection therefore attains the infimum, even with linearly dependent coordinates. A zero defect consequently means exact prediction except on a set of evaluation probability zero. The spectral results require more: the closure equation must hold at every state in $\mathcal X$. For an independent $t$-step evaluation set $\mathrm E_t$, define
\begin{IEEEeqnarray}{rCl}
\widehat R_{\mathrm E_t}(\Phi,\widehat B)
&:=&\frac1M\sum_{i=1}^M
\left\|\Phi(F^t(x'^i))-\bigl((\widehat B)^\top\bigr)^t\Phi(x'^i)\right\|^2,\\
\widehat\nu_{\mathrm E_t}(\Phi,\widehat B)
&:=&\frac1M\sum_{i=1}^M
\inf_{y\in\Delta_C}
\left\|\bigl((\widehat B)^\top\bigr)^t\Phi(x'^i)-y\right\|^2.
\end{IEEEeqnarray}
The first diagnostic measures multistep coordinate-prediction error. The second measures the squared distance of predictions from the simplex; it is zero exactly when every evaluated prediction is nonnegative and sums to one. For the learned model, spectral validation starts from a nonzero reduced coefficient vector $\widehat w$. Let $(\widehat\lambda,\widehat w)$ satisfy
\begin{equation}
\left(Q_\Phi^\top\widehat BQ_\Phi\right)^\top\widehat w
=\widehat\lambda\widehat w.
\end{equation}
On $\mathrm E_{\mathrm{spec}}$, choose the minimum-norm minimizer $\widehat v$ of the representation objective and compute
\begin{IEEEeqnarray}{rCl}
\widehat v(\widehat w;\Phi,\mathrm E_{\mathrm{spec}})
&\in&\mathop{\arg\min}_{v\in\mathbb{C}^M}
\left(\left\|\sum_{i=1}^M v_i\Phi(x'^i)-Q_\Phi\widehat w\right\|^2+\beta_{\mathrm{rep}}\|v\|^2\right),\qquad \beta_{\mathrm{rep}}\ge0,\\
\eta_{\mathrm{rep}}(\widehat v;\widehat w,\Phi,\mathrm E_{\mathrm{spec}})
&=&\frac{\left\|\sum_{i=1}^M\widehat v_i\Phi(x'^i)-Q_\Phi\widehat w\right\|^2}{\|Q_\Phi\widehat w\|^2},\\
\rho(\widehat\lambda,\widehat v;\Phi,\mathrm E_{\mathrm{spec}})
&=&\frac{\left\|\sum_{i=1}^M\widehat v_i\Phi(F(x'^i))-\widehat\lambda\sum_{i=1}^M\widehat v_i\Phi(x'^i)\right\|^2}
{\left\|\sum_{i=1}^M\widehat v_i\Phi(F(x'^i))\right\|^2}.
\end{IEEEeqnarray}
The ratios require nonzero represented functions and positive denominators, as in Definitions~\ref{def_210720260931}--\ref{def_0307202260713}. Here $\eta_{\mathrm{rep}}$ measures candidate-representation error, while $\rho$ compares weighted successor and current features. Only under exact closure is $\rho$ the stated adjoint-operator residual.
\begin{problem}[Closure Estimation for a Fixed Soft Abstraction]
\label{problem_central}
Fix the reference construction and a chosen map $\Phi\in\mathcal G_{\mathcal S}$. Fit $B\in\mathbb R^{C\times C}$ to reduce the expected one-step prediction error:
\begin{equation}
\inf_{B\in\mathbb{R}^{C\times C}}
R_{\mathbb{P}_x}(\Phi,B).
\label{eq:conditional_population_objective}
\end{equation}
For a fitted pair $(\Phi,\widehat B)$, we distinguish a population accuracy requirement from empirical prediction and spectral criteria:
\begin{enumerate}[label=(R\arabic*)]

\item \textbf{One-step closure accuracy.}
For a prescribed tolerance $\varepsilon_{\mathrm{cl}}>0$,
\begin{equation}
\mathcal D_{\mathbb{P}_x}(\Phi)
\le
R_{\mathbb{P}_x}(\Phi,\widehat B)^{1/2}
\le
\varepsilon_{\mathrm{cl}}.
\end{equation}

\item \textbf{Dynamical validation.}
For a prescribed horizon $t$ and tolerances
$\varepsilon_{\mathrm{t-step}},\varepsilon_{\mathrm{simplex}}>0$,
an evaluation set $\mathrm E_t$ independent of model construction and
fitting satisfies
\begin{IEEEeqnarray}{rCl}
\widehat R_{\mathrm E_t}(\Phi,\widehat B)^{1/2}
&\le&
\varepsilon_{\mathrm{t-step}},\\
\widehat\nu_{\mathrm E_t}(\Phi,\widehat B)^{1/2}
&\le&
\varepsilon_{\mathrm{simplex}}.
\end{IEEEeqnarray}

\item \textbf{Spectral validation.}
For a candidate reduced-matrix eigenpair
$(\widehat\lambda,\widehat w)$ and tolerances
$\varepsilon_{\mathrm{rep}},\varepsilon_{\mathrm{spec}}>0$,
an independent spectral set $\mathrm E_{\mathrm{spec}}$ satisfies
\begin{IEEEeqnarray}{rCl}
\eta_{\mathrm{rep}}
(\widehat v;\widehat w,\Phi,\mathrm E_{\mathrm{spec}})
&\le&
\varepsilon_{\mathrm{rep}},\\
\rho
(\widehat\lambda,\widehat v;\Phi,\mathrm E_{\mathrm{spec}})
&\le&
\varepsilon_{\mathrm{spec}}.
\end{IEEEeqnarray}
\end{enumerate}
Criterion (R1) concerns expected error under the evaluation distribution. Criterion (R2) checks prediction error and distance from the simplex on evaluation data. Criterion (R3) checks how well the candidate function is represented and how closely it satisfies the spectral relation; its adjoint-operator interpretation requires exact closure. If both $\Phi$ and $B$ are unrestricted, a nearly constant representation can make prediction error arbitrarily small while distinguishing almost no states. As $\omega\downarrow0$, every state receives nearly uniform coordinates, and $B=\mathbf I_C$ predicts them with vanishing error. Representation selection therefore separately assesses coordinate variation and prediction on model-selection data; Problem~\ref{problem_central} fixes the resulting map.
\end{problem}
\begin{remark}[Outer representation selection]\label{rem:outer_representation_selection}
Model selection compares admissible reference constructions $a\in\mathfrak A$. Each construction specifies a class count $C_a$, a class-label map, a labelled sample $\mathcal S^{(a)}$, and fixed within-class subpartitions; these determine $\mathcal G_{\mathcal S^{(a)}}$. Problem~\ref{problem_central} fixes both the construction and soft map. For each tested soft-assignment parameter
setting, a soft-abstraction map
\[
\Phi_{a,\omega,\tau}
\in
\mathcal G_{\mathcal S^{(a)}}
\]
is constructed and its closure matrix
$\widehat B_{a,\omega,\tau}$ is fitted using only the corresponding
fitting data. The experimental procedure then applies the stated validation rule to
the finite collection
\[
\left\{
\bigl(
\Phi_{a,\omega,\tau},
\widehat B_{a,\omega,\tau}
\bigr):
(a,\omega,\tau)\in\mathfrak C
\right\},
\]
where $\mathfrak C$ is the finite candidate grid. We denote this
operation abstractly by
\begin{equation}
(\widehat a,\widehat\omega,\widehat\tau)
=
\operatorname{Select}_{\mathrm{val}}
\left\{
\bigl(
\Phi_{a,\omega,\tau},
\widehat B_{a,\omega,\tau}
\bigr):
(a,\omega,\tau)\in\mathfrak C
\right\}.
\label{eq:outer_representation_selection}
\end{equation}
$\operatorname{Select}_{\mathrm{val}}$ denotes the selection rule used in an experiment. For example, it may retain candidates with similar prediction errors and then select the one with the greatest coordinate variation. Once
$(\widehat a,\widehat\omega,\widehat\tau)$ has been selected, the
corresponding reference-class map, reference matrices, soft-abstraction map, and
closure matrix are held fixed. Changing the class count changes the coordinate dimension and prediction target. Changing the soft map can also change the target at fixed dimension. The resulting closure risks then measure prediction error on different targets. When comparing representations, we therefore state the criterion used to balance prediction accuracy and coordinate variation. When comparing predictors on a common target, every method is evaluated against the same selected soft-abstraction map.
\end{remark}
\begin{definition}[Finite-Dimensional Closure Model]
For the discrete-time system $(\mathcal X,F)$, the fitted model is the pair
\begin{equation}
\mathcal M=(\Phi,\widehat B),
\qquad
\Phi\in\mathcal G_{\mathcal S},\quad \widehat B\in\mathbb{R}^{C\times C}.
\end{equation}
For $h\ge1$, its abstraction-space predictor is
\begin{equation}
\mathcal P_{\mathcal M}^{(h)}(x)
:=\bigl((\widehat B)^\top\bigr)^h\Phi(x).
\end{equation}
When $\Phi$ uses KAHM reconstruction affinities, the pair is the \emph{Kernel Affine Hull Koopman Machine (KAHKM)}: a soft state abstraction with a fitted finite-dimensional closure matrix. Because $\widehat B$ is unconstrained, predicted coordinates can be negative or fail to sum to one. Distance from the simplex is a separate diagnostic; the experiments report how simplex projection changes prediction error, without reporting this distance.
\end{definition}

\begin{remark}[Relation to prior KAHM constructions]
The reconstruction map $\mathcal A_{\mathbf X}$, its induced geometry, and the reconstruction score $\mathcal T_{\mathbf X}$ follow prior KAHM work \cite{kumar2024kahm,kumar2025collaborative,kumar2025operator}; the normalized affinity encoder follows \cite{kumar2026semantic}. The present contribution is the finite-dimensional closure analysis, estimation, and validation of the resulting soft abstraction.
\end{remark}

\section{Geometry-Based Risk Bounds and Closure Estimation}
\label{sec:adaptive_closure_estimation}
Section~\ref{sec:prediction_limits} bounded the minimum error over all linear prediction matrices satisfying a spectral-norm limit. We now bound the error of a specified fitted matrix from above and relate the bounds to KAHM reconstruction scores. The decomposition below uses deterministic successors; Section~\ref{sec:prediction_limits} covers stochastic transitions. The NLMS update identities describe a given fitted matrix, without asserting that it is the best predictor of the chosen coordinates.
\subsection{Class-conditional successor means and closure estimation}
\label{sec:successor_prototypes}
Write
\begin{equation}
 z(x):=\Phi(F(x))\in\Delta_C.
\end{equation}
We assume that every retained reference class has positive probability under the evaluation law:
\[
 \pi_c:=\mathbb{P}_y(y_c=1)>0,\qquad c=1,\cdots,C.
\]
Here $\mathbb{P}_y$ is the label distribution induced by $\mathbb{P}_x$ and the fixed reference-class map. For each class $c$, average the next-state coordinates over states currently labelled $c$:
\begin{equation}
 z_*^c:=\mathbb{E}_{x\sim\mathbb{P}_{x\mid y}}\!\left[z(x)\mid y_c=1\right]\in\Delta_C,
 \qquad c=1,\cdots,C.
 \label{eq:successor_prototype}
\end{equation}
This class-conditional mean belongs to $\Delta_C$ because it is an expectation of simplex-valued vectors. Define the within-class successor residual by
\begin{equation}
 \xi_{\mathrm{res}}(x):=z(x)-z_*^{c(x)}.
 \label{eq:successor_residual_direct}
\end{equation}
By construction,
\begin{equation}
 \mathbb{E}_{x\sim\mathbb{P}_{x\mid y}}\!\left[\xi_{\mathrm{res}}(x)\mid y_c=1\right]=0,
 \qquad c=1,\cdots,C.
\end{equation}
Thus $\xi_{\mathrm{res}}$ measures how a state's successor coordinates differ from the mean for its current reference class. Collect the class-conditional successor means as columns of $B_*^\top$:
\begin{equation}
 B_*^\top:=\left[z_*^1\ \cdots\ z_*^C\right].
 \label{eq:prototype_matrix}
\end{equation}
Since $y(x)=\mathrm e^{c(x)}$, we have $B_*^\top y(x)=z_*^{c(x)}$. Adding and subtracting $B_*^\top\Phi(x)$ gives the exact decomposition
\begin{align}
 \Phi(F(x))
 &=B_*^\top y(x)+\xi_{\mathrm{res}}(x)\nonumber\\
 &=B_*^\top\Phi(x)+B_*^\top\bigl(y(x)-\Phi(x)\bigr)+\xi_{\mathrm{res}}(x).
 \label{eq_110820261955}
\end{align}
Here $B_*^\top\Phi(x)$ averages class mean successor vectors using the soft coordinates as weights. Adding $B_*^\top(y(x)-\Phi(x))$ recovers the mean for the reference label. Adding $\xi_{\mathrm{res}}(x)$ then recovers the actual successor coordinates. In particular, each closure-training pair satisfies
\begin{equation}
 z^i=B_*^\top\Phi(x^i)+B_*^\top\bigl(y^i-\Phi(x^i)\bigr)+\xi_{\mathrm{res}}(x^i).
 \label{eq_290720261556}
\end{equation}
The rows of $B_*$ are the mean successor vectors under the evaluation distribution. Thus they are nonnegative and each sums to one. This matrix is optimal when applied to hard labels, but need not be optimal when applied to the soft vector $\Phi(x)$. The fitted matrix introduced next has no row constraints. Denote the $j$th column of $B_*$ by $\alpha_*^j:=(B_*)_{:,j}$. For $N_{\mathrm{ep}}\in\mathbb{Z}_+$ epochs over $N$ training pairs, define
\begin{equation}
 i_\ell:=1+((\ell-1)\bmod N),\qquad \ell=1,\cdots,N_{\mathrm{ep}}N.
 \label{eq:nlms_cyclic_index}
\end{equation}
This indexing repeats the original sample order at each epoch. The normalized least-mean-squares (NLMS) update~\cite{Hassibi1996HinfLMS} for output coordinate $j$ is
\begin{IEEEeqnarray}{rCl}
\label{eq_081120261005}
\left.\widehat\alpha^j\right|_\ell
&=&\left.\widehat\alpha^j\right|_{\ell-1}
+\beta\frac{(z^{i_\ell})_j-\Phi(x^{i_\ell})^\top\left.\widehat\alpha^j\right|_{\ell-1}}
 {1+\beta\|\Phi(x^{i_\ell})\|^2}\,\Phi(x^{i_\ell}),
\qquad \ell=1,\cdots,N_{\mathrm{ep}}N,
\end{IEEEeqnarray}
where $\left.\widehat\alpha^j\right|_0$ is the initial estimate and $0<\beta<1$ is the step size. Equivalently, all output coordinates can be updated together:
\begin{equation}
\label{eq_230720260912}
\left. \widehat B\right|_\ell = \left.\widehat{B}\right|_{\ell-1}
 +\frac{\beta}{1+\beta\|\Phi(x^{i_\ell})\|^2}\,\Phi(x^{i_\ell})\left(z^{i_\ell}-\left(\left.\widehat B\right|_{\ell-1}\right)^\top\Phi(x^{i_\ell})\right)^\top,
 \qquad \ell=1,\cdots,N_{\mathrm{ep}}N.
\end{equation}
After the last epoch, the fitted matrix is
\begin{equation}
 \widehat B:=\left.\widehat B\right|_{N_{\mathrm{ep}}N},
\end{equation}
and it predicts the next-state coordinates as $\widehat B^\top\Phi(x)$. Neither the recursion nor the finite-update identities below require independent training transitions.
\subsection{Finite-update NLMS error analysis}
\label{sec:nlms_energy}
NLMS fits a matrix to the fixed soft coordinates. For the analysis, we ask how one additional update would change the squared Frobenius distance between the fitted matrix and $B_*$. This finite-update identity requires no convergence assumption. Define
\begin{align}
 s(x)&:=\|\Phi(x)\|^2,
 &u_\beta(x)&:=\frac{\beta}{1+\beta s(x)},\label{eq:nlms_gain}\\
 \zeta_{\mathrm{fit}}(x)&:=\Phi(F(x))-\widehat B^\top\Phi(x),
 &\zeta_{\mathrm{proto}}(x)&:=\Phi(F(x))-B_*^\top\Phi(x).\label{eq:nlms_residuals}
\end{align}
Here $\zeta_{\mathrm{fit}}$ is the fitted model's prediction residual, and $\zeta_{\mathrm{proto}}$ is the residual of the class-mean matrix acting on soft coordinates. Equation~\eqref{eq_110820261955} gives
\begin{equation}
 \zeta_{\mathrm{proto}}(x)=B_*^\top\bigl(y(x)-\Phi(x)\bigr)+\xi_{\mathrm{res}}(x).
 \label{eq:prototype_residual}
\end{equation}
For the hypothetical additional update, let
\begin{align}
 \widehat B^+(x)&:=\widehat B+u_\beta(x)\Phi(x)\zeta_{\mathrm{fit}}(x)^\top,\label{eq:hypothetical_update}\\
 D_\beta(x)&:=\|B_*-\widehat B^+(x)\|_F^2-\|B_*-\widehat B\|_F^2.
 \label{eq:nlms_energy_change}
\end{align}
We use this additional update only for analysis; evaluation does not change the fitted matrix. The quantity $D_\beta(x)$ is negative if the update moves the matrix closer to $B_*$ and positive if it moves it farther away, using squared Frobenius distance. Define the expected absolute change $\varepsilon_{\mathrm{en}}^2$ and two constants depending on the step size:
\begin{equation}
 \varepsilon_{\mathrm{en}}^2:=\mathbb{E}_{x\sim\mathbb{P}_x}\!\left[|D_\beta(x)|\right],
 \qquad c_\beta:=1+\frac{\beta}{2+\beta},
 \qquad k_\beta:=\frac{1+\beta}{\sqrt{\beta(2+\beta)}}.
 \label{eq:energy_remainder_constants}
\end{equation}
All these quantities are finite for a fixed finite matrix $\widehat B$: the feature vectors and class-conditional means are bounded, and $0<\beta<1$.

\begin{proposition}[Exact NLMS parameter-error identity]\label{proposition_110820262057}
For $0<\beta<1$, any fitted matrix $\widehat B$, and every $x\in\mathcal X$, the additional update~\eqref{eq:hypothetical_update} satisfies
\begin{equation}
 \label{eq:nlms_exact_energy}
 D_\beta(x)=-u_\beta(x)\bigl(2-u_\beta(x)s(x)\bigr)\|\zeta_{\mathrm{fit}}(x)\|^2 +2u_\beta(x)\langle\zeta_{\mathrm{proto}}(x),\zeta_{\mathrm{fit}}(x)\rangle.
\end{equation}
Consequently,
\begin{equation}
 \|\zeta_{\mathrm{fit}}(x)\|\leq c_\beta\|\zeta_{\mathrm{proto}}(x)\|
          +k_\beta\sqrt{|D_\beta(x)|}.
 \label{eq:nlms_pointwise_bound}
\end{equation}
\end{proposition}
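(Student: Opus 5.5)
The plan is to prove the identity by direct expansion of the Frobenius norm, and then treat it as a quadratic inequality in $\|\zeta_{\mathrm{fit}}(x)\|$, using $s(x)=\|\Phi(x)\|^2\le1$ on the simplex to make the constants uniform in $x$.

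\textbf{The identity.} Fix $x$ and abbreviate
\[
\phi=\Phi(x),\quad u=u_\beta(x),\quad s=s(x),\quad \zeta=\zeta_{\mathrm{fit}}(x),\quad E=B_*-\widehat B.
\]
By~\eqref{eq:hypothetical_update}, $B_*-\widehat B^+(x)=E-u\phi\zeta^\top$. Expanding the squared Frobenius norm gives
\[
D_\beta(x)=-2u\langle E,\phi\zeta^\top\rangle_F+u^2\|\phi\|^2\|\zeta\|^2 .
\]
The cross term equals $\operatorname{tr}(E^\top\phi\zeta^\top)=\langle E^\top\phi,\zeta\rangle$. From the definitions~\eqref{eq:nlms_residuals},
\[
E^\top\phi=B_*^\top\phi-\widehat B^\top\phi=\zeta_{\mathrm{fit}}(x)-\zeta_{\mathrm{proto}}(x).
\]
Substituting, the coefficient of $\|\zeta\|^2$ is $-2u+u^2s=-u(2-us)$, and the remaining term is $+2u\langle\zeta_{\mathrm{proto}},\zeta\rangle$. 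This is exactly~\eqref{eq:nlms_exact_energy}. Only finite matrices and the real Euclidean inner product are involved, so no further care is needed here.

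\textbf{Reduction to a quadratic inequality.} Put $a:=u(2-us)$. Since $us=\beta s/(1+\beta s)<1$, we have $a>0$. Rearranging the identity and applying Cauchy--Schwarz gives
\[
a\|\zeta\|^2\le 2u\|\zeta_{\mathrm{proto}}\|\,\|\zeta\|+|D_\beta(x)|.
\]
Solving this quadratic in $\|\zeta\|$ and using $\sqrt{p+q}\le\sqrt p+\sqrt q$ yields
\[
\|\zeta\|\le \frac{2u}{a}\|\zeta_{\mathrm{proto}}\|+\sqrt{|D_\beta(x)|/a}.
\]

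\textbf{Uniform constants (main obstacle).} The remaining step is to bound $2u/a$ and $1/\sqrt a$ uniformly in $x$ by $c_\beta$ and $k_\beta$. The key fact is that $\Phi(x)\in\Delta_C$ forces $s\le1$.

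For the first constant, $u/a=1/(2-us)$ and $us\le\beta/(1+\beta)$, so
\[
\frac{2u}{a}\le\frac{2(1+\beta)}{2+\beta}=1+\frac{\beta}{2+\beta}=c_\beta .
\]

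For the second, a direct computation gives $1/a=(1+t)^2/\bigl(\beta(2+t)\bigr)$ with $t=\beta s$. The map $t\mapsto(1+t)^2/(2+t)$ has derivative $(1+t)(3+t)/(2+t)^2>0$, so it is increasing. Its value at $t=\beta$ gives $1/a\le(1+\beta)^2/\bigl(\beta(2+\beta)\bigr)=k_\beta^2$.

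Combining these two bounds with the previous display gives~\eqref{eq:nlms_pointwise_bound}.
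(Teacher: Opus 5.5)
Your proof is correct and follows the paper's argument in Appendix~F essentially step for step. You use the same Frobenius expansion via $(B_*-\widehat B)^\top\Phi(x)=\zeta_{\mathrm{fit}}(x)-\zeta_{\mathrm{proto}}(x)$, the same quadratic inequality in $\|\zeta_{\mathrm{fit}}(x)\|$ with $a=u(2-us)>0$, and the same use of $s\le1$ to obtain $c_\beta$ and $k_\beta$. The only cosmetic difference is that you bound $1/a$ through the increasing map $t\mapsto(1+t)^2/(2+t)$, whereas the paper bounds $a$ from below through the decreasing map $q\mapsto(2+q)/(1+q)^2$.
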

\begin{proof}
Appendix~F expands the squared Frobenius norm in~\eqref{eq:nlms_energy_change} to obtain the identity and then solves the resulting quadratic inequality for $\|\zeta_{\mathrm{fit}}(x)\|$.
\end{proof}

Proposition~\ref{proposition_110820262057} bounds the fitted prediction error using the error of $B_*$ on soft coordinates and the change in matrix distance caused by a hypothetical update. To average this bound under $\mathbb P_x$, define
\begin{equation}
 \begin{split}
 \varepsilon_{\mathrm{est}}^2
 &:=\mathbb{E}_{x\sim\mathbb{P}_x}\!\left[\|(B_*-\widehat B)^\top\Phi(x)\|^2\right],\\
 \sigma_{\mathrm{res}}^2
 &:=\mathbb{E}_{x\sim\mathbb{P}_x}\!\left[\|\xi_{\mathrm{res}}(x)\|^2\right].
 \end{split}
 \label{eq:estimation_residual_scales}
\end{equation}
The quantity $\varepsilon_{\mathrm{est}}$ is the root-mean-square difference between the predictions of $\widehat B$ and $B_*$ on soft coordinates. Its subscript does not mean that this difference comes only from finite sampling. Even with unlimited data, the best matrix for soft-coordinate prediction may differ from the class-mean matrix $B_*$. The quantity $\sigma_{\mathrm{res}}$ measures root-mean-square variation of successor coordinates about their reference-class means:
\begin{equation}
 \sigma_{\mathrm{res}}^2
 =\sum_{c=1}^C\pi_c\,\mathbb{E}_{x\sim\mathbb{P}_{x\mid y}}\!\left[\|z(x)-z_*^c\|^2\mid y_c=1\right].
 \label{eq:within_regime_successor_dispersion}
\end{equation}
For a vector-valued function $g$, we use
$\|g\|_{L^2(\mathbb{P}_x)}:=\bigl(\mathbb{E}_{x\sim\mathbb{P}_x}[\|g(x)\|^2]\bigr)^{1/2}$.
This notation expresses the population closure risk as
$R_{\mathbb{P}_x}(\Phi,\widehat B)^{1/2}=\|\zeta_{\mathrm{fit}}\|_{L^2(\mathbb{P}_x)}$.

\begin{proposition}[Upper bounds on one-step prediction error]\label{proposition_120820261330}
Let
\[
 a_\Phi^2:=\mathbb{E}_{x\sim\mathbb{P}_x}\!\left[\|y(x)-\Phi(x)\|^2\right].
\]
Then
\begin{align}
 R_{\mathbb{P}_x}(\Phi,\widehat B)^{1/2}
 &\leq\|B_*\|_2a_\Phi+\sigma_{\mathrm{res}}+\varepsilon_{\mathrm{est}},
 \label{eq:direct_closure_bound}\\
 R_{\mathbb{P}_x}(\Phi,\widehat B)^{1/2}
 &\leq c_\beta\bigl(\|B_*\|_2a_\Phi+\sigma_{\mathrm{res}}\bigr)
       +k_\beta\varepsilon_{\mathrm{en}}.
 \label{eq:energy_closure_bound}
\end{align}
The first bound adds the effect of deviation from hard labels, within-class successor variation, and the prediction difference between $\widehat B$ and $B_*$. The second scales the first two terms by $c_\beta$ and replaces the last with $k_\beta\varepsilon_{\mathrm{en}}$, derived from the hypothetical update. These are triangle-inequality bounds. The residual components can be correlated, so the total mean squared error is generally not the sum of their mean squared errors.
\end{proposition}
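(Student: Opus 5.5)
The plan is to prove both bounds with the $L^2(\mathbb P_x)$ triangle inequality (Minkowski for vector-valued functions), applied to two different splittings of the fitted residual $\zeta_{\mathrm{fit}}$. Throughout we use $R_{\mathbb P_x}(\Phi,\widehat B)^{1/2}=\|\zeta_{\mathrm{fit}}\|_{L^2(\mathbb P_x)}$.

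\emph{First bound.} Write
\[
\zeta_{\mathrm{fit}}(x)=\zeta_{\mathrm{proto}}(x)+(B_*-\widehat B)^\top\Phi(x).
\]
The second term has $L^2$ norm exactly $\varepsilon_{\mathrm{est}}$ by definition. For the first term, use the exact decomposition~\eqref{eq:prototype_residual}, $\zeta_{\mathrm{proto}}=B_*^\top(y-\Phi)+\xi_{\mathrm{res}}$, and apply Minkowski again. The residual part contributes $\|\xi_{\mathrm{res}}\|_{L^2}=\sigma_{\mathrm{res}}$. For the label-deviation part, the pointwise bound $\|B_*^\top(y-\Phi)\|\le\|B_*\|_2\|y-\Phi\|$ (using $\|B_*^\top\|_2=\|B_*\|_2$) gives $L^2$ norm at most $\|B_*\|_2a_\Phi$. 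Summing the three contributions yields~\eqref{eq:direct_closure_bound}. Along the way we record
\[
\|\zeta_{\mathrm{proto}}\|_{L^2}\le\|B_*\|_2a_\Phi+\sigma_{\mathrm{res}},
\]
since the second bound reuses it.

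\emph{Second bound.} Start from the pointwise inequality~\eqref{eq:nlms_pointwise_bound} of Proposition~\ref{proposition_110820262057}, which we take as given. Minkowski in $L^2(\mathbb P_x)$ then gives
\[
\|\zeta_{\mathrm{fit}}\|_{L^2}\le c_\beta\|\zeta_{\mathrm{proto}}\|_{L^2}+k_\beta\bigl\|\sqrt{|D_\beta|}\bigr\|_{L^2}.
\]
The last factor equals $(\mathbb E|D_\beta|)^{1/2}=\varepsilon_{\mathrm{en}}$. Substituting the recorded bound on $\|\zeta_{\mathrm{proto}}\|_{L^2}$ gives~\eqref{eq:energy_closure_bound}.

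The one step that needs care is checking that every function involved lies in $L^2$, so that Minkowski applies and all norms are finite. This holds because $\Phi$ and $y$ take values in bounded sets, $z_*^c\in\Delta_C$, $\widehat B$ is a fixed finite matrix, and $u_\beta\le\beta$; together these make $D_\beta$ bounded. Measurability follows from the measurability of $\Phi$, $c$ and $F$. Beyond this, no genuine obstacle is expected. In particular, no orthogonality or independence between the components is used, which is consistent with the paper's remark that these are pure triangle-inequality bounds.
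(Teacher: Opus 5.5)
Your proposal is correct and matches the paper's proof step for step. The paper writes the three-term decomposition $\zeta_{\mathrm{fit}}=(B_*-\widehat B)^\top\Phi+B_*^\top(y-\Phi)+\xi_{\mathrm{res}}$ in one line, where you first group the last two terms as $\zeta_{\mathrm{proto}}$; it then bounds the middle term pointwise by $\|B_*\|_2\|y-\Phi\|$ and gets the second bound by taking $L^2(\mathbb P_x)$ norms of the pointwise NLMS inequality and reusing the bound on $\|\zeta_{\mathrm{proto}}\|_{L^2(\mathbb P_x)}$, just as you do.
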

\begin{proof}
Subtracting $\widehat B^\top\Phi(x)$ from~\eqref{eq_110820261955} yields
\[
 \zeta_{\mathrm{fit}}(x)=(B_*-\widehat B)^\top\Phi(x)
       +B_*^\top\bigl(y(x)-\Phi(x)\bigr)+\xi_{\mathrm{res}}(x).
\]
The $L^2(\mathbb{P}_x)$ triangle inequality bounds the norm of this sum by the sum of the three component norms. The middle component is at most $\|B_*\|_2a_\Phi$, proving~\eqref{eq:direct_closure_bound}. Applying the same triangle inequality to~\eqref{eq:nlms_pointwise_bound}, and then using~\eqref{eq:prototype_residual}, gives~\eqref{eq:energy_closure_bound}. Appendix~G provides the complete derivation.
\end{proof}

Neither bound shows that a specified number of training epochs makes $\varepsilon_{\mathrm{est}}$ or $\varepsilon_{\mathrm{en}}$ small. In~\eqref{eq:energy_closure_bound}, the update term $\varepsilon_{\mathrm{en}}$ vanishes when $D_\beta(x)=0$ for $\mathbb{P}_x$-almost every state. A small signed mean of $D_\beta$ is insufficient because increases and decreases in squared matrix error can cancel. Taking the absolute value in~\eqref{eq:energy_remainder_constants} prevents this cancellation. Since $B_*$ contains unknown conditional means under $\mathbb P_x$, $\varepsilon_{\mathrm{en}}$ is not directly computable as a stopping criterion.

\subsection{Soft-to-hard assignment error and coordinate second moments}
\label{sec:association_fidelity}
The quantity $a_\Phi$ measures the root-mean-square distance between soft coordinates and one-hot reference labels. To bound it, we measure how far the weight assigned to the labelled class falls below one. This comparison with hard labels does not require replacing the soft coordinates by hard assignments.

\begin{definition}[Reference-label assignment error]\label{def:association_fidelity}
For each reference class $c$, let
\begin{equation}
 \pi_c:=\mathbb{P}_y(y_c=1),\qquad
 m_c:=\mathbb{E}_{x\sim\mathbb{P}_x}\!\left[(\Phi(x))_c^2\right].
 \label{eq:regime_population_mass}
\end{equation}
Under the standing assumption $\pi_c>0$, define
\begin{equation}
 \epsilon_c^2:=\mathbb{E}_{x\sim\mathbb{P}_{x\mid y}}\!\left[(1-(\Phi(x))_c)^2\mid y_c=1\right],
 \qquad A_{\mathrm{fid}}:=\sum_{c=1}^C\pi_c\epsilon_c^2.
 \label{eq:association_fidelity}
\end{equation}
Thus $0\leq\epsilon_c\leq1$ and $0\leq A_{\mathrm{fid}}\leq1$. The subscript ``fid'' refers to fidelity to the reference labels; smaller values mean better agreement. This is a continuous assignment loss, not the probability of an incorrect largest-coordinate label.
\end{definition}
The quantity $m_c$ averages the square of coordinate $c$ over all states. It is neither the variance of that coordinate nor the probability $\pi_c$ of class $c$. Even a constant positive coordinate has $m_c>0$, so a positive lower bound on $m_c$ does not show that the coordinate varies across states. $\epsilon_c$ is the root-mean-square shortfall of $\Phi_c(x)$ from $1$ among states labelled $c$. The reference-class map remains fixed when these quantities are evaluated. By the law of total expectation, $A_{\mathrm{fid}}$ can equivalently be written as \begin{equation}
 A_{\mathrm{fid}}=\mathbb{E}_{x\sim\mathbb{P}_x}\!\left[(1-(\Phi(x))_{c(x)})^2\right].
 \label{eq:association_fidelity_pointwise}
\end{equation}
\begin{theorem}[Assignment error, coordinate second moments, and approximation error]
\label{theorem_300320262124}
For any measurable simplex-valued soft-abstraction map under the standing reference-class assumptions, and for every reference class $c$,
\begin{equation}
 m_c\geq\pi_c(1-\epsilon_c)^2.
 \label{eq:fidelity_mass_bound}
\end{equation}
Equivalently, defining
\[
 d_c:=\pi_c(2\epsilon_c-\epsilon_c^2),
\]
the bound can be written as $m_c\geq\pi_c-d_c$, where $d_c\geq0$. Moreover,
\begin{equation}
 a_\Phi^2
 =\mathbb{E}_{x\sim\mathbb{P}_x}\!\left[\|\Phi(x)-y(x)\|^2\right]
 \leq2A_{\mathrm{fid}}.
 \label{eq:fidelity_approximation_bound}
\end{equation}
\end{theorem}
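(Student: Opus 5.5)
The plan has three steps: a conditional Jensen bound for the second-moment inequality, algebra for the reformulation with $d_c$, and a pointwise simplex inequality for the bound on $a_\Phi^2$.

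\textbf{Second-moment bound.} Start from the law of total expectation applied to $m_c=\mathbb E[(\Phi(x))_c^2]$. Drop the (nonnegative) contribution of states not labelled $c$, which gives $m_c\geq\pi_c\,\mathbb E[(\Phi(x))_c^2\mid y_c=1]$. Then apply Jensen's inequality, or equivalently nonnegativity of the conditional variance, to get
$\mathbb E[(\Phi(x))_c^2\mid y_c=1]\geq\bigl(\mathbb E[(\Phi(x))_c\mid y_c=1]\bigr)^2$.
To link this conditional mean to $\epsilon_c$, write $(\Phi(x))_c=1-(1-(\Phi(x))_c)$ and use the conditional Cauchy--Schwarz (or $L^2$) bound $\mathbb E[1-(\Phi(x))_c\mid y_c=1]\leq\epsilon_c$. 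This yields $\mathbb E[(\Phi(x))_c\mid y_c=1]\geq1-\epsilon_c\geq0$, where nonnegativity uses $\epsilon_c\leq1$. Squaring is therefore monotone, and we obtain $m_c\geq\pi_c(1-\epsilon_c)^2$.

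\textbf{Reformulation with $d_c$.} This is pure algebra: $\pi_c(1-\epsilon_c)^2=\pi_c-\pi_c(2\epsilon_c-\epsilon_c^2)=\pi_c-d_c$. Moreover $d_c=\pi_c\epsilon_c(2-\epsilon_c)\geq0$ because $0\leq\epsilon_c\leq1$.

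\textbf{Bound on $a_\Phi^2$.} This repeats the pointwise argument already sketched before Proposition~\ref{prop:population_obstruction}. Fix $x$ and write $c=c(x)$. Then
$\|\Phi(x)-y(x)\|^2=(1-\Phi_c(x))^2+\sum_{j\neq c}\Phi_j(x)^2$.
The off-label coordinates are nonnegative and sum to $1-\Phi_c(x)$, so $\sum_{j\neq c}\Phi_j^2\leq(\sum_{j\neq c}\Phi_j)^2=(1-\Phi_c)^2$. Hence $\|\Phi(x)-y(x)\|^2\leq2(1-\Phi_c(x))^2$. Taking expectations and using the pointwise form \eqref{eq:association_fidelity_pointwise} of $A_{\mathrm{fid}}$ gives \eqref{eq:fidelity_approximation_bound}.

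\textbf{Main obstacle.} The only delicate point is the sign step in the first part. Squaring the inequality $\mathbb E[\Phi_c\mid y_c=1]\geq1-\epsilon_c$ requires $1-\epsilon_c\geq0$, which holds because $\Phi_c\in[0,1]$ forces $\epsilon_c\leq1$. Measurability and integrability are automatic since all quantities are bounded.
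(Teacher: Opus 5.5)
Your proposal is correct and follows the paper's proof. You drop the contribution of states not labelled $c$ from $m_c$, show that the conditional $L^2$ norm of $\Phi_c$ is at least $1-\epsilon_c\geq0$, square, and then use the pointwise simplex bound $\sum_{k\neq c}\Phi_k^2\leq(1-\Phi_c)^2$ to get $a_\Phi^2\leq2A_{\mathrm{fid}}$. The only difference is cosmetic: the paper gets $(\mathbb E[\Phi_c^2\mid y_c=1])^{1/2}\geq1-\epsilon_c$ directly from the $L^2$ triangle inequality, whereas you go through the conditional first moment via Jensen and Cauchy--Schwarz, which gives the same bound.
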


\begin{proof}
Conditional on $y_c=1$, the $L^2$ triangle inequality gives
\[
1\leq\left(\mathbb E[(\Phi(x))_c^2\mid y_c=1]\right)^{1/2}+\epsilon_c.
\]
Since $\epsilon_c\leq1$, squaring and retaining the class-$c$ contribution to the unconditional second moment yields
$m_c\geq\pi_c(1-\epsilon_c)^2=\pi_c-d_c$, where
$d_c=\pi_c\epsilon_c(2-\epsilon_c)\geq0$.
For $y(x)=\mathrm e^c$, simplex membership gives
\[
\|\Phi(x)-\mathrm e^c\|^2
=(1-\Phi_c(x))^2+\sum_{k\ne c}\Phi_k(x)^2
\leq2(1-\Phi_c(x))^2.
\]
Taking expectations proves $a_\Phi^2\leq2A_{\mathrm{fid}}$.
\end{proof}
\begin{proposition}[Sensitivity to soft-to-hard assignment error]\label{prop:prototype_diameter}
Define the largest distance between class mean successor vectors by
\begin{equation}
 L_{\mathrm{proto}}:=\max_{1\leq c,k\leq C}\|z_*^c-z_*^k\|.
 \label{eq:prototype_diameter}
\end{equation}
Then
\begin{align}
 \left(\mathbb{E}_{x\sim\mathbb{P}_x}\!\left[\|B_*^\top(y(x)-\Phi(x))\|^2\right]\right)^{1/2}
 &\leq L_{\mathrm{proto}}\sqrt{A_{\mathrm{fid}}},
 \label{eq:prototype_transport_bound}\\
 L_{\mathrm{proto}}&\leq\min\left(\sqrt{2}\|B_*\|_2,\sqrt{2}\right).
 \label{eq:prototype_diameter_comparison}
\end{align}
For $C=1$, $L_{\mathrm{proto}}=0$.
\end{proposition}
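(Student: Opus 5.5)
The plan is to prove \eqref{eq:prototype_transport_bound} pointwise and then average. Fix $x$ with label $c=c(x)$ and write $\phi=\Phi(x)\in\Delta_C$. The key observation is that $y(x)-\phi$ sums to zero, so
\[
B_*^\top\bigl(y(x)-\phi\bigr)=z_*^c-\sum_k\phi_k z_*^k=\sum_{k\neq c}\phi_k\bigl(z_*^c-z_*^k\bigr).
\]
The triangle inequality then gives the pointwise bound
\[
\|B_*^\top(y(x)-\phi)\|\leq L_{\mathrm{proto}}\sum_{k\neq c}\phi_k=L_{\mathrm{proto}}\bigl(1-\phi_c\bigr).
\]
Squaring and taking the expectation under $\mathbb P_x$, and using the pointwise form \eqref{eq:association_fidelity_pointwise} of $A_{\mathrm{fid}}$, yields \eqref{eq:prototype_transport_bound}.

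For \eqref{eq:prototype_diameter_comparison}, bound $\|z_*^c-z_*^k\|$ in two ways. First, since $z_*^c=B_*^\top\mathrm e^c$, we have $z_*^c-z_*^k=B_*^\top(\mathrm e^c-\mathrm e^k)$, whose norm is at most $\|B_*\|_2\|\mathrm e^c-\mathrm e^k\|=\sqrt2\|B_*\|_2$ when $c\neq k$; the case $c=k$ is trivial. Second, both $z_*^c$ and $z_*^k$ lie in $\Delta_C$, as noted after \eqref{eq:successor_prototype}, and the diameter of the simplex is $\sqrt2$. To see this, expand $\|p-q\|^2=\|p\|^2+\|q\|^2-2p^\top q\leq\|p\|^2+\|q\|^2\leq 2$, using $p^\top q\geq0$ and $\|p\|^2\leq\|p\|_1^2=1$ for $p,q\in\Delta_C$. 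Taking the maximum over pairs gives the minimum of the two bounds. For $C=1$ there is a single class, so the maximum in \eqref{eq:prototype_diameter} ranges over $c=k=1$ only and $L_{\mathrm{proto}}=0$.

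I do not expect a real obstacle. The one step needing care is the first rewriting, which recognises that $y-\phi$ is a zero-sum vector so that $B_*^\top$ acts only through differences of class means. Bounding by $\|B_*\|_2 a_\Phi$ instead would lose the diameter factor.
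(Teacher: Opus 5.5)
Your proposal is correct and follows essentially the same route as the paper's proof. You use the zero-sum identity $B_*^\top(\mathrm e^c-\Phi(x))=\sum_{k\ne c}\Phi_k(x)(z_*^c-z_*^k)$ to get the pointwise bound $L_{\mathrm{proto}}(1-\Phi_c(x))$, then square and average. You bound the diameter by $\sqrt2\|B_*\|_2$ through $B_*^\top(\mathrm e^c-\mathrm e^k)$ and by the simplex diameter $\sqrt2$, which you verify explicitly where the paper only asserts it.
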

\begin{proof}
For $y(x)=\mathrm e^c$, simplex normalization gives
\begin{align}
B_*^\top(\mathrm e^c-\Phi(x))
&=\sum_{k\ne c}\Phi_k(x)(z_*^c-z_*^k),
\label{eq:prototype_difference_identity}\\
\|B_*^\top(\mathrm e^c-\Phi(x))\|
&\leq L_{\mathrm{proto}}(1-\Phi_c(x)).
\label{eq:prototype_pointwise_sensitivity}
\end{align}
Squaring and averaging proves~\eqref{eq:prototype_transport_bound}.
For $c\ne k$, the distance between means is at most
$\|B_*\|_2\|\mathrm e^c-\mathrm e^k\|=\sqrt2\|B_*\|_2$.
It is also at most the simplex diameter $\sqrt2$.
For $C=1$, both the assignment term and $L_{\mathrm{proto}}$ vanish.
\end{proof}
Equation~\eqref{eq:prototype_difference_identity} shows why differences between class mean successors matter: weight assigned to class $k$ instead of class $c$ changes the prediction by a multiple of $z_*^c-z_*^k$. Moving weight between classes with identical mean successors leaves the class-mean prediction unchanged. The diameter bound satisfies
$L_{\mathrm{proto}}\sqrt{A_{\mathrm{fid}}}\leq\|B_*\|_2\sqrt{2A_{\mathrm{fid}}}$. It therefore improves or matches the bound obtained by using $a_\Phi\leq\sqrt{2A_{\mathrm{fid}}}$ in Proposition~\ref{proposition_120820261330}. If $a_\Phi$ is known exactly, however, the original term $\|B_*\|_2a_\Phi$ may be smaller. Using~\eqref{eq:prototype_transport_bound} in the proof of Proposition~\ref{proposition_120820261330} gives
\begin{align}
 R_{\mathbb{P}_x}(\Phi,\widehat B)^{1/2}
 &\leq L_{\mathrm{proto}}\sqrt{A_{\mathrm{fid}}}+\sigma_{\mathrm{res}}+\varepsilon_{\mathrm{est}},
 \label{eq:fidelity_direct_risk}\\
 R_{\mathbb{P}_x}(\Phi,\widehat B)^{1/2}
 &\leq c_\beta\bigl(L_{\mathrm{proto}}\sqrt{A_{\mathrm{fid}}}+\sigma_{\mathrm{res}}\bigr)
       +k_\beta\varepsilon_{\mathrm{en}}.
 \label{eq:fidelity_energy_risk}
\end{align}
The term $L_{\mathrm{proto}}\sqrt{A_{\mathrm{fid}}}$ bounds the prediction change caused by replacing a hard label with soft weights. It is small when the weights approach the label or the class mean successors approach one another. The remaining terms measure within-class successor variation and either prediction differences from $B_*$ or the effect of the hypothetical update. 
\subsection{Reconstruction-score margins and closure-risk bounds}
\label{sec:kahm_geometric_fidelity}
The preceding results apply to any measurable simplex-valued representation. For KAHM coordinates, we now bound assignment loss using the reconstruction scores in~\eqref{eq_220720260633}. We then substitute this assignment-loss bound into the prediction-error bounds. This step states exactly how reconstruction quality enters those bounds. The reconstruction maps, reconstruction score, and normalized affinities follow prior KAHM constructions~\cite{kumar2024kahm,kumar2025operator,kumar2026semantic}. For $C\geq2$, define the positive affinity and its logarithmic margin over competing reference classes by
\begin{align}
 a_c(x)&:=1-\mathcal T_c(x)+\tau,
 &G_c(x)&:=\log a_c(x)-\max_{k\ne c}\log a_k(x).
 \label{eq:kahm_affinity_margin}
\end{align}
Because $\tau\leq a_c(x)\leq1+\tau$, the logarithms are well defined on the admissible state region. A positive margin $G_c(x)$ means that reference class $c$ has a smaller reconstruction score, and hence a larger affinity, than every competitor.
\begin{theorem}[Reconstruction-margin control of soft-assignment error]
\label{thm:kahm_margin_fidelity}
Fix the reference-class map, the KAHM reconstruction maps, $\tau>0$, and a finite exponent $\omega>0$. For $C\geq2$ and thresholds $g_c>0$, define
\begin{align}
 \eta_c&:=\mathbb{P}_{x\mid y}\!\left(G_c(x)<g_c\mid y_c=1\right),
 &\vartheta_c&:=\frac{(C-1)e^{-\omega g_c}}
 {1+(C-1)e^{-\omega g_c}},
 \label{eq:kahm_margin_parameters}\\
 b_c&:=(1-\eta_c)\vartheta_c^2+\eta_c,
 &A_{\mathrm{geom}}&:=\sum_{c=1}^C\pi_cb_c.
 \label{eq:kahm_geometric_fidelity}
\end{align}
Then, for every reference class $c$,
\begin{align}
 \epsilon_c^2&\leq b_c,
 &m_c&\geq\pi_c(1-\eta_c)(1-\vartheta_c)^2,
 \label{eq:kahm_margin_fidelity_mass}\\
 A_{\mathrm{fid}}&\leq A_{\mathrm{geom}}\leq1,
 &a_\Phi^2&\leq2A_{\mathrm{geom}}.
 \label{eq:kahm_margin_aggregate}
\end{align}
The probability $\eta_c$ measures how often states labelled $c$ have margin below $g_c$. For the remaining states, $1-\Phi_c(x)\leq\vartheta_c$. Thus $b_c$ averages the squared-loss bound $\vartheta_c^2$ where the margin holds and the bound $1$ where it fails. Weighting these class bounds by $\pi_c$ gives $A_{\mathrm{geom}}$. A known upper bound $\overline\eta_c\in[\eta_c,1]$ can replace the unknown $\eta_c$ in these bounds. For $C=1$, $\Phi\equiv1$
and $A_{\mathrm{fid}}=a_\Phi=0$; no competing-class margin is needed.
\end{theorem}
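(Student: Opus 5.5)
The plan is a pointwise bound on the margin event, followed by conditional averaging and an appeal to Theorem~\ref{theorem_300320262124}.

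\textbf{Pointwise bound.} Fix a state $x$ with $y_c(x)=1$ and $G_c(x)\geq g_c$. Write $\Phi_c(x)=a_c^\omega/\sum_j a_j^\omega$, where $a_j=a_j(x)$. Then
\[
1-\Phi_c(x)=\frac{\sum_{k\ne c}(a_k/a_c)^\omega}{1+\sum_{k\ne c}(a_k/a_c)^\omega}.
\]
The margin condition gives $\log a_k-\log a_c\leq -G_c(x)\leq -g_c$ for every $k\ne c$, so each ratio satisfies $(a_k/a_c)^\omega\leq e^{-\omega g_c}$. The map $t\mapsto t/(1+t)$ is increasing on $[0,\infty)$. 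Hence $1-\Phi_c(x)\leq (C-1)e^{-\omega g_c}/(1+(C-1)e^{-\omega g_c})=\vartheta_c$. Positivity of the affinities, guaranteed by $\tau>0$, makes the logarithms and ratios well defined.

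\textbf{Conditional averaging.} Condition on $y_c=1$ and split the expectation defining $\epsilon_c^2$ into the margin event, which has conditional probability $1-\eta_c$, and its complement, which has probability $\eta_c$. On the margin event, $(1-\Phi_c)^2\leq\vartheta_c^2$. On the complement, $0\leq 1-\Phi_c\leq1$ because $\Phi\in\Delta_C$, so $(1-\Phi_c)^2\leq 1$. This gives $\epsilon_c^2\leq b_c$.

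For the mass bound, use $(\Phi_c)^2\geq 0$ off the margin event and $\Phi_c\geq 1-\vartheta_c\geq0$ on it. Keep only the class-$c$, margin-event contribution to the unconditional second moment: $m_c\geq \pi_c\,\mathbb E[\Phi_c^2\mathbbm1_{\{G_c\geq g_c\}}\mid y_c=1]\geq \pi_c(1-\eta_c)(1-\vartheta_c)^2$. This step needs $\vartheta_c\leq1$, which holds since $\vartheta_c<1$. Measurability of $G_c$ follows from measurability of the scores, so $\eta_c$ is well defined.

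\textbf{Aggregation.} Weighting by $\pi_c$ gives $A_{\mathrm{fid}}=\sum_c\pi_c\epsilon_c^2\leq\sum_c\pi_cb_c=A_{\mathrm{geom}}$. Each $b_c$ is a convex combination of $\vartheta_c^2\leq1$ and $1$, so $b_c\leq1$, and $\sum_c\pi_c=1$ then gives $A_{\mathrm{geom}}\leq 1$. Theorem~\ref{theorem_300320262124} gives $a_\Phi^2\leq 2A_{\mathrm{fid}}\leq 2A_{\mathrm{geom}}$.

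\textbf{Remaining claims.} The replacement $\overline\eta_c\geq\eta_c$ is valid because $b_c=\vartheta_c^2+\eta_c(1-\vartheta_c^2)$ is nondecreasing in $\eta_c$, and $(1-\eta_c)(1-\vartheta_c)^2$ is nonincreasing in $\eta_c$. For $C=1$, the only coordinate equals $1$ and $y\equiv \mathrm e^1$, so both $A_{\mathrm{fid}}$ and $a_\Phi$ vanish.

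\textbf{Main obstacle.} The only delicate step is the pointwise softmax-type inequality. Its care points are the monotonicity of $t/(1+t)$ and the use of the maximum over competitors in $G_c$, which lets a single margin control all $C-1$ ratios at once. The rest is bookkeeping.
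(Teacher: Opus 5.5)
Your proposal is correct and follows essentially the same route as the paper. Both proofs write $1-\Phi_c$ through the competing-ratio sum and bound every ratio by $e^{-\omega g_c}$ on the margin event. Both then split the conditional expectation over that event and its complement, keep only the class-$c$ margin-event contribution for $m_c$, and finish with Theorem~\ref{theorem_300320262124} and monotonicity in $\eta_c$.
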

\begin{proof}
Normalization gives
\begin{equation}
\Phi_c(x)=\frac{1}{1+S_c(x)},\qquad
S_c(x)=\sum_{k\ne c}\left(\frac{a_k(x)}{a_c(x)}\right)^\omega.
\label{eq:kahm_competing_affinity_sum}
\end{equation}
On the event $\mathcal E_c:=\{G_c\geq g_c\}$, each competing affinity ratio $a_k/a_c$ is at most $e^{-g_c}$. Hence
$S_c\leq(C-1)e^{-\omega g_c}$ and $1-\Phi_c\leq\vartheta_c$.
On its complement, $0\leq1-\Phi_c\leq1$. Splitting the conditional expectation over these events gives
$\epsilon_c^2\leq(1-\eta_c)\vartheta_c^2+\eta_c=b_c$.
Also,
\[
m_c\geq\mathbb E[\Phi_c(x)^2\mathbbm1_{\{y_c=1\}\cap\mathcal E_c}]
\geq\pi_c(1-\eta_c)(1-\vartheta_c)^2.
\]
Averaging $b_c\in[0,1]$ and applying Theorem~\ref{theorem_300320262124} proves~\eqref{eq:kahm_margin_aggregate}.
Increasing $\eta_c$ enlarges the assignment-error upper bounds and decreases the second-moment lower bound. Replacing it by $\overline\eta_c\geq\eta_c$ therefore preserves all conclusions. For $C=1$, normalization gives $\Phi\equiv1$.
\end{proof}
\begin{corollary}[Reconstruction-based sufficient conditions for a margin]
\label{cor:kahm_reconstruction_margin}
Let $C\geq2$. For each reference class $c$, let
$0\leq\overline\eta_c\leq1$. Suppose that, conditional on $y_c=1$,
an event of probability at least $1-\overline\eta_c$ satisfies
\begin{equation}
 \mathcal T_c(x)\leq t_c^-<t_c^+
       \leq\min_{k\ne c}\mathcal T_k(x),
 \qquad 0\leq t_c^-<t_c^+\leq1.
 \label{eq:kahm_folding_separation}
\end{equation}
Then the hypotheses of Theorem~\ref{thm:kahm_margin_fidelity} are
satisfied with
\begin{equation}
 g_c=\log\frac{1-t_c^-+\tau}{1-t_c^++\tau}>0,
 \qquad
 \eta_c\leq\overline\eta_c.
 \label{eq:kahm_folding_to_log_margin}
\end{equation}

The score separation can be checked through reconstruction distances and angles. On the same event, require a small distance and angular error for the labelled class, and a sufficiently large distance for every competing class:
\begin{equation}
 \|x-\mathcal A_c(x)\|\leq r_c^-,
 \qquad
 \mathcal T^{\mathrm{Cos}}_c(x)\leq\upsilon_c,
 \qquad
 \min_{k\ne c}\|x-\mathcal A_k(x)\|\geq r_c^+,
 \label{eq:kahm_reconstruction_separation}
\end{equation}
where $r_c^-,r_c^+\geq0$ and $\upsilon_c\in[0,1]$, provided that
\begin{equation}
 t_c^-:=
 \sqrt{\frac{(1-e^{-r_c^-})^2+\upsilon_c^2}{2}}
 <t_c^+:=
 \frac{1-e^{-r_c^+}}{\sqrt{2}}.
 \label{eq:kahm_reconstruction_to_folding}
\end{equation}
\end{corollary}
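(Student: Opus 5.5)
The plan is to prove the two parts of Corollary~\ref{cor:kahm_reconstruction_margin} separately. In each part, the pointwise inequalities on the given event are converted into the margin event $\{G_c\geq g_c\}$ required by Theorem~\ref{thm:kahm_margin_fidelity}.

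\textbf{Step 1: score separation gives the margin.} Fix $c$ and a state $x$ in the event where~\eqref{eq:kahm_folding_separation} holds. Since $a_k(x)=1-\mathcal T_k(x)+\tau$ is decreasing in $\mathcal T_k$, we get $a_c(x)\geq 1-t_c^-+\tau$. For every $k\neq c$ we also get $a_k(x)\leq 1-t_c^++\tau$. Both quantities are positive, because $t_c^\pm\leq1$ and $\tau>0$. Taking logarithms and the maximum over $k\neq c$ gives $G_c(x)\geq g_c$, with $g_c$ as in~\eqref{eq:kahm_folding_to_log_margin}. Positivity of $g_c$ follows from $t_c^-<t_c^+$. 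Hence the event is contained in $\{G_c\geq g_c\}$. Taking conditional probabilities given $y_c=1$ yields $1-\eta_c\geq1-\overline\eta_c$, that is, $\eta_c\leq\overline\eta_c$. Theorem~\ref{thm:kahm_margin_fidelity} then applies, using its remark that $\overline\eta_c$ may replace $\eta_c$.

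\textbf{Step 2: reconstruction quantities give score separation.} On the event in~\eqref{eq:kahm_reconstruction_separation}, I use the fact that $r\mapsto1-e^{-r}$ is increasing. This gives $\mathcal T^{\mathrm{Euc}}_c(x)\leq 1-e^{-r_c^-}$. Together with $\mathcal T^{\mathrm{Cos}}_c(x)\leq\upsilon_c$, monotonicity of the combined score in each component gives $\mathcal T_c(x)\leq t_c^-$. Here the Euclidean and angular components both use the same selected submodel $\mathcal A_c(x)$, as specified in the construction.

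For each competitor $k\neq c$, I drop the nonnegative angular term: $\mathcal T_k(x)\geq \mathcal T^{\mathrm{Euc}}_k(x)/\sqrt2\geq(1-e^{-r_c^+})/\sqrt2=t_c^+$.

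The main subtlety is that $\mathcal T_k$ is a minimum over submodels, while $\mathcal A_k(x)$ is the reconstruction of the minimizing submodel. The lower bound is needed for the minimizing submodel, and the hypothesis is stated for exactly that $\mathcal A_k$, so $\mathcal T_k(x)=\mathcal T_{\mathbf X^k_{s_k(x)}}(x)\geq t_c^+$.

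It remains to check the ranges $0\leq t_c^-<t_c^+\leq1$. We have $t_c^+\leq1/\sqrt2$, $t_c^-\geq0$, and the strict inequality is assumed in~\eqref{eq:kahm_reconstruction_to_folding}. So~\eqref{eq:kahm_folding_separation} holds on the same event, and Step~1 finishes the proof. I expect no real obstacle beyond this care with the submodel selection and the domain $D_{\mathbf X}$.
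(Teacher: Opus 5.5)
Your proposal is correct and follows the paper's proof essentially step for step. You bound the affinities on the separation event to obtain $G_c\geq g_c>0$, hence $\eta_c\leq\overline\eta_c$, and then pass from reconstruction distances and angles to score separation using monotonicity of $1-e^{-r}$ and discarding the nonnegative angular term for competitors; your remarks that the hypothesis concerns the selected submodel's $\mathcal A_k(x)$ and that $t_c^+\leq 1/\sqrt2$ are correct details the paper leaves implicit.
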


\begin{proof}
On the stated score-separation event, $a_c\geq1-t_c^-+\tau$ and
$a_k\leq1-t_c^++\tau$ for every $k\ne c$. Thus
$G_c\geq\log((1-t_c^-+\tau)/(1-t_c^++\tau))>0$, and margin failure has conditional probability at most $\overline\eta_c$.
For the reconstruction-level condition, monotonicity of $1-e^{-r}$ gives
\[
\mathcal T_c(x)\leq
\sqrt{\bigl((1-e^{-r_c^-})^2+\upsilon_c^2\bigr)/2}=t_c^-,
\qquad
\mathcal T_k(x)\geq(1-e^{-r_c^+})/\sqrt2=t_c^+.
\]
The second inequality discards the nonnegative squared angular contribution. These inequalities establish the required score separation.
\end{proof}
Because the KAHM reconstruction $\mathcal A_k(x)$ lies in the affine hull of the reference samples, the competing reconstruction error admits a simple lower bound. Let $H_k$ denote the affine hull of the rows of $\mathbf X^k$. Then
\begin{equation}
 \|x-\mathcal A_k(x)\|
 \geq
 \operatorname{dist}(x,H_k)
 :=
 \inf_{h\in H_k}\|x-h\|,
 \label{eq:kahm_affine_hull_certificate}
\end{equation}
since $\mathcal A_k(x)\in H_k$. If $H_k=\mathbb R^n$, the distance in this bound is zero and gives no information about separation. A small reconstruction error for the labelled class is therefore insufficient by itself. Corollary~\ref{cor:kahm_reconstruction_margin} also requires lower bounds on the competing classes' reconstruction errors. Combining~\eqref{eq:kahm_margin_aggregate} with
\eqref{eq:fidelity_direct_risk}--\eqref{eq:fidelity_energy_risk} gives
\begin{align}
 T_{\mathrm{geom}}
 &:=L_{\mathrm{proto}}\sqrt{A_{\mathrm{geom}}}
    +\sigma_{\mathrm{res}},
 \label{eq:geometric_risk_core}\\
 R_{\mathbb{P}_x}(\Phi,\widehat B)^{1/2}
 &\leq
 \min\left(
 T_{\mathrm{geom}}+\varepsilon_{\mathrm{est}},\,
 c_\beta T_{\mathrm{geom}}
 +k_\beta\varepsilon_{\mathrm{en}}
 \right).
 \label{eq:geometry_closure_bound}
\end{align}
Larger reconstruction-score margins can reduce $A_{\mathrm{geom}}$. The resulting error contribution is multiplied by $L_{\mathrm{proto}}$, the largest distance between class mean successors. Small $A_{\mathrm{geom}}$ alone does not ensure small successor variation $\sigma_{\mathrm{res}}$ or small matrix-dependent terms $\varepsilon_{\mathrm{est}}$ and $\varepsilon_{\mathrm{en}}$.
\begin{remark}[Numerical score extensions]\label{rem:numerical_score_scope}
Theorem~\ref{thm:kahm_margin_fidelity} uses bounded scores and normalized affinities; it does not require a particular reconstruction algorithm. Write $\widetilde{\mathcal T}_c$ for the fixed measurable scores produced by the numerical rules in Section~\ref{sec:exp_protocol}, with values in $[0,1]$. Let $\widetilde\Phi$ use the normalization in~\eqref{eq_220720260633}. On a measurable set $\mathcal U$ where the numerical rule instead returns uniform coordinates, use that fallback value. The theorem applies with margins computed from $\widetilde{\mathcal T}_c$ and
\[
\widetilde\eta_c
:=\mathbb P_x\bigl(\{\widetilde G_c<g_c\}\cup\mathcal U\mid y_c=1\bigr).
\]
The original margin proof applies when the margin is large enough and no fallback is used. On all other states, the assignment loss is still at most one. This gives the same bound with $\widetilde\eta_c$ counting both margin failures and fallback use. The RKHS construction and class-mean risk bounds also apply, with every quantity defined using $\widetilde\Phi$. The affine-hull bound~\eqref{eq:kahm_affine_hull_certificate} needs a further check: the selected reconstruction must lie in the stated hull. Replacing a zero normalization denominator does not ensure this property. Nor can the probability of fallback use be assumed negligible.
\end{remark}

\subsection{Attainable closure error and successor variation}
\label{sec:attainable_closure}
Proposition~\ref{prop:population_obstruction} gives a lower error bound for any fixed simplex-valued coordinates. We combine it with the upper bound obtained from the class-mean predictor and then bound assignment loss using reconstruction margins. The result shows how KAHM reconstruction scores enter both bounds.
\begin{proposition}[Attainable error with bounded matrix norm]
\label{prop:attainable_closure}
For $\kappa\geq0$, define
\[
\mathcal D_\kappa(\Phi)
:=\inf_{\|B\|_2\leq\kappa}R_{\mathbb P_x}(\Phi,B)^{1/2},
\qquad (t)_+:=\max(t,0).
\]
If $\kappa\geq\|B_*\|_2$, then
\begin{equation}
(\sigma_{\mathrm{res}}-\kappa a_\Phi)_+
\leq\mathcal D_\kappa(\Phi)
\leq\sigma_{\mathrm{res}}+L_{\mathrm{proto}}\sqrt{A_{\mathrm{fid}}}.
\label{eq:two_sided_closure}
\end{equation}
For the reconstruction-score construction, this implies
\begin{equation}
(\sigma_{\mathrm{res}}-\kappa\sqrt{2A_{\mathrm{geom}}})_+
\leq\mathcal D_\kappa(\Phi)
\leq\sigma_{\mathrm{res}}+L_{\mathrm{proto}}\sqrt{A_{\mathrm{geom}}}.
\label{eq:two_sided_geometric_closure}
\end{equation}
The lower bounds hold for every norm limit $\kappa\geq0$. The upper bounds use $B_*$ as a feasible predictor and therefore require $\kappa\geq\|B_*\|_2$. Each row of $B_*$ belongs to $\Delta_C$, which gives $\|B_*\|_2\leq\sqrt C$. Thus $\kappa=\sqrt C$ always includes this predictor in the permitted matrix class.
\end{proposition}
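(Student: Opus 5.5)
The plan is to assemble the statement from three earlier results: Proposition~\ref{prop:population_obstruction} for the lower bound, the class-mean predictor $B_*$ together with Proposition~\ref{prop:prototype_diameter} for the upper bound, and Theorem~\ref{thm:kahm_margin_fidelity} to pass from $A_{\mathrm{fid}}$ and $a_\Phi$ to $A_{\mathrm{geom}}$. Here the successor is deterministic, $X^+=F(X)$ with $X\sim\mathbb P_x$. This is a special case of the joint law in Section~\ref{sec:population_obstruction}. The quantities $z_*^c$, $B_*$, $\sigma_{\mathrm{res}}$ and $a_\Phi$ defined in Section~\ref{sec:successor_prototypes} coincide with those in~\eqref{eq:joint_class_means}--\eqref{eq:joint_assignment_losses}, and $\mathcal D_\kappa$ agrees with~\eqref{eq:joint_risk_defect}. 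I will check this identification first, so that Proposition~\ref{prop:population_obstruction} applies verbatim.

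\textbf{Lower bound.} For every $\kappa\ge0$, Proposition~\ref{prop:population_obstruction} gives $\mathcal D_\kappa(\Phi)\ge(\sigma_{\mathrm{res}}-\kappa a_\Phi)_+$ directly; this uses neither $\kappa\ge\|B_*\|_2$ nor KAHM structure. For the geometric version, Theorem~\ref{thm:kahm_margin_fidelity} gives $a_\Phi^2\le2A_{\mathrm{geom}}$. Since $t\mapsto(\sigma_{\mathrm{res}}-\kappa t)_+$ is nonincreasing for $\kappa\ge0$, this yields $(\sigma_{\mathrm{res}}-\kappa\sqrt{2A_{\mathrm{geom}}})_+\le\mathcal D_\kappa(\Phi)$.

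\textbf{Upper bound.} When $\kappa\ge\|B_*\|_2$, the matrix $B_*$ is feasible, so $\mathcal D_\kappa(\Phi)\le R_{\mathbb P_x}(\Phi,B_*)^{1/2}=\|\zeta_{\mathrm{proto}}\|_{L^2(\mathbb P_x)}$. By~\eqref{eq:prototype_residual}, $\zeta_{\mathrm{proto}}=B_*^\top(y-\Phi)+\xi_{\mathrm{res}}$. The $L^2$ triangle inequality then bounds the norm by the norm of the first term plus $\sigma_{\mathrm{res}}$. The first term is at most $L_{\mathrm{proto}}\sqrt{A_{\mathrm{fid}}}$ by~\eqref{eq:prototype_transport_bound}, which proves~\eqref{eq:two_sided_closure}. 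Equivalently, this is~\eqref{eq:fidelity_direct_risk} with $\widehat B=B_*$, so that $\varepsilon_{\mathrm{est}}=0$. Replacing $A_{\mathrm{fid}}$ by $A_{\mathrm{geom}}$ uses $A_{\mathrm{fid}}\le A_{\mathrm{geom}}$ from~\eqref{eq:kahm_margin_aggregate} and monotonicity of the square root. The case $C=1$ needs no margin, because then $L_{\mathrm{proto}}=0$ and $a_\Phi=0$.

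\textbf{Remaining claim and main obstacle.} The last claim, $\|B_*\|_2\le\|B_*\|_F\le\sqrt C$, follows because each row $z_*^c\in\Delta_C$ has Euclidean norm at most its $1$-norm, which equals $1$. This was already noted in Proposition~\ref{prop:population_obstruction}. The only delicate point is bookkeeping rather than mathematics. The lower bound must be stated for all $\kappa$, while the upper bound needs feasibility of $B_*$. I also need to confirm that the standing assumption $\pi_c>0$ makes $B_*$ well defined under $\mathbb P_x$, and that $A_{\mathrm{geom}}$ is defined for the KAHM map under the same evaluation law. No new estimate is required.
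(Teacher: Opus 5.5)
Your proposal is correct and follows the paper's proof. The paper establishes the lower bound by the same conditional-centering and reverse-triangle argument as Proposition~\ref{prop:population_obstruction}; it repeats that argument inline for the deterministic law rather than citing it, which is the only difference from your write-up. The upper bound is \eqref{eq:fidelity_direct_risk} with the feasible matrix $B_*$, so that $\varepsilon_{\mathrm{est}}=0$, and the geometric version and the estimate $\|B_*\|_2\leq\|B_*\|_F\leq\sqrt C$ follow via Theorem~\ref{thm:kahm_margin_fidelity} exactly as you describe.
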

\begin{proof}
Conditional centering of $\xi_{\mathrm{res}}=z-B_*^\top y$ gives, for every $B$,
\[
\mathbb E\|z-B^\top y\|^2
=\sigma_{\mathrm{res}}^2+\mathbb E\|(B_*-B)^\top y\|^2
\geq\sigma_{\mathrm{res}}^2.
\]
The reverse triangle inequality therefore yields
$R_{\mathbb P_x}(\Phi,B)^{1/2}\geq
\sigma_{\mathrm{res}}-\|B\|_2a_\Phi$.
Taking the infimum over $\|B\|_2\leq\kappa$ proves the lower bound.
For the upper bound, use the feasible matrix $B_*$ and~\eqref{eq:fidelity_direct_risk}, whose discrepancy term is then zero.
Theorem~\ref{thm:kahm_margin_fidelity} gives~\eqref{eq:two_sided_geometric_closure}.
Finally, $\|B_*\|_2\leq\|B_*\|_F\leq\sqrt C$.
\end{proof}

If the norm limit includes $B_*$ and assignment error is small, both bounds are close to $\sigma_{\mathrm{res}}$. The best soft-coordinate predictor then has nearly the same root-mean-square error as the best hard-label predictor. This comparison uses one fixed representation; changing the representation can also change $\sigma_{\mathrm{res}}$. In particular, if
$\sigma_{\mathrm{res}}>\varepsilon+\kappa\sqrt{2A_{\mathrm{geom}}}$,
no linear prediction matrix with spectral norm at most $\kappa$ achieves root-mean-square error at most $\varepsilon$.
Conversely, $\sigma_{\mathrm{res}}+L_{\mathrm{proto}}\sqrt{A_{\mathrm{geom}}}\leq\varepsilon$
guarantees such a matrix exists whenever $\kappa\geq\|B_*\|_2$.
These conditions concern the minimum prediction error under the spectral-norm constraint. A fitted matrix satisfying that constraint can still have larger error. The norm limit is essential to the lower bound: it prevents arbitrarily large amplification of small coordinate differences between states with the same hard label.

\paragraph{An explicit KAHM example with an unavoidable error.}\label{ex:kahm_conflict}
Let $a=(1,1)^\top$, $b=(1,-1)^\top$, $d=(-1,1)^\top$, and $e=(-1,-1)^\top$.
Assign probability $1/4$ to each state in $\mathcal X=\{a,b,d,e\}$. Use reference datasets $\mathbf X^1=[a\ b]^\top$ and $\mathbf X^2=[d\ e]^\top$, with labels $c(a)=c(b)=1$ and $c(d)=c(e)=2$.
Use one KAHM per class. Within each class, the first state coordinate is constant and the second takes values $\pm1$. Algorithm~\ref{algorithm_encoding_matrix} therefore retains the vertical direction. The encoded samples are distinct and have covariance $2$, so both datasets are admissible.
Their Gaussian kernel matrices are both $K=\left[\begin{smallmatrix}1&k\\k&1\end{smallmatrix}\right]$, with $k=e^{-1}$. The two datasets differ only in the sign of their first column. This sign change leaves the squared residuals in the Appendix~A regularization rule unchanged, so both use the same $\lambda>0$.
At either encoded reference point, the regression outputs sum to $(1+k)/(1+k+\lambda)>0$, so normalization is well defined. At the point with kernel vector $(1,k)^\top$, the outputs are $(K+\lambda I)^{-1}(1,k)^\top$. Dividing by their sum gives the affine weights and the factor $q$ below. The reconstructions of all four states are
\[
q:=\frac{(1-k)(1+k+\lambda)}{(1+k)(1-k+\lambda)}\in(0,1),\qquad
\mathcal A_1(x)=(1,qx_2)^\top,\quad
\mathcal A_2(x)=(-1,qx_2)^\top.
\]
All four states and reconstructions are nonzero. For each state's labelled class and the other class, respectively, the reconstruction distances and cosine similarities are
\[
1-q,\quad\sqrt{4+(1-q)^2};
\qquad
\frac{1+q}{\sqrt{2(1+q^2)}},\quad
\frac{q-1}{\sqrt{2(1+q^2)}}.
\]
Thus Definition~\ref{def_space_folding_measure} gives fixed scores $T_{\mathrm{in}}<T_{\mathrm{out}}$, with the same strict margin at every state. For any fixed $\tau>0$, let
\[
\chi:=\frac{1-T_{\mathrm{in}}+\tau}{1-T_{\mathrm{out}}+\tau}>1,
\qquad p:=\frac{\chi^\omega}{1+\chi^\omega}.
\]
The resulting KAHM coordinates are $\Phi(a)=\Phi(b)=v=(p,1-p)^\top$ and $\Phi(d)=\Phi(e)=u=(1-p,p)^\top$.
As $\omega\to\infty$, $A_{\mathrm{fid}}=(1-p)^2\to0$: the soft coordinates approach the reference labels. Their variation does not vanish, since $\mathbb E\|\Phi-\mathbb E\Phi\|^2=(2p-1)^2/2\to1/2$. Under identity dynamics, the same coordinates permit exact closure with $B=I_2$. Now change only the dynamics to $F(a)=a$ and $F(b)=F(d)=F(e)=d$. States $a$ and $b$ have identical current coordinates but different successor coordinates. Any predictor using only $\Phi(x)$ must make the same prediction for both; their least-squares optimum is $(v+u)/2$. For states $d$ and $e$, the optimal prediction is $u$.
The matrix
\[
B^\top=\begin{pmatrix}1-p/2&(1-p)/2\\p/2&(1+p)/2\end{pmatrix}
\]
realizes these predictions and has $\|B\|_F^2=p^2-p+3/2\leq3/2$. Consequently,
\begin{equation}
\mathcal D_{\sqrt2}(\Phi)^2
=\inf_B R_{\mathbb P_x}(\Phi,B)
=\frac{\|v-u\|^2}{8}
=\frac{(2p-1)^2}{4}
=\sigma_{\mathrm{res}}^2.
\label{eq:analytic_class_conflict}
\end{equation}
This minimum squared error is two thirds of the optimal constant-predictor risk, $3(2p-1)^2/8$.
The reconstruction scores cannot distinguish $a$ from $b$, although their next states differ under this map $F$. Increasing $\omega$ sharpens the assignments but cannot resolve this ambiguity. As the coordinates approach hard labels, assignment error vanishes while minimum prediction risk tends to $1/4$.

\begin{table}[tbp]
\centering
\caption{One fixed KAHM representation, two dynamics maps. At $p=31/32$, both cases have assignment loss $1/1024$ and coordinate variance $225/512$. The norm limit is $\sqrt2$. The final column is the certificate limit as the number of i.i.d. evaluation pairs grows.}
\label{tab:analytic_obstruction}
\small
\begin{tabularx}{\linewidth}{@{}Yccc@{}}
\toprule
Dynamics & $\sigma_{\mathrm{res}}$ & Optimal RMSE & Certificate limit \\
\midrule
Identity: $F(x)=x$ & $0$ & $0$ & $0$ \\
Conflicting successors: $F(a)=a$; $F(b)=F(d)=F(e)=d$ & $15/32$ & $15/32$ & $13/32$ \\
\bottomrule
\end{tabularx}
\end{table}

\FloatBarrier
\paragraph{An informative certificate for the KAHM example.}
In the construction above, choose $\omega=\log(31)/\log\chi$, so $p=31/32$.
Then $A_{\mathrm{fid}}=1/1024$, $\sigma_{\mathrm{res}}^2=225/1024$, and the exact optimal root-mean-square error is $15/32$.
Consider a possible i.i.d. evaluation sample with $1024$ occurrences of each state. Then $M=4096$, $\widehat f=1/1024$, and $\widehat s=225/1024$. At $\delta=0.05$,~\eqref{eq:exclusion_certificate} gives $L_{\sqrt2,\delta}>0.30$.
The calculation uses specified balanced counts, not a simulated sample. The $95\%$ guarantee concerns repeated i.i.d. samples from the original law. It does not assert $95\%$ coverage after conditioning on balanced counts. Corollary~\ref{cor:certificate_consistency} gives $L_{\sqrt2,\delta}\to13/32$ almost surely for fixed $\delta$. The certificate therefore eventually shows that no prediction matrix with spectral norm at most $\sqrt2$ achieves any prescribed tolerance below $13/32$. The limiting bound falls below the true minimum $15/32$ by $1/16=\sqrt2\sqrt{2A_{\mathrm{fid}}}$. The bound subtracts this amount because soft coordinates could, in general, improve on hard-label prediction. Here they cannot distinguish $a$ from $b$, so no such improvement occurs. More data eliminate the confidence correction but leave this subtraction unchanged. Table~\ref{tab:analytic_obstruction} displays the resulting gap between the certificate limit and minimum prediction error.

\begin{remark}[One-step risk for stochastic successors]\label{rem:stochastic_risk}
Let $(X,X^+)$ have a fixed joint evaluation law, $Z=\Phi(X^+)$, and $m(X)=\mathbb E[Z\mid X]$. The vector $m(X)$ is the mean next-state coordinate vector given the full current state. The prediction error has two parts: error in predicting this mean and random variation around it. The standard variance decomposition gives \cite{colbrook2024stochastic}
\[
\mathbb E\|Z-B^\top\Phi(X)\|^2
=\mathbb E\|m(X)-B^\top\Phi(X)\|^2+\mathbb E\|Z-m(X)\|^2.
\]
Proposition~\ref{prop:attainable_closure} also holds for this joint-law risk: define $z_*^c=\mathbb E[Z\mid c(X)=c]$ and
\[
\sigma_{\mathrm{res}}^2
=\mathbb E\|m(X)-\mathbb E[m(X)\mid c(X)]\|^2
 +\mathbb E\|Z-m(X)\|^2.
\]
The same conditional-centering argument proves both bounds. Here successor variation has two sources: different states in one class may have different conditional mean successors, and the successor may vary randomly even at a fixed state. This decomposition explains the stochastic experiments' one-step errors; it does not extend the deterministic spectral identities.
\end{remark}

\subsection{Independent evaluation and finite-sample bounds}
\label{sec:fidelity_evaluation}
Let $\mathcal F_{\mathrm{fit}}$ collect all information used to construct the representation, fit the matrix, and select the model, including random choices. Formally, it is the $\sigma$-algebra generated by those data and choices. Conditioning on it lets us treat the reference classes, coordinates, and fitted matrix as fixed. Draw evaluation states $x'^{1},\ldots,x'^{M}$ independently from $\mathbb P_x$, independently of $\mathcal F_{\mathrm{fit}}$, and define $y'^{i}:=y(x'^{i})$. The evaluation sample
\begin{equation}
 \mathcal V
 :=
 \{(x'^{i},y'^{i})\}_{i=1}^{M}
 \sim
 (\mathbb P_{x,y})^M
 \label{eq:fidelity_evaluation_sample}
\end{equation}
consists, conditional on $\mathcal F_{\mathrm{fit}}$, of $M$
independent and identically distributed state--reference-label pairs. For $y'^{i}=\mathrm e^{c_i}$, define
\begin{equation}
 \widehat A_{\mathrm{fid}}:=\frac1M\sum_{i=1}^M(1-(\Phi(x'^{i}))_{c_i})^2,
 \qquad  U_\delta
:=
\min\left(
1,\,
\widehat A_{\mathrm{fid}}
+\sqrt{\frac{\log(1/\delta)}{2M}}
\right),\quad 0<\delta<1.
 \label{eq:fidelity_confidence_bound}
\end{equation}
Conditional on $\mathcal F_{\mathrm{fit}}$, the summands in
$\widehat A_{\mathrm{fid}}$ are independent, take values in $[0,1]$,
and have common mean $A_{\mathrm{fid}}$. Hoeffding's
inequality~\cite{hoeffding1963} therefore gives
\begin{equation}
 \mathbb{P}\!\left(
 A_{\mathrm{fid}}\leq U_\delta
 \,\middle|\,\mathcal F_{\mathrm{fit}}
 \right)\geq1-\delta.
 \label{eq:fidelity_confidence_event}
\end{equation}
The probability is over repeated evaluation samples, with the fitted objects held fixed. We call $A_{\mathrm{fid}}\leq U_\delta$ the \emph{confidence event}. The statistic $\widehat A_{\mathrm{fid}}$ estimates a single loss averaged across all classes. We therefore apply one confidence bound to this average, without a separate bound for each class. Independence from the fitting and model-selection data alone is not sufficient for
this confidence radius: the $M$ evaluation pairs must also be mutually independent. Appendix~H gives the corresponding formulation for independent trajectories whose observations may be dependent within each trajectory.
\begin{theorem}[Closure-risk bounds with a finite-sample assignment term]
\label{theorem_one_step_closure_risk_bound}
Assume deterministic transitions and a measurable simplex-valued map $\Phi$, with $\pi_c>0$ for every retained reference class. Let $0<\beta,\delta<1$. Conditional on $\mathcal F_{\mathrm{fit}}$, let $\mathcal V$ contain $M$ i.i.d. state--reference-label pairs with law $\mathbb P_{x,y}$, as in~\eqref{eq:fidelity_evaluation_sample}. With conditional probability at least $1-\delta$ over this evaluation sample, the following bound holds simultaneously for all $\gamma,\eta>0$:
\begin{equation}
 R_{\mathbb{P}_x}(\Phi,\widehat B)\leq{}
 (1+\eta)c_\beta^2
 \left[
 (1+\gamma)L_{\mathrm{proto}}^2U_\delta
 +(1+\gamma^{-1})\sigma_{\mathrm{res}}^2
 \right]+(1+\eta^{-1})k_\beta^2\varepsilon_{\mathrm{en}}^2.
 \label{eq:finite_sample_risk_family}
\end{equation}
\end{theorem}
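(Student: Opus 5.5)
The bound~\eqref{eq:finite_sample_risk_family} follows from the deterministic energy-form bound~\eqref{eq:fidelity_energy_risk}, squared with two elementary inequalities, with the confidence event~\eqref{eq:fidelity_confidence_event} inserted at the end. Conditional on $\mathcal F_{\mathrm{fit}}$, the objects $\Phi$, $\widehat B$, $B_*$, $L_{\mathrm{proto}}$, $\sigma_{\mathrm{res}}$, and $\varepsilon_{\mathrm{en}}$ are fixed population quantities. Only $U_\delta$ depends on the evaluation sample $\mathcal V$.

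First, I will use Proposition~\ref{proposition_110820262057} together with~\eqref{eq:prototype_residual} and~\eqref{eq:prototype_transport_bound}. The $L^2(\mathbb P_x)$ triangle inequality then gives, deterministically,
\[
R_{\mathbb P_x}(\Phi,\widehat B)^{1/2}\leq c_\beta\bigl(L_{\mathrm{proto}}\sqrt{A_{\mathrm{fid}}}+\sigma_{\mathrm{res}}\bigr)+k_\beta\varepsilon_{\mathrm{en}},
\]
which is~\eqref{eq:fidelity_energy_risk}. The term $k_\beta\varepsilon_{\mathrm{en}}$ arises as the $L^2$ norm of $k_\beta\sqrt{|D_\beta|}$, since $\mathbb E|D_\beta|=\varepsilon_{\mathrm{en}}^2$.

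Second, I will square this bound using the inequality $(s+t)^2\leq(1+\eta)s^2+(1+\eta^{-1})t^2$, valid for all $\eta>0$. This follows from $2st\leq\eta s^2+\eta^{-1}t^2$. I apply it once to split off the $k_\beta\varepsilon_{\mathrm{en}}$ term with parameter $\eta$. I apply it again inside the bracket, with parameter $\gamma$, to split $L_{\mathrm{proto}}\sqrt{A_{\mathrm{fid}}}$ from $\sigma_{\mathrm{res}}$. The result is
\[
R_{\mathbb P_x}\leq(1+\eta)c_\beta^2\bigl[(1+\gamma)L_{\mathrm{proto}}^2A_{\mathrm{fid}}+(1+\gamma^{-1})\sigma_{\mathrm{res}}^2\bigr]+(1+\eta^{-1})k_\beta^2\varepsilon_{\mathrm{en}}^2,
\]
which holds for every $\gamma,\eta>0$ with no probability involved.

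Third, on the confidence event $A_{\mathrm{fid}}\leq U_\delta$, I replace $A_{\mathrm{fid}}$ by $U_\delta$. This is legitimate because its coefficient $(1+\eta)c_\beta^2(1+\gamma)L_{\mathrm{proto}}^2$ is nonnegative. The event has conditional probability at least $1-\delta$ by Hoeffding's inequality~\eqref{eq:fidelity_confidence_event}. That step is justified because, given $\mathcal F_{\mathrm{fit}}$, the summands of $\widehat A_{\mathrm{fid}}$ are i.i.d. in $[0,1]$ with mean $A_{\mathrm{fid}}$, by~\eqref{eq:association_fidelity_pointwise}.

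Simultaneity over $\gamma$ and $\eta$ is immediate, since the event does not depend on these parameters.

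The main point needing care is the first step. Pointwise inequality~\eqref{eq:nlms_pointwise_bound} must be integrated correctly: take $L^2$ norms of each side and use Minkowski's inequality, rather than averaging squared terms. The deterministic-transition assumption is also needed here, because the NLMS identity and the decomposition~\eqref{eq_110820261955} are stated for $z(x)=\Phi(F(x))$.
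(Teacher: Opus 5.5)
Your proof is correct and follows the paper's argument: the energy-form bound \eqref{eq:fidelity_energy_risk}, two applications of $(a+b)^2\leq(1+t)a^2+(1+t^{-1})b^2$ (first with $t=\eta$, then with $t=\gamma$), and the Hoeffding confidence event $A_{\mathrm{fid}}\leq U_\delta$, which does not depend on $\gamma$ or $\eta$. The only difference is cosmetic: the paper substitutes $U_\delta$ before squaring, whereas you substitute after squaring; the two are equivalent because the coefficient of $A_{\mathrm{fid}}$ is nonnegative.
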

\begin{proof}
On the event $A_{\mathrm{fid}}\leq U_\delta$, which has conditional probability at least $1-\delta$,~\eqref{eq:fidelity_energy_risk} gives
\[
R_{\mathbb P_x}(\Phi,\widehat B)^{1/2}
\leq c_\beta\bigl(L_{\mathrm{proto}}\sqrt{U_\delta}+\sigma_{\mathrm{res}}\bigr)
+k_\beta\varepsilon_{\mathrm{en}}.
\]
Apply $(a+b)^2\leq(1+t)a^2+(1+t^{-1})b^2$ first to the two outer terms with $t=\eta$, then inside the parentheses with $t=\gamma$. This gives~\eqref{eq:finite_sample_risk_family}. The confidence event depends on the assignment-loss estimate, not on $\gamma$ or $\eta$. The bound therefore holds for all their positive values on that event, even if we choose them after seeing the estimate.
\end{proof}
\begin{result}[Compact one-step closure risk bound]
\label{result_210820160745}
Under the assumptions of
Theorem~\ref{theorem_one_step_closure_risk_bound}, define
\begin{align}
 T_\delta
 &:=L_{\mathrm{proto}}\sqrt{U_\delta}
   +\sigma_{\mathrm{res}},
 \label{eq:risk_bound_core}\\
 \mathcal B_\delta
 &:=\min\left(
 T_\delta+\varepsilon_{\mathrm{est}},\,
 c_\beta T_\delta+k_\beta\varepsilon_{\mathrm{en}}
 \right).
 \label{eq:risk_bound_radius}
\end{align}
Then, conditional on $\mathcal F_{\mathrm{fit}}$, with probability at
least $1-\delta$ over the independent evaluation sample,
\begin{equation}
 R_{\mathbb{P}_x}(\Phi,\widehat B)
 \leq\mathcal B_\delta^2.
 \label{eq:repaired_risk_bound}
\end{equation}
Moreover,
\begin{equation}
 \inf_{\gamma,\eta>0}
 \Bigl\{
 (1+\eta)c_\beta^2
 \bigl[
 (1+\gamma)L_{\mathrm{proto}}^2U_\delta
 +(1+\gamma^{-1})\sigma_{\mathrm{res}}^2
 \bigr]
 +(1+\eta^{-1})k_\beta^2\varepsilon_{\mathrm{en}}^2
 \Bigr\}
 =
 \left(
 c_\beta T_\delta+k_\beta\varepsilon_{\mathrm{en}}
 \right)^2.
 \label{eq:optimized_energy_risk}
\end{equation}
\end{result}
\begin{proof}
On the event $A_{\mathrm{fid}}\leq U_\delta$, substituting into
\eqref{eq:fidelity_direct_risk}--\eqref{eq:fidelity_energy_risk} gives both bounds defining $\mathcal B_\delta$. Their minimum proves~\eqref{eq:repaired_risk_bound}.
For~\eqref{eq:optimized_energy_risk}, apply
$\inf_{t>0}[(1+t)a+(1+t^{-1})b]=(\sqrt a+\sqrt b)^2$
first to the $\gamma$ terms and then to the $\eta$ terms. Appendix~I includes the cases $a=0$ or $b=0$.
The confidence event does not depend on these auxiliary parameters, so taking the infima preserves its probability guarantee.
\end{proof}
Result~\ref{result_210820160745} gives a confidence bound on assignment loss, but still uses unknown population class means and matrix-dependent terms. To obtain a computable upper bound on prediction error, we estimate class means before evaluation and hold them fixed. Independent evaluation pairs then bound each term in the error decomposition.

\begin{proposition}[Risk bound from independent evaluation pairs]
\label{prop:evaluable_risk}
Fix $\Phi$, $\widehat B$, and $A^\top=[a^1\ \cdots\ a^C]$ with $a^c\in\Delta_C$, using only construction, fitting, and selection data. Each column $a^c$ of $A^\top$ may be an estimate, obtained from training data, of the mean successor coordinates for class $c$. Draw $M$ independent copies $(X_i,X_i^+)$ of $(X,X^+)$ from a fixed joint evaluation law, independently of the fitted objects. Set $Z=\Phi(X^+)$, $\phi_i=\Phi(X_i)$, $Z_i=\Phi(X_i^+)$, and $c_i=c(X_i)$. Define
\begin{align*}
L_A&:=\max_{c,k}\|a^c-a^k\|,
&d_A&:=\max_c\|a^c-\widehat B^\top\mathrm e^c\|,\\
\widehat f&:=M^{-1}\sum_i(1-(\phi_i)_{c_i})^2,
&\widehat v&:=M^{-1}\sum_i\|Z_i-a^{c_i}\|^2,\\
\widehat e&:=M^{-1}\sum_i\|(A-\widehat B)^\top\phi_i\|^2,
&\alpha_\delta&:=\sqrt{\frac{\log(3/\delta)}{2M}}.
\end{align*}
For $0<\delta<1$, put
\begin{align*}
U_f&:=\min(1,\widehat f+\alpha_\delta),
&U_v&:=\min(2,\widehat v+2\alpha_\delta),\\
U_e&:=\min(d_A^2,\widehat e+d_A^2\alpha_\delta).
\end{align*}
Conditional on the fitted objects, with probability at least $1-\delta$,
\begin{equation}
\left(\mathbb E\|Z-\widehat B^\top\Phi(X)\|^2\right)^{1/2}
\leq L_A\sqrt{U_f}+\sqrt{U_v}+\sqrt{U_e}.
\label{eq:evaluable_risk}
\end{equation}
For $X^+=F(X)$, the left-hand side is $R_{\mathbb P_x}(\Phi,\widehat B)^{1/2}$.
\end{proposition}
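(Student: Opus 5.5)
The plan is to decompose the fitted residual around the fixed surrogate matrix $A$, exactly as in~\eqref{eq_110820261955}, and then bound each component in $L^2$ by a one-sided Hoeffding bound. Write $y=\mathrm e^{c(X)}$ and
\[
Z-\widehat B^\top\phi=(Z-A^\top y)+A^\top(y-\phi)+(A-\widehat B)^\top\phi .
\]
The $L^2$ triangle inequality then bounds the left side of~\eqref{eq:evaluable_risk} by the sum of three root-mean-square norms. No conditional-centering property of $A$ is needed, because we only use the triangle inequality and never orthogonality. This is why an arbitrary estimated $A$ with simplex columns suffices.

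For the middle term, we repeat the argument of Proposition~\ref{prop:prototype_diameter} with $a^c$ in place of $z_*^c$. Simplex normalization gives
\[
A^\top(\mathrm e^c-\phi)=\sum_{k\ne c}\phi_k(a^c-a^k),
\]
so its norm is at most $L_A(1-\phi_c)$. Its $L^2$ norm is therefore at most $L_A\sqrt{A_{\mathrm{fid}}}$. The first term has $L^2$ norm $\sqrt{v}$ with $v:=\mathbb E\|Z-a^{c(X)}\|^2$. The third term has $L^2$ norm $\sqrt{e}$ with $e:=\mathbb E\|(A-\widehat B)^\top\phi\|^2$.

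Next we bound the ranges of the three summands in order to apply Hoeffding. Conditional on the fitted objects, the summands defining $\widehat f$, $\widehat v$ and $\widehat e$ are i.i.d. with means $A_{\mathrm{fid}}$, $v$ and $e$.
\begin{itemize}
\item The first summand lies in $[0,1]$.
\item Since $Z_i$ and $a^{c_i}$ both lie in $\Delta_C$, $\|Z_i-a^{c_i}\|^2\le 2$, so the second summand lies in $[0,2]$.
\item For the third, write $(A-\widehat B)^\top\phi=\sum_c\phi_c\,(a^c-\widehat B^\top\mathrm e^c)$, which is a convex combination of the column differences. Convexity of the norm gives $\|(A-\widehat B)^\top\phi\|\le d_A$, so this summand lies in $[0,d_A^2]$.
\end{itemize}
One-sided Hoeffding with failure probability $\delta/3$ for each statistic gives upper deviations $\alpha_\delta$, $2\alpha_\delta$ and $d_A^2\alpha_\delta$ respectively. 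The truncations at $1$, $2$ and $d_A^2$ are valid because each mean is trivially bounded by its range.

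A union bound over the three events yields probability at least $1-\delta$ that $A_{\mathrm{fid}}\le U_f$, $v\le U_v$ and $e\le U_e$ hold simultaneously. Substituting these into the triangle-inequality bound proves~\eqref{eq:evaluable_risk}. When $X^+=F(X)$, the left side is $R_{\mathbb P_x}(\Phi,\widehat B)^{1/2}$ by definition.

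The only step needing care is the range bound on the third summand, namely the convexity argument showing that it is bounded by $d_A^2$ rather than by a spectral-norm quantity. Everything else is routine.
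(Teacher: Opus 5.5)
Your proposal is correct and follows the paper's proof essentially step for step: the same three-term decomposition around $A$, the bound $L_A(1-\phi_{c(X)})$ on the label-replacement term as in Proposition~\ref{prop:prototype_diameter}, the summand ranges $[0,1]$, $[0,2]$, $[0,d_A^2]$ (the last by convexity), Hoeffding at failure probability $\delta/3$ per loss with a union bound, and the $L^2$ triangle inequality. The paper additionally treats the degenerate case $d_A=0$: there the Hoeffding range is zero, but the third loss vanishes identically, so $e=U_e=0$ holds trivially.
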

\begin{proof}
Write $y=\mathrm e^{c(X)}$, $\phi=\Phi(X)$, and decompose
\[
Z-\widehat B^\top\phi
=A^\top(y-\phi)+(Z-A^\top y)+(A-\widehat B)^\top\phi.
\]
The first component has norm at most $L_A(1-\phi_{c(X)})$, as in Proposition~\ref{prop:prototype_diameter}.
The summands defining $\widehat f,\widehat v,\widehat e$ lie in $[0,1]$, $[0,2]$, and $[0,d_A^2]$, respectively; the last bound follows by convexity of the norm. Hoeffding's inequality, with failure probability $\delta/3$ for each loss, bounds their expectations by $U_f,U_v,U_e$ simultaneously with probability at least $1-\delta$. If $d_A=0$, the last loss vanishes identically. The $L^2$ triangle inequality proves~\eqref{eq:evaluable_risk}.
\end{proof}
All terms are computable from the fixed fitted objects and evaluation pairs. The term $L_A\sqrt{U_f}$ bounds the prediction change caused by replacing hard labels with soft coordinates. The term $\sqrt{U_v}$ bounds root-mean-square deviations of successor coordinates from estimated class means, including class-mean estimation error. The term $\sqrt{U_e}$ bounds the root-mean-square prediction difference between $\widehat B$ and $A$. For independent trajectories, replace each loss by its within-trajectory average and $M$ by the number of trajectories, using the evaluation law specified in Appendix~H. The present experiments do not evaluate this bound.

\begin{corollary}[Koopman candidate residual bound]
\label{cor:repaired_koopman_bound}
On the confidence event in~\eqref{eq:fidelity_confidence_event}, let
$(\lambda,w)\in\mathbb{C}\times
(\mathbb{C}^C\setminus\{\mathbf 0_C\})$ satisfy
\[
 \widehat B w=\lambda w,
\]
and let $f_{w}$ be defined by~\eqref{eq_220820261131}. Then
\begin{equation}
 \mathbb{E}_{x\sim\mathbb{P}_x}
 \!\left[
 \left|
 f_{w}(F(x))
 -\lambda f_{w}(x)
 \right|^2
 \right]
 \leq
 \|w\|^2\mathcal B_\delta^2.
 \label{eq:repaired_eigenfunction_bound}
\end{equation}
Moreover, $f_{w}$ is nonzero if and only if
$P_\Phi^{\mathbb C}(w)\ne0$. The left-hand side of
\eqref{eq:repaired_eigenfunction_bound} is the squared
$L^2(\mathbb P_x)$ residual of the Koopman eigenrelation
\[
 f_{w}\circ F=\lambda f_{w}.
\]
\end{corollary}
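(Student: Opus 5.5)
The plan is to transfer the vector-valued closure residual to the scalar function $f_w$ by a single pointwise Cauchy--Schwarz step, and then invoke Result~\ref{result_210820160745}. With $f_w(x)=w^\top\Phi(x)$ and $\widehat Bw=\lambda w$, we have for every $x\in\mathcal X$
\[
f_w(F(x))-\lambda f_w(x)
=w^\top\Phi(F(x))-(\widehat Bw)^\top\Phi(x)
=w^\top\bigl(\Phi(F(x))-\widehat B^\top\Phi(x)\bigr)
=w^\top\zeta_{\mathrm{fit}}(x).
\]
The bilinear pairing $w^\top u$ (no conjugation) satisfies $|w^\top u|\leq\|w\|\,\|u\|$ for complex $w$ and real $u$, since $|w^\top u|=|\langle u,\overline w\rangle|$ and $\|\overline w\|=\|w\|$. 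Hence $|f_w(F(x))-\lambda f_w(x)|^2\leq\|w\|^2\|\zeta_{\mathrm{fit}}(x)\|^2$ pointwise.

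Next, I would take expectations under $\mathbb P_x$ to obtain $\|w\|^2R_{\mathbb P_x}(\Phi,\widehat B)$ as an upper bound. On the confidence event $A_{\mathrm{fid}}\leq U_\delta$ of~\eqref{eq:fidelity_confidence_event}, Result~\ref{result_210820160745} gives $R_{\mathbb P_x}(\Phi,\widehat B)\leq\mathcal B_\delta^2$, which proves~\eqref{eq:repaired_eigenfunction_bound}. No further probabilistic argument is needed, because the event does not depend on $(\lambda,w)$. Measurability and finiteness follow from the boundedness of $\Phi$.

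For the nonvanishing claim, I would use Proposition~\ref{proposition_070720261027}, which gives $f_w=f_{P_\Phi^{\mathbb C}(w)}$ and $\|f_w\|_{\mathcal H_\Phi^{\mathbb C}}=\|P_\Phi^{\mathbb C}(w)\|$. Since the norm in the complexified RKHS vanishes only for the zero function, $f_w\neq0$ if and only if $P_\Phi^{\mathbb C}(w)\neq0$. The direct version of this argument goes as follows. If $P_\Phi^{\mathbb C}(w)=0$, then both the real and imaginary parts of $w$ lie in $V_\Phi^\perp$, so $f_w\equiv0$. Conversely, if $f_w\equiv0$, then the real and imaginary parts of $w$ annihilate every $\Phi(x)$ and therefore lie in $V_\Phi^\perp$. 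The final statement, that the left-hand side is the squared $L^2(\mathbb P_x)$ residual of $f_w\circ F=\lambda f_w$, is simply the definition of that residual.

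The only delicate points are the complex conjugation convention in the Cauchy--Schwarz step and the observation that nothing here requires exact closure. The nonzero criterion is nevertheless the substantive caveat: an eigenvector with $P_\Phi^{\mathbb C}(w)=0$ makes the bound vacuous, because $f_w$ is then the zero function. I also note that $\|w\|$ could be replaced by $\|P_\Phi^{\mathbb C}(w)\|$, since $\zeta_{\mathrm{fit}}$ takes values in $\mathbb R^C$ but not necessarily in $V_\Phi$. That replacement would therefore need care and is not required for the stated bound.
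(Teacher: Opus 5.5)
Your proof is correct and follows the paper's argument: the identity $f_w(F(x))-\lambda f_w(x)=w^\top\bigl(\Phi(F(x))-\widehat B^\top\Phi(x)\bigr)$, then Cauchy--Schwarz and expectation, then the bound $R_{\mathbb P_x}(\Phi,\widehat B)\leq\mathcal B_\delta^2$ from Result~\ref{result_210820160745} on the confidence event, and finally Proposition~\ref{proposition_070720261027} for the nonzero criterion. Your closing aside is garbled but harmless, since you do not use it: replacing $\|w\|$ by $\|P_\Phi^{\mathbb C}(w)\|$ is invalid in general, because $P_\Phi^{\mathbb C}(w)$ need not be an eigenvector of $\widehat B$ and $\widehat B^\top\Phi(x)$ can leave $V_\Phi$.
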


\begin{proof}
Since $\widehat Bw=\lambda w$,
\[
f_w(F(x))-\lambda f_w(x)
=w^\top\bigl(\Phi(F(x))-\widehat B^\top\Phi(x)\bigr).
\]
Cauchy--Schwarz, expectation, and~\eqref{eq:repaired_risk_bound} give~\eqref{eq:repaired_eigenfunction_bound}.
Proposition~\ref{proposition_070720261027} gives
$\|f_w\|_{\mathcal H_\Phi^{\mathbb C}}^2=\|P_\Phi^{\mathbb C}(w)\|^2$, proving the nonzero-function criterion.
\end{proof}
This corollary starts from an eigenvector of the full fitted matrix $\widehat B$. It bounds the mean squared error in the candidate relation $f_w(F(x))=\lambda f_w(x)$. The adjoint diagnostic in Section~\ref{sec:finite_koopman_closure} starts from the transposed reduced matrix $(Q_\Phi^\top\widehat BQ_\Phi)^\top$ and tests a different relation. Thus the present bound does not replace either the kernel-section representation check or the held-out adjoint diagnostic.
\begin{corollary}[Soft-to-hard assignment bound]
\label{cor:repaired_association_risk}
On the confidence event in~\eqref{eq:fidelity_confidence_event},
\begin{equation}
 a_\Phi^2
 =
 \mathbb{E}_{x\sim\mathbb{P}_x}
 \!\left[\|y(x)-\Phi(x)\|^2\right]
 \leq2U_\delta.
 \label{eq:repaired_association_risk}
\end{equation}
\end{corollary}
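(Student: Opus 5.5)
The bound follows by chaining two facts already in hand: the deterministic inequality $a_\Phi^2\leq2A_{\mathrm{fid}}$ of Theorem~\ref{theorem_300320262124}, and the definition of the confidence event in~\eqref{eq:fidelity_confidence_event}. Neither step needs anything new.

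First, I will recall that for every measurable simplex-valued $\Phi$ and every state with $y(x)=\mathrm e^{c}$, the pointwise estimate holds:
\[
\|\Phi(x)-\mathrm e^c\|^2=(1-\Phi_c(x))^2+\sum_{k\ne c}\Phi_k(x)^2\leq2(1-\Phi_c(x))^2.
\]
This uses nonnegativity and the fact that the off-label coordinates sum to $1-\Phi_c(x)$. Taking expectations under $\mathbb P_x$ and using the pointwise form~\eqref{eq:association_fidelity_pointwise} of $A_{\mathrm{fid}}$ gives $a_\Phi^2\leq2A_{\mathrm{fid}}$. This inequality concerns population quantities for the fixed map $\Phi$, so it holds regardless of the evaluation sample. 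Conditioning on $\mathcal F_{\mathrm{fit}}$ only fixes $\Phi$ and does not affect it.

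Second, on the confidence event $A_{\mathrm{fid}}\leq U_\delta$, I will substitute to obtain $a_\Phi^2\leq2A_{\mathrm{fid}}\leq2U_\delta$. By the Hoeffding argument preceding~\eqref{eq:fidelity_confidence_event}, this event has conditional probability at least $1-\delta$.

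There is no real obstacle. The only point to check is that the population expectation $a_\Phi^2$ is taken under the same law $\mathbb P_x$ and the same fixed reference-class map that define $A_{\mathrm{fid}}$ and generate the i.i.d. evaluation sample $\mathcal V$. This holds by the standing conditioning on $\mathcal F_{\mathrm{fit}}$.
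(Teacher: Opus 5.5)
Your proof is correct and matches the paper's argument: it chains the deterministic bound $a_\Phi^2\leq 2A_{\mathrm{fid}}$ from Theorem~\ref{theorem_300320262124} with the confidence event $A_{\mathrm{fid}}\leq U_\delta$. Re-deriving the pointwise simplex estimate is harmless but unnecessary, since that theorem already supplies it.
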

\begin{proof}
Theorem~\ref{theorem_300320262124} gives
\[
 a_\Phi^2\leq2A_{\mathrm{fid}}.
\]
On the confidence event in~\eqref{eq:fidelity_confidence_event},
$A_{\mathrm{fid}}\leq U_\delta$. Therefore,
\[
 a_\Phi^2\leq2A_{\mathrm{fid}}\leq2U_\delta,
\]
which proves~\eqref{eq:repaired_association_risk}.
\end{proof}
\paragraph{Independent evaluation of the geometric condition.}
Independent evaluation states can estimate how often reconstruction-score margins fall below specified thresholds. For $C\geq2$, choose thresholds $g_c>0$ using only $\mathcal F_{\mathrm{fit}}$, and let $A_{\mathrm{geom}}$ be the margin-based upper bound on assignment loss in~\eqref{eq:kahm_geometric_fidelity}. For a state with reference label $y=\mathrm e^c$, define
\begin{equation}
 W_g(x,y)
 :=
 \vartheta_c^2+
 (1-\vartheta_c^2)
 \mathbbm{1}_{\{G_c(x)<g_c\}}.
 \label{eq:geometric_evaluation_loss}
\end{equation}
This loss equals $\vartheta_c^2$ when the required margin holds and $1$ when it fails. In each case, it upper-bounds the squared shortfall of the labelled coordinate. The proof of Theorem~\ref{thm:kahm_margin_fidelity} shows that
\[
 (1-(\Phi(x))_c)^2
 \leq W_g(x,y)\leq1.
\]
Moreover,
\[
 \mathbb{E}_{x\sim\mathbb P_x}
 \!\left[W_g(x,y(x))\right]
 =A_{\mathrm{geom}}.
\]
Define
\[
 \widehat A_{\mathrm{geom}}
 :=
 \frac1M\sum_{i=1}^M
 W_g(x'^{i},y'^{i})
\]
and
\begin{equation}
 U_{\mathrm{geom},\delta}
 :=
 \min\left(
 1,\,
 \widehat A_{\mathrm{geom}}
 +\sqrt{\frac{\log(1/\delta)}{2M}}
 \right).
 \label{eq:geometric_confidence_bound}
\end{equation}
Conditional on $\mathcal F_{\mathrm{fit}}$, the variables
$W_g(x'^{i},y'^{i})$ are independent, take values in $[0,1]$, and have common mean $A_{\mathrm{geom}}$. Hoeffding's inequality therefore gives
\[
 \mathbb{P}\!\left(
 A_{\mathrm{geom}}
 \leq U_{\mathrm{geom},\delta}
 \,\middle|\,\mathcal F_{\mathrm{fit}}
 \right)
 \geq1-\delta.
\]
Since $A_{\mathrm{fid}}\leq A_{\mathrm{geom}}$, it follows on the same event that
\[
 A_{\mathrm{fid}}
 \leq A_{\mathrm{geom}}
 \leq U_{\mathrm{geom},\delta}.
\]
Thus every bound above that uses the event $A_{\mathrm{fid}}\leq U_\delta$ also holds on this new confidence event after replacing $U_\delta$ by $U_{\mathrm{geom},\delta}$. For the same evaluation sample and confidence level, the pointwise inequality
\[
 (1-(\Phi(x'^{i}))_{c_i})^2
 \leq W_g(x'^{i},y'^{i})
\]
implies
$\widehat A_{\mathrm{fid}}\leq\widehat A_{\mathrm{geom}}$ and hence
$U_\delta\leq U_{\mathrm{geom},\delta}$. Directly evaluating assignment loss gives an upper bound at least as tight as the margin-based bound. The latter explains why the loss is small when the labelled class reconstructs most states better than its competitors. Thresholds or representations selected using the evaluation sample require separate selection data or an appropriate simultaneous confidence guarantee. Reconstruction-score margins bound the deviation from hard labels. Distances between class mean successors then bound how much this deviation changes predictions made by the class-mean matrix. Result~\ref{result_210820160745} still contains unknown population quantities; Proposition~\ref{prop:evaluable_risk} replaces them with a computable upper bound using class means estimated before evaluation and independent transition pairs.
\section{Empirical Evaluation of Soft Abstraction and Koopman Closure}
\label{sec:experiments}
The experiments examine how reference classes and assignment sharpness affect coordinate variation and multistep prediction. We compare direct prediction of soft coordinates with forecasting the state and then encoding that forecast, using identical target coordinates. We also compare weighted feature sums for spectral candidates, test simplex projection, vary training noise and trajectory counts, and summarize behavior under fixed policies. Process noise in Duffing and Van der Pol and randomized actions in the control tasks make the observed transitions stochastic. These studies evaluate prediction under those conditions; they do not verify deterministic exact closure or learn a policy.
\subsection{Experimental protocol}\label{sec:exp_protocol}\paragraph{Systems and data.}
The oscillator state is $x=(q,p)$. We use the damped Duffing and Van der Pol systems,
\begin{equation}
\begin{aligned}
\text{Duffing:}\quad &\dot q=p, &\dot p&=-0.25p+q-q^3, &\Delta t&=0.03,\\
\text{Van der Pol:}\quad &\dot q=p, &\dot p&=(1-q^2)p-q, &\Delta t&=0.02.
\end{aligned}
\label{eq:exp_oscillators}
\end{equation}
After each fourth-order Runge--Kutta step, we add independent Gaussian noise to each state coordinate. The standard deviation is $10^{-4}$ for the shared Duffing trajectories, and $10^{-5}$ for Van der Pol and the expanded Duffing search. Here ``clean'' means no added observation noise; process noise remains present. The robustness study adds observation noise to training data. The shared Duffing and Van der Pol trajectory generators start near $(0.7,0)$ and $(2,0)$, respectively, with seeded Gaussian perturbations of standard deviation $0.05$. Unseen-trajectory tests use new realizations drawn from these same local initial-condition distributions. CartPole-v1, MountainCar-v0, and Acrobot-v1 use Gymnasium observations of dimensions four, two, and six, respectively \cite{towers2024gymnasium}. The behavior policies are hand-coded and fixed during data collection. The prescribed action is replaced by an action drawn uniformly from the environment's discrete action set with probability $0.05$ in CartPole and MountainCar and $0.10$ in Acrobot. Random actions allow the same state $x$ to have different successors, even when the next state is deterministic given both $x$ and the action $a$. The resulting trajectories support prediction and behavioral summaries under the fixed policies, but do not follow the deterministic map $F$ assumed in the spectral theory. Acrobot uses 12 training episodes and six evaluation episodes, each limited to 500 steps. CartPole and MountainCar closure runs use eight training and four evaluation episodes, capped at 250 and 200 steps, respectively. Each rollout target lies in the same trajectory or episode as its starting state.
\paragraph{Coordinate convention, representation, and estimation.}
We use the benchmark state or observation coordinates directly, without additional standardization or rescaling before $K$-means and KAHM construction. Within each benchmark, this coordinate convention is fixed across class counts, exponents, replicates, and predictors. No coordinate transformation is fitted using selection or evaluation data. The geometry and soft assignments therefore depend on the stated coordinate scales; absolute reconstruction scores are not compared across benchmarks.
\paragraph{Reference classes, soft coordinates, and model selection.}
For each candidate $(C,\omega)$, $K$-means defines reference classes using training states. In sweeps without added observation noise, the training data and clustering seed are fixed across $\omega$ at each $C$, so only assignment sharpness changes. In noise-aware Van der Pol tuning, the training-noise realization also changes with $\omega$; those comparisons therefore change more than sharpness alone. If a class contains only $x$, we add $0.95x+0.05x_{\mathrm{nn}}$, where $x_{\mathrm{nn}}$ is the nearest other training sample. Reference samples are stored in single precision. As in prior work~\cite{kumar2025collaborative,kumar2025operator}, we use multiple KAHM submodels and set $N_b=100$. Let $\widetilde N_c$ be class $c$'s reference-sample count after augmentation. We divide these samples, in input order, into $S_{\mathrm{sub}}^c=\max\{1,\lfloor\widetilde N_c/N_b+1/2\rfloor\}$ blocks, with sizes differing by at most one. Each block supplies a KAHM. For each query state, the class score is the smallest reconstruction score over its submodels. The number of output coordinates remains $C$; the additional submodels are internal to each class. Numerical evaluation uses the following rules. An exactly zero affine-weight denominator is replaced by one. If the query or reconstruction has zero norm, its angular score is set to $1/2$. Cosine similarities are clipped to $[-1,1]$, and the resulting class scores are clipped to $[0,1]$. We compute coordinates using~\eqref{eq_220720260633} with $\tau=10^{-6}$. If the sum of powered affinities in the normalization denominator is at most $10^{-12}$, we return uniform coordinates. These rules also apply to forecast states; Remark~\ref{rem:numerical_score_scope} states which theoretical conclusions cover these conventions. With each resulting map $\Phi_{C,\omega}$ held fixed, the closure matrix is fitted using~\eqref{eq_230720260912} with $\beta=0.1$, identity initialization, and 20 epochs in original sample order. Unless stated otherwise, predictions use the fitted matrix without projecting its rows or the predicted vectors onto the simplex. The finite grids cover softened ($0<\omega<1$), directly normalized ($\omega=1$), and sharpened ($\omega>1$) assignments; Appendix~\ref{app:protocol} lists them. Model selection chooses both the representation and its fitted matrix (Remark~\ref{rem:outer_representation_selection}). We then hold the selected settings fixed for evaluation. Changing $C$ or $\omega$ changes the coordinate map being predicted, so selection scores measure accuracy on different targets. Comparisons between prediction methods instead use the same selected coordinate map.
\paragraph{Evaluation.}
For horizon $h$, let $I_h$ index evaluation starts whose targets remain within the same trajectory or episode. The target is $z_{t,h}=\Phi(x_{t+h})$, and the prediction is $\widehat z_{t,h}=(\widehat B^{\top})^h\Phi(x_t)$. We report
\begin{equation}
 E_h=\frac{\sum_{t\in I_h}\|z_{t,h}-\widehat z_{t,h}\|^2}
 {\sum_{t\in I_h}\|z_{t,h}\|^2},
 \qquad
 R_h^2=1-\frac{\sum_{t\in I_h}\|z_{t,h}-\widehat z_{t,h}\|^2}
 {\sum_{t\in I_h}\|z_{t,h}-\overline z_h\|^2},
\label{eq:exp_metrics}
\end{equation}
where $\overline z_h$ is the mean evaluation-target vector. The relative squared error $E_h$ divides total squared prediction error by total squared target magnitude. The score $R_h^2$ compares prediction error with always predicting $\overline z_h$: $1$ indicates perfect prediction, $0$ equal error, and a negative value larger error. The evaluation-mean vector is a descriptive reference computed from the targets, rather than a forecasting baseline fitted on training data. If target variance is zero, $R_h^2$ is undefined. A nearly constant coordinate map can therefore have small $E_h$ but poor $R_h^2$, because its target magnitude can be large relative to its variation.
\paragraph{Retained soft-coordinate variation and entropy effective rank.}
Coordinates that are nearly constant across states can be easy to predict. We therefore measure how much the soft-coordinate vector varies across sampled states separately from prediction accuracy. For soft-coordinate vectors
\[
\phi_i=\Phi(x_i)\in\Delta_C,\qquad i=1,\ldots,N,
\]
let
\[
\bar\phi
=
\frac{1}{N}\sum_{i=1}^N\phi_i,
\qquad
\Sigma_\Phi
=
\frac{1}{N}\sum_{i=1}^N
(\phi_i-\bar\phi)(\phi_i-\bar\phi)^\top .
\]
The empirical soft-coordinate variation is
\[
V_{\mathrm{coord}}
=
\operatorname{tr}(\Sigma_\Phi).
\]
Because every $\phi_i$ is simplex-valued,
$\|\phi_i\|^2\leq1$, and therefore
\[
V_{\mathrm{coord}}
=
\frac{1}{N}\sum_{i=1}^N\|\phi_i\|^2
-\|\bar\phi\|^2
\leq
1-\|\bar\phi\|^2.
\]
We define the normalized retained variation
\begin{equation}
\rho_{\mathrm{var}}
=
\frac{\operatorname{tr}(\Sigma_\Phi)}
     {1-\|\bar\phi\|^2},
\label{eq:retained_variation}
\end{equation}
whenever the denominator is positive, and set
$\rho_{\mathrm{var}}=0$ in the zero-variation degenerate case.
Thus $0\leq\rho_{\mathrm{var}}\leq1$, with zero for a constant map. The denominator $1-\|\bar\phi\|^2$ bounds the variance of simplex vectors with mean $\bar\phi$. Larger $\rho_{\mathrm{var}}$ indicates more coordinate variation relative to this upper bound. It does not measure information retained about the original state or the number of directions over which the coordinates vary. To describe the distribution across directions, let $\lambda_1,\ldots,\lambda_r>0$ be the positive eigenvalues of $\Sigma_\Phi$ and normalize them as
\[
p_j
=
\frac{\lambda_j}{\sum_{k=1}^r\lambda_k}.
\]
The entropy effective rank summarizes how evenly variation is distributed across covariance directions. It equals $r$ when the $r$ positive eigenvalues are equal and approaches $1$ when one dominates:
\[
r_{\mathrm{eff}}
=
\exp\!\left(
-\sum_{j=1}^r p_j\log p_j
\right).
\]
For $C\geq2$, its normalized form is 
\begin{equation}
\rho_{\mathrm{rank}}
=
\frac{r_{\mathrm{eff}}}{C-1},
\label{eq:retained_effective_rank}
\end{equation}
with $r_{\mathrm{eff}}=\rho_{\mathrm{rank}}=0$ when $\Sigma_\Phi=0$, including $C=1$. Centered coordinate vectors sum to zero, so they lie in the $(C-1)$-dimensional subspace orthogonal to $\mathbf 1_C$. This explains the normalization by $C-1$. These diagnostics describe variation in sampled coordinate vectors. Effective rank differs from $r_\Phi$, the dimension of the full function space. Neither diagnostic establishes accurate prediction or sufficient information for a task. Unless stated otherwise, we compute these diagnostics from target coordinates on data separate from representation construction and matrix fitting. For each candidate and seed, we collect $\phi_i=\Phi(x_{t+1})$ over valid one-step model-selection targets. We compute $\Sigma_\Phi$, $\rho_{\mathrm{var}}$, $r_{\mathrm{eff}}$, and $\rho_{\mathrm{rank}}$ separately for that seed, then aggregate the diagnostics across seeds. If these quantities help choose the representation, the data serve model selection. The resulting values cannot also be treated as performance estimates from a final test set. Dividing the numerator of $E_h$ by $|I_h|$ gives the empirical squared prediction error. With deterministic transitions this is the empirical closure risk; with stochastic transitions it includes error from random successor variation. We distinguish comparisons in which each representation predicts its own coordinates from comparisons in which all predictors are assessed against the same fixed representation. This distinction is necessary because changing $\Phi$ changes both the prediction target and its variation. Most summaries use three runs with seeds $0,1,2$. Captions distinguish repeated model fitting from repeated evaluation of one fitted model on different trajectories. Reported standard deviations are descriptive measures of variation over those
replicates. With only three replicates, they are not used to support statistical-significance or precise uncertainty claims. Tables~\ref{tab:settings} and~\ref{tab:evaluation_roles} summarize the reference configurations and distinguish the roles of selection and diagnostic experiments. The more detailed search records remain in Appendix~\ref{app:protocol}.
\begin{table}[t]
\centering
\small
\caption{Reference configurations for the two-dimensional diagnostic experiments. These are distinct from the selected multi-horizon configurations.}
\label{tab:settings}
\begin{tabular}{lcccccc}
\toprule
System & $C$ & $\omega$ & $\tau$ & $\beta$ & NLMS epochs & Primary horizon \\
\midrule
Duffing & 15 & 12 & $10^{-6}$ & 0.1 & 20 & 200 \\
Van der Pol & 20 & 4 & $10^{-6}$ & 0.1 & 20 & 200 \\
\bottomrule
\end{tabular}
\end{table}
\begin{table}[t]
\centering
\small
\caption{Data roles, selection criteria, and purpose of the principal
empirical analyses. Quantities used to choose $C$, $\omega$, or another
model setting are model-selection statistics rather than final-test
performance estimates. For Van der Pol, centered error means $1-R_h^2$. The one-standard-error set includes candidates whose mean error is at most the smallest candidate mean plus the standard error of that minimizing candidate.}
\label{tab:evaluation_roles}
\begin{tabularx}{\linewidth}{@{}>{\raggedright\arraybackslash}p{0.23\linewidth}YY@{}}
\toprule
Analysis & Selection criterion or fixed setting & Purpose \\
\midrule
Duffing expanded process-noise sweep & Mean soft-coordinate error over $h\in\{1,10,50,100,200\}$. & Selects $C=10,\omega=2$ for long-horizon and state-space baseline comparisons. \\
\addlinespace
Van der Pol clean sweep & Rank by mean centered error $\overline{(1-R^2)}$ over $h\in\{1,10,50,100,200\}$; form the one-standard-error set, then maximize $\rho_{\mathrm{var}}$, with normalized entropy effective rank $\rho_{\mathrm{rank}}$ as a secondary tie-break. & Selects $C=25,\omega=6$ for clean long-horizon analysis. \\
\addlinespace
Van der Pol noise-aware sweep & Per seed and noise level, use $\tfrac14(1-R_{50}^2)+\tfrac14(1-R_{100}^2)+\tfrac12(1-R_{200}^2)$; average over noise levels per seed and then across seeds, form the one-standard-error set, and use retained variation as a tie-break if needed. & Selects $C=25,\omega=4$; it is the unique one-standard-error finalist and is used for robustness and state-space baseline comparisons. \\
\addlinespace
Projection, estimator, and $K$-means ablations & Fixed reference configurations. & Isolates the tested modeling choice; soft-assignment ablations predict different target maps. \\
\addlinespace
Classic Control & Task-specific selection over candidate $C$ and
$\omega$ grids. & Evaluates soft-coordinate prediction under fixed
randomized behavior policies and descriptive coordinate interpretation. \\
\addlinespace
Spectral diagnostics & Selected oscillator coordinates, a basis of their sampled training span obtained by singular value decomposition (SVD), and fixed held-out spectral subsamples. & Evaluates the held-out spectral residual $\rho$ and representation defect $\eta_{\mathrm{rep}}$ for NLMS and ridge least squares. \\
\bottomrule
\end{tabularx}
\end{table}
\paragraph{Fitting and evaluation procedure.}\label{sec:algorithm}
Algorithm~\ref{alg:kahkm} summarizes construction, selection, and evaluation.
\begin{algorithm}[t]
\caption{KAHM soft-abstraction selection, closure fitting, and assessment}
\label{alg:kahkm}
\begin{algorithmic}[1]
\Require Fitting transitions; finite candidate class counts and exponents;
fixed $\tau>0$; separate model-selection data
\For{each candidate $(C,\omega)$}
    \State Apply the prescribed training perturbations, if any.
    \State Fit reference classes and internal KAHM submodels using the prescribed seed.
    \State Construct $\Phi_{C,\omega}$ with the numerical conventions in Section~\ref{sec:exp_protocol}; evaluate current and successor soft vectors.
    \State Initialize $\widehat B_0=\mathbf I_C$; apply~\eqref{eq_230720260912} for $N_{\mathrm{ep}}$ epochs to obtain $\widehat B_{C,\omega}$.
    \State Compute model-selection diagnostics within trajectory or episode boundaries.
\EndFor
\State Apply the prescribed model-selection rule.
\State Freeze the selected reference construction, exponent, soft map, and closure matrix.
\State Where separate post-selection data exist, report prediction and representation diagnostics on them.
\State Report spectral diagnostics on their specified held-out data.
\Ensure Selected fitted pair $(\Phi,\widehat B)$ and diagnostics for the stated data roles
\end{algorithmic}
\end{algorithm}
Once soft coordinates are available, each dense NLMS update costs $O(C^2)$ time, and the matrix requires $O(C^2)$ storage. Training for $N_{\mathrm{ep}}$ epochs over $N$ pairs costs $O(N_{\mathrm{ep}}NC^2)$ time. Caching current and successor features adds $O(NC)$ memory. A dense $h$-step rollout costs $O(hC^2)$ after the initial feature evaluation. These operation counts concern the closure stage; clustering, local KAHM construction, and feature evaluation incur additional costs. End-to-end runtime is outside the reported measurements. Algorithm~\ref{alg:kahkm} distinguishes model fitting from
model selection and, where available, post-selection assessment.
Several reported studies use separate data for fitting and model
selection but do not reserve an additional final test set after all
selection decisions. Results from such studies are identified as
selection-stage or diagnostic results rather than final-test
performance estimates.
\subsection{Abstraction design and retained variation}\label{sec:exp_abstraction}
This study holds the class count and matrix-fitting algorithm fixed while changing the coordinate construction. Each fitted model predicts its own coordinate map, so the errors describe predictability of different representations. The controlled oscillator ablations use
$(C,\omega)=(15,12)$ for Duffing and $(20,4)$ for Van der Pol. The Duffing configuration was selected from $C\in\{15,25,35\}$ and $\omega\in\{4,8,12\}$ by one-step error on the trajectory also used for this comparison. Thus this Duffing ablation is a selection-stage diagnostic, not an independent post-selection performance estimate. The Van der Pol configuration is a fixed diagnostic
setting used for comparison. Baselines use hard assignments, distance-based soft assignments, or normalized radial basis function (RBF) weights. For the oscillators, distance affinities are $(1-\min\{\|x-\bar x_c\|/s_{\mathrm{dist}},1\}+\tau)^\omega$. Here $\bar x_c$ is class $c$'s $K$-means centroid, and $s_{\mathrm{dist}}$ is the 95th percentile of positive training distances to the nearest centroid. The RBF bandwidth is the median of those distances. The distance affinities and RBF weights are each normalized over reference classes. The distance baseline uses the clipped linear distance transform above, not inverse distance.
\begin{table}[tbp]
\centering
\caption{Soft-assignment construction ablation at the fixed oscillator configurations. Entries are three-run means. Each abstraction is evaluated in its own soft-coordinate space using the same NLMS estimator. The comparison measures how accurately each map's own coordinates can be predicted.}
\label{tab:exp_ablation}
\small
\setlength{\tabcolsep}{5pt}
\begin{tabular}{lrrrrrr}
\toprule
& \multicolumn{3}{c}{Duffing, $C=15$, $\omega=12$}
& \multicolumn{3}{c}{Van der Pol, $C=20$, $\omega=4$}\\
\cmidrule(lr){2-4}\cmidrule(lr){5-7}
Soft-abstraction map & $E_1$ & $R_1^2$ & $E_{200}$ & $E_1$ & $R_1^2$ & $E_{200}$\\
\midrule
KAHM affinities & $5.74\!\times\!10^{-6}$ & 0.9942 & $5.44\!\times\!10^{-4}$ & $1.89\!\times\!10^{-3}$ & 0.9976 & 0.180\\
$K$-means RBF & $6.87\!\times\!10^{-5}$ & 0.9950 & $2.45\!\times\!10^{-2}$ & $3.01\!\times\!10^{-2}$ & 0.9672 & 0.819\\
$K$-means distance & $1.92\!\times\!10^{-3}$ & 0.9924 & 0.583 & $9.21\!\times\!10^{-2}$ & 0.9001 & 1.07\\
$K$-means hard & $2.12\!\times\!10^{-2}$ & 0.9411 & 0.393 & 0.112 & 0.8792 & 1.09\\
\bottomrule
\end{tabular}
\end{table}
At these settings, the predictor fitted to KAHM coordinates has the lowest $E_{200}$, followed by the RBF predictor (Table~\ref{tab:exp_ablation}). Each predicts a different coordinate map, so the ranking describes relative error on each map's own target. It does not compare prediction of a common quantity or establish how much task-relevant information each map retains.
\begin{figure}[tbp]
\centering
\includegraphics[width=0.86\linewidth]{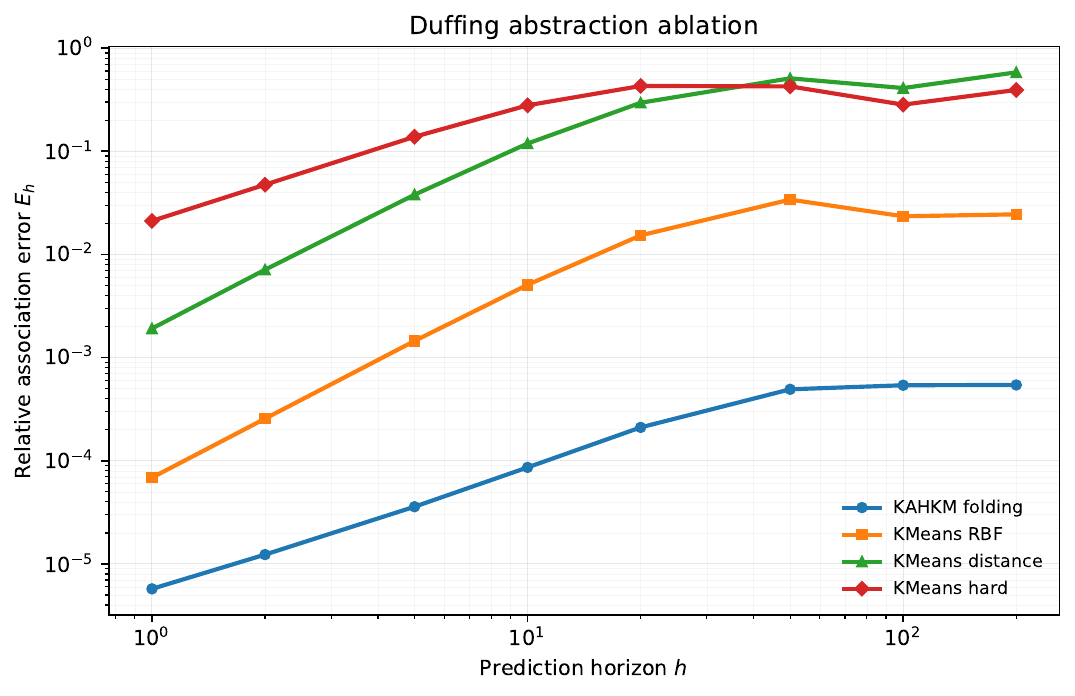}
\caption{Duffing soft-assignment ablation at the reference setting $C=15$, $\omega=12$. All maps use the same class count and NLMS estimator, but each predicts its own soft coordinates. Both axes are logarithmic. The full horizon curve complements the endpoint summaries in Table~\ref{tab:exp_ablation}.}
\Description{Logarithmic relative soft-coordinate error versus horizon for the KAHM affinity map and three centroid-based maps on Duffing.}
\label{fig:exp_duffing_ablation}
\end{figure}
\begin{figure}[tbp]
\centering
\includegraphics[width=0.86\linewidth]{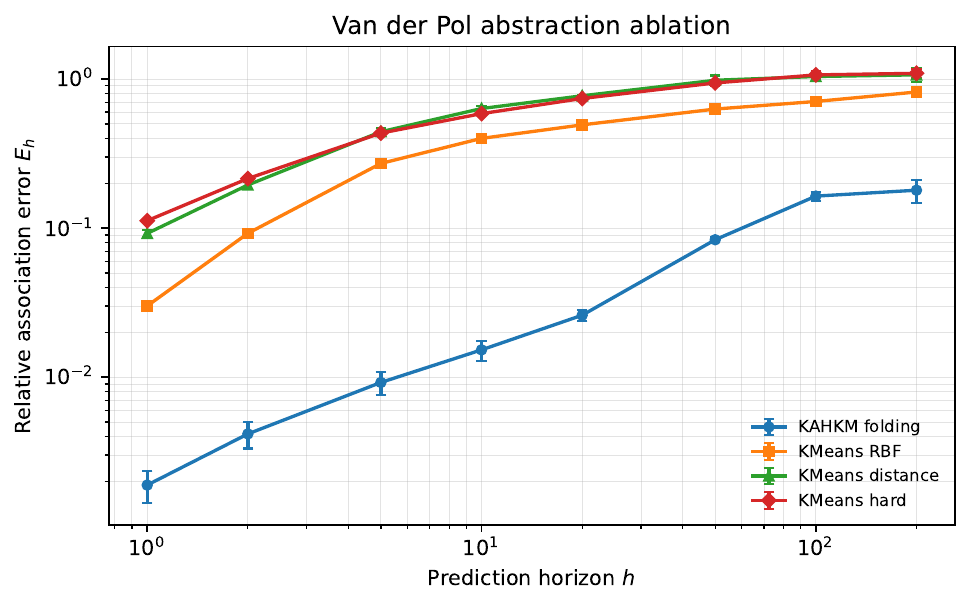}
\caption{Van der Pol soft-assignment ablation at the reference setting $C=20$, $\omega=4$. Each method predicts its own coordinates under the same NLMS protocol. The KAHM affinity map has the smallest plotted relative errors, with increasing error at longer horizons. Curves show means over three separately fitted runs (seeds $0,1,2$); bars indicate $\pm$ one sample standard deviation (divisor $2$).}
\Description{Logarithmic relative soft-coordinate error versus horizon for the KAHM affinity map and three centroid-based maps on Van der Pol. Curves show means over three fitted runs; bars show one sample standard deviation.}
\label{fig:exp_vanderpol_ablation}
\end{figure}
Table~\ref{tab:exp_selection} summarizes the expanded oscillator searches; both use process noise without added observation noise. The Duffing sweep averages the relative errors across seeds $0,1,2$ at each horizon, then ranks configurations by their arithmetic mean over $h\in\{1,\allowbreak10,\allowbreak50,\allowbreak100,\allowbreak200\}$. For Van der Pol, candidate maps can differ substantially in coordinate variation. A nearly constant map can have small $E_h$ despite predicting its limited variation poorly. We therefore rank configurations by squared error normalized by target variance, namely $1-R_h^2$:
\[
\overline{(1-R^2)}
=
\frac{1}{5}\sum_{h\in\{1,10,50,100,200\}}(1-R_h^2).
\]
The one-standard-error set contains configurations whose mean centered error is at most the smallest candidate mean plus that candidate's standard error. With three seed replicates, this is a selection tolerance, not a confidence guarantee. Among retained candidates, we maximize $\rho_{\mathrm{var}}$ from~\eqref{eq:retained_variation}; any remaining tie is resolved by maximizing $\rho_{\mathrm{rank}}$ from~\eqref{eq:retained_effective_rank}. The rule thus prefers greater coordinate variation among candidates with similar prediction errors. Each $(C,\omega)$ defines a different representation: $C$ changes the partition and reference matrices, while $\omega$ changes the weights within a fixed construction. The sweep therefore spans multiple reference constructions, not one fixed family $\mathcal G_{\mathcal S}$. Three Van der Pol configurations lie in the one-standard-error set, and the rule selects $(C,\omega)=(25,6)$. Appendix~\ref{app:protocol} records the full candidate grids and selection rules.
\begin{table}[tbp]
\centering
\caption{Expanded oscillator model-selection diagnostics over three fitted runs
(seeds $0,1,2$) per configuration. Duffing rows are ranked by mean relative
soft-coordinate error over $h\in\{1,10,50,100,200\}$; $E_h$ entries are
mean $\pm$ standard deviation (SD, divisor $3$). The $\overline E$ column
averages these means and SDs across the five horizons; its $\pm$ value is
not the SD of the horizon-averaged error. Van der Pol rows show the complete
one-standard-error set. For each run, we average $1-R_h^2$ over the same
five horizons. The centered-error column reports the mean of the three run
averages $\pm$ their standard error (sample SD divided by $\sqrt3$).
Representation-variation diagnostics are averaged across runs.
These are selection statistics, not final-test estimates.}
\label{tab:exp_selection}
\small
Duffing: relative soft-coordinate-error ranking
\setlength{\tabcolsep}{4pt}
\begin{tabular}{rrllll}
\toprule
$C$ & $\omega$ & $\overline E$ & $E_{50}$ & $E_{100}$ & $E_{200}$\\
\midrule
10 & 2 & $(1.19\pm0.21)\times10^{-5}$ & $(1.02\pm0.16)\times10^{-5}$ & $(1.76\pm0.46)\times10^{-5}$ & $(3.07\pm0.41)\times10^{-5}$ \\
12 & 2 & $(1.84\pm0.61)\times10^{-5}$ & $(1.54\pm0.42)\times10^{-5}$ & $(2.55\pm0.80)\times10^{-5}$ & $(4.98\pm1.79)\times10^{-5}$ \\
8 & 6 & $(2.18\pm0.83)\times10^{-5}$ & $(2.58\pm1.07)\times10^{-5}$ & $(3.50\pm1.21)\times10^{-5}$ & $(4.51\pm1.76)\times10^{-5}$ \\
8 & 8 & $(2.34\pm1.05)\times10^{-5}$ & $(3.20\pm1.55)\times10^{-5}$ & $(3.53\pm1.22)\times10^{-5}$ & $(4.54\pm2.40)\times10^{-5}$ \\
10 & 4 & $(2.40\pm0.74)\times10^{-5}$ & $(2.63\pm0.53)\times10^{-5}$ & $(3.41\pm0.32)\times10^{-5}$ & $(5.73\pm2.72)\times10^{-5}$ \\
15 & 12 & $(2.59\pm1.56)\times10^{-4}$ & $(2.60\pm1.58)\times10^{-4}$ & $(3.13\pm1.79)\times10^{-4}$ & $(7.00\pm4.23)\times10^{-4}$ \\
\bottomrule
\end{tabular}
\medskip

Van der Pol: centered one-standard-error/retained-variation selection
\begin{tabular}{rrrrrrrrc}
\toprule
Rank & $C$ & $\omega$ & mean $R^2$ & centered error $\pm$ std. error &
$\rho_{\mathrm{var}}$ & $r_{\mathrm{eff}}$ &
$\rho_{\mathrm{rank}}$ & Selected\\
\midrule
1 & 25 & 6 & 0.8885 & $0.1115\pm0.0034$ & 0.3545 & 14.26 & 0.5942 & yes\\
2 & 30 & 6 & 0.8865 & $0.1135\pm0.0030$ & 0.3219 & 15.58 & 0.5373 & --\\
3 & 20 & 4 & 0.8865 & $0.1135\pm0.0063$ & 0.2720 & 10.42 & 0.5482 & --\\
\bottomrule
\end{tabular}%
\end{table}
\subsection{Closure estimation and simplex feasibility}\label{sec:exp_closure}
The estimator comparison fixes the Duffing map at $C=15$, $\omega=12$. Identity-initialized NLMS preserves $\widehat B\mathbf1_C=\mathbf1_C$ in exact arithmetic. Predicted coordinates therefore sum to one, but may be negative. The preserved row sums do not ensure that predictions belong to the simplex. At this Duffing configuration, NLMS has the smallest reported $E_{200}$; stronger ridge regularization and row-stochastic matrix projection increase long-horizon error (Table~\ref{tab:exp_estimators}). The ranking is conditional on this fixed representation.
\begin{table}[tbp]
\centering
\caption{Closure-estimator comparison on fixed Duffing KAHM-derived soft coordinates, $C=15$, $\omega=12$. Entries are three-run means. Row-stochastic NLMS projects the fitted matrix before rollout; it is not per-step projection of predicted vectors.}
\label{tab:exp_estimators}
\small
\begin{tabular}{lrrr}
\toprule
Estimator & $E_1$ & $R_1^2$ & $E_{200}$\\
\midrule
NLMS & $5.7\times10^{-6}$ & 0.9942 & $5.44\times10^{-4}$\\
Ordinary least squares & $6.7\times10^{-6}$ & 0.9930 & $8.05\times10^{-4}$\\
Ridge, penalty $10^{-4}$ & $6.7\times10^{-6}$ & 0.9931 & $7.37\times10^{-4}$\\
Ridge, penalty $10^{-2}$ & $1.6\times10^{-5}$ & 0.9820 & $6.32\times10^{-3}$\\
Row-stochastic NLMS & $3.0\times10^{-5}$ & 0.9641 & $1.95\times10^{-1}$\\
\bottomrule
\end{tabular}
\end{table}
We compare two ways to ensure nonnegative predicted coordinates that sum to one. One projects every predicted vector onto $\Delta_C$ after each step. The other projects each row of $\widehat B$ onto $\Delta_C$ once, before rollout. The row-projected matrix has nonnegative entries and unit row sums, so its transpose maps simplex vectors to simplex vectors. Table~\ref{tab:exp_projection} shows modest changes in prediction error after vector projection and larger increases after matrix projection. The table does not report $\widehat\nu_{\mathrm E_t}$, the mean squared distance of predictions from the simplex. For Van der Pol, vector projection reduces $E_{200}$ from $0.180$ to $0.153$. In these tests, projecting predicted vectors changes the error less than projecting matrix rows. Per-step vector projection also makes the predictor nonlinear, so repeated predictions no longer equal powers of the original fitted matrix.
\begin{table}[tbp]
\centering
\caption{Projection diagnostics at the reference configurations. Each system uses a fixed data trajectory and three KAHM-derived abstraction initializations; entries are mean relative errors. These runs differ from those in Table~\ref{tab:exp_ablation}. The corresponding mean Duffing one-step $R^2$ is $0.9861$.}
\label{tab:exp_projection}
\small
\begin{tabular}{llrrr}
\toprule
System & Rollout rule & $E_1$ & $E_{20}$ & $E_{200}$\\
\midrule
Duffing & Unconstrained & $2.09\times10^{-5}$ & $5.11\times10^{-4}$ & $9.14\times10^{-4}$\\
& Per-step vector projection & $2.08\times10^{-5}$ & $5.09\times10^{-4}$ & $1.07\times10^{-3}$\\
& Row-stochastic matrix & $4.40\times10^{-5}$ & $7.71\times10^{-3}$ & $1.59\times10^{-1}$\\
\midrule
Van der Pol & Unconstrained & $1.89\times10^{-3}$ & $2.60\times10^{-2}$ & $1.80\times10^{-1}$\\
& Per-step vector projection & $1.84\times10^{-3}$ & $2.22\times10^{-2}$ & $1.53\times10^{-1}$\\
& Row-stochastic matrix & $3.93\times10^{-3}$ & $3.14\times10^{-1}$ & $9.56\times10^{-1}$\\
\bottomrule
\end{tabular}
\end{table}
\subsection{Robustness to training perturbations and trajectory coverage}\label{sec:exp_robustness}
These studies vary training noise and trajectory count while keeping the local initial-condition distributions fixed. We reconstruct the soft coordinates and refit the prediction matrix for each training dataset. Changes in error therefore combine changes in the representation and its fitted dynamics; they do not measure robustness to arbitrary shifts in the evaluation distribution. We add independent observation noise to each training coordinate $j$, with standard deviation $\sigma_{\mathrm{obs}}\,\operatorname{std}(x_j)$ computed from that coordinate's training values. Evaluation data receive no additional observation noise. At the Duffing reference setting, $E_{200}$ is $5.29\times10^{-4}$ for $\sigma_{\mathrm{obs}}=0.01$ and $2.85\times10^{-2}$ for $\sigma_{\mathrm{obs}}=0.05$. A separate Van der Pol sweep uses the candidate grid
\[
\begin{aligned}
C&\in\{10,15,20,25,30,40\},\qquad
\omega\in\{0.25,0.5,1,2,4\},\\
\sigma_{\mathrm{obs}}&\in\{0,0.005,0.01,0.02,0.05\}.
\end{aligned}
\]
For each seed and noise level, the centered selection score is
\[
S_{\mathrm{noise}}
=
\tfrac14(1-R_{50}^2)
+\tfrac14(1-R_{100}^2)
+\tfrac12(1-R_{200}^2).
\]
For each seed, we average this score over the five training-noise levels. We then average the resulting scores across seeds $0,1,2$. The smallest mean score is $0.31165$. Its empirical standard error across the three seeds is $0.00445$, giving a one-standard-error threshold of $0.31610$. This threshold is
used only as a model-selection heuristic, not as a confidence interval.
The configuration $(C,\omega)=(25,4)$ is the unique candidate within
that threshold, so no retained-variation tie-break is required. Equivalently, its noise-averaged weighted $R^2$, $1-\overline S_{\mathrm{noise}}$, is $0.68835$. Table~\ref{tab:exp_noise} reports its $R_h^2$ values and weighted error $S_{\mathrm{noise}}$ as training noise increases. At $5\%$ training noise, $R_{200}^2$ decreases to $0.2976\pm0.0389$, and the weighted error $S_{\mathrm{noise}}$ increases from $0.2015$ at zero added noise to $0.5798$. Figure~\ref{fig:exp_noise_reference} reports a separate noise-sensitivity study at the fixed reference configurations. The sweep chooses $C$ and $\omega$ using results across noise levels; the figure holds these settings fixed.
\begin{figure}[tbp]
\centering
\includegraphics[width=0.86\linewidth]{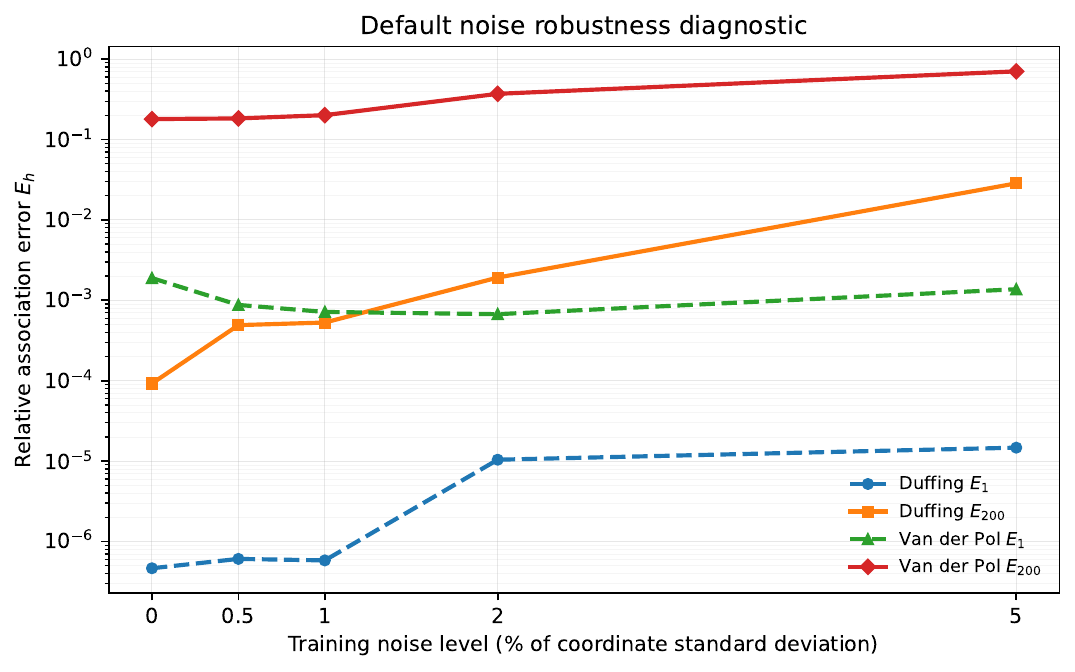}
\caption{Training-noise diagnostic for the fixed reference configurations: Duffing $(C,\omega)=(15,12)$ and Van der Pol $(20,4)$. Noise levels are percentages of each training coordinate's standard deviation; evaluation snapshots receive no additional observation noise. Dashed curves show $E_1$ and solid curves show $E_{200}$. The separate study using the noise-aware selected Van der Pol configuration is reported in Table~\ref{tab:exp_noise}.}
\Description{One-step and 200-step soft-coordinate errors on both oscillators versus training-only observation-noise level.}
\label{fig:exp_noise_reference}
\end{figure}
\begin{table}[tbp]
\centering
\caption{Van der Pol noise-aware model-selection results for $(C,\omega)=(25,4)$. Entries are mean $\pm$ standard deviation over seeds $0,1,2$. Noise is applied only to training coordinates and is scaled by each coordinate's training standard deviation. The score $S_{\mathrm{noise}}=\tfrac14(1-R_{50}^2)+\tfrac14(1-R_{100}^2)+\tfrac12(1-R_{200}^2)$ uses the displayed seed-mean centered scores at each noise level. Uncentered relative soft-coordinate errors and retained-variation diagnostics are provided in the reproducibility artifacts for completeness; this table reports centered predictive performance.}
\label{tab:exp_noise}
\small
\setlength{\tabcolsep}{4pt}
\begin{tabular}{crrrr}
\toprule
Training noise & $R_{50}^2$ & $R_{100}^2$ & $R_{200}^2$ & $S_{\mathrm{noise}}$\\
\midrule
$0\%$   & $0.8915\pm0.0028$ & $0.8036\pm0.0081$ & $0.7494\pm0.0117$ & 0.2015\\
$0.5\%$ & $0.8898\pm0.0015$ & $0.7763\pm0.0093$ & $0.7187\pm0.0398$ & 0.2241\\
$1\%$   & $0.8829\pm0.0103$ & $0.7710\pm0.0150$ & $0.6888\pm0.0309$ & 0.2421\\
$2\%$   & $0.8466\pm0.0193$ & $0.7195\pm0.0224$ & $0.5956\pm0.0622$ & 0.3106\\
$5\%$   & $0.6246\pm0.0478$ & $0.4608\pm0.0454$ & $0.2976\pm0.0389$ & 0.5798\\
\bottomrule
\end{tabular}
\end{table}
We evaluate generalization to new trajectories by training on complete trajectories and testing on trajectories generated with different seeds. For each training count, we fit three Van der Pol models using disjoint training-seed blocks starting at $0$, $1000$, and $2000$. Each model is evaluated on trajectories $100,101,102$. Within each training-seed block, we fit all compared configurations to the same trajectories. For each block, we compute differences between configurations in their horizon-averaged centered error, then summarize these differences across the three blocks. We compare the noise-aware selected representation $(C,\omega)=(25,4)$, the clean-selected representation $(25,6)$, and the fixed reference representation $(20,4)$. Each configuration defines different soft coordinates. We therefore report its mean centered $R^2$ over $h\in\{1,10,50,100,200\}$, which evaluates prediction relative to that representation's own target variance.

Neither the noise-aware nor the reference configuration has lower centered prediction error at every tested trajectory count. Their error differences are small relative to the variation across the three matched fits.

With five or eight training trajectories, the clean-selected configuration has larger centered prediction error on new trajectories than the noise-aware configuration. With five training trajectories, the excess centered error is $0.0124\pm0.0024$; with eight, it is $0.0130\pm0.0032$ (mean $\pm$ standard error across matched fits). All three differences are positive at both counts. At eight trajectories, mean multi-horizon $R^2$ is $0.9021$ for the noise-aware map, $0.8891$ for the clean-selected map, and $0.9077$ for the fixed reference map. With only three paired replicates, these comparisons are descriptive
rather than formal significance tests. In these runs, the configuration selected by the clean single-trajectory sweep does not consistently have the lowest prediction error when more training trajectories are used.
These three fits do not establish a stable ranking across training samples.
\begin{table}[tbp]
\centering
\caption{Van der Pol generalization with increasing training-trajectory count. The noise-aware selected map uses $(C,\omega)=(25,4)$, the clean-selected map $(25,6)$, and the fixed reference map $(20,4)$. Configuration columns report mean multi-horizon $R^2$ over $h\in\{1,10,50,100,200\}$ as mean $\pm$ standard deviation across three matched replicate training-seed blocks. The final columns report paired differences in mean $1-R_h^2$ over these horizons: $\Delta E_{\mathrm{clean-noise}}=E_{\mathrm{clean}}-E_{\mathrm{noise}}$ and $\Delta E_{\mathrm{ref-noise}}=E_{\mathrm{ref}}-E_{\mathrm{noise}}$, as mean $\pm$ standard error. Here $E$ denotes the horizon-averaged $1-R_h^2$, not the relative squared error $E_h$. Positive $\Delta E$ indicates lower centered error for the noise-aware selected map. These three-replicate comparisons are descriptive and are not formal significance tests.}
\label{tab:exp_trajectory_count}
\small
\setlength{\tabcolsep}{3.2pt}
\resizebox{\linewidth}{!}{%
\begin{tabular}{rccccc}
\toprule
Training trajectories &
Noise-aware $R^2$ &
Clean $R^2$ &
Reference $R^2$ &
$\Delta E_{\mathrm{clean-noise}}$ &
$\Delta E_{\mathrm{ref-noise}}$\\
\midrule
1 &
$0.8745\pm0.0055$ &
$0.8768\pm0.0134$ &
$0.8795\pm0.0081$ &
$-0.0022\pm0.0055$ &
$-0.0049\pm0.0077$\\

2 &
$0.8876\pm0.0050$ &
$0.8840\pm0.0047$ &
$0.8889\pm0.0125$ &
$0.0036\pm0.0030$ &
$-0.0013\pm0.0064$\\

3 &
$0.8930\pm0.0062$ &
$0.8936\pm0.0065$ &
$0.8982\pm0.0044$ &
$-0.0005\pm0.0033$ &
$-0.0052\pm0.0051$\\

5 &
$0.8985\pm0.0029$ &
$0.8861\pm0.0027$ &
$0.8958\pm0.0070$ &
$0.0124\pm0.0024$ &
$0.0028\pm0.0036$\\

8 &
$0.9021\pm0.0100$ &
$0.8891\pm0.0045$ &
$0.9077\pm0.0030$ &
$0.0130\pm0.0032$ &
$-0.0056\pm0.0070$\\
\bottomrule
\end{tabular}%
}
\end{table}
In a separate experiment that varies the training fraction of one trajectory, Duffing $E_{200}$ falls from $4.08\times10^{-2}$ at 25\% to $9.23\times10^{-5}$ at 75\%. The Van der Pol trend is not monotone: the lowest tested $E_{200}$ occurs at a 50\% training fraction, although one-step closure improves with additional data. Changing the training fraction also changes the evaluation segment. The results therefore cannot isolate the effect of training-set size, because the test data change at the same time. At the fixed reference configurations, the separate-trajectory study reduces Duffing $E_{200}$ from $3.08\times10^{-2}$ with one training trajectory to $1.71\times10^{-2}$ with five. The Van der Pol improvement at its fixed reference configuration largely saturates after three trajectories, where the reported $E_{200}$ is $0.1622$. Figure~\ref{fig:exp_train_count_reference} shows these fixed-reference results. Table~\ref{tab:exp_trajectory_count} extends the Van der Pol analysis to eight training trajectories. For each fitted model, evaluation pools valid rollout starts from trajectories with seeds $100,101,102$.
\begin{figure}[tbp]
\centering
\includegraphics[width=0.80\linewidth]{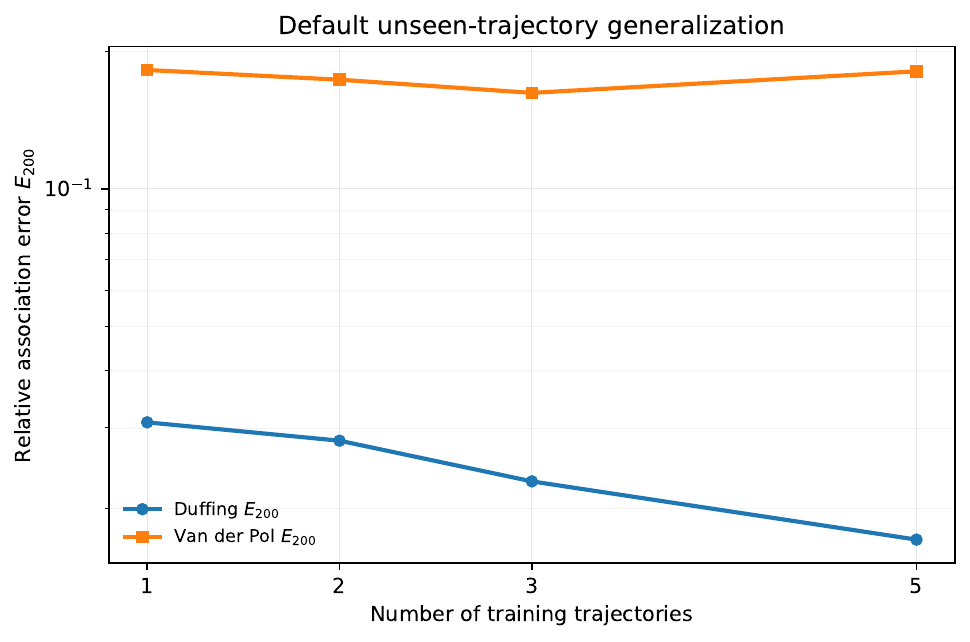}
\caption{Unseen-trajectory prediction for the reference Duffing $(15,12)$ and Van der Pol $(20,4)$ maps as the training count varies over $1,2,3,5$. The Van der Pol comparison using three disjoint training-seed blocks and up to eight training trajectories is reported separately in Table~\ref{tab:exp_trajectory_count}.}
\Description{Two curves show 200-step soft-coordinate error against training-trajectory count using the fixed reference configurations. Duffing decreases over the plotted counts; Van der Pol improves to three trajectories and then rises slightly.}
\label{fig:exp_train_count_reference}
\end{figure}
\subsection{Abstraction-space versus state-space prediction}\label{sec:exp_external}
Direct coordinate prediction starts from $\Phi(x_t)$ and repeatedly applies the fitted matrix. State predictors instead forecast $\widehat x_{t+h}$, then encode that forecast as $\Phi(\widehat x_{t+h})$. Both are evaluated against the same target $\Phi(x_{t+h})$. The formal reconstruction score is defined only when a forecast lies in $\bigcap_{c,s}D_{\mathbf X_s^c}$. Experiments evaluate forecasts using the numerical conventions in Section~\ref{sec:exp_protocol}. The state-space reference models are DMD and polynomial EDMD with complete monomial dictionaries through degree two or three, including a constant \cite{schmid2010,williams2015}. For an $h$-step EDMD forecast, we compute the polynomial features of the starting state once and apply the fitted feature-space matrix $h$ times. We then extract the two state coordinates and evaluate $\Phi$ at that forecast; polynomial features are not recomputed between steps. Degree two and degree three
are evaluated as separate prespecified models, and all baseline
operators use ridge regularization $10^{-8}$; these settings are not
selected by an additional hyperparameter search. The selected configurations specify the common output maps: $(C,\omega)=(10,2)$ for Duffing and $(25,4)$ for Van der
Pol. Each method is fitted once using three training trajectories, each with 1200 transitions and seeds $0,1,2$. We evaluate that fitted model on three trajectories, each with 800 transitions and seeds $100,101,102$. Table~\ref{tab:exp_external} reports $R^2$ for the common target $\Phi(x_{t+h})$. Within each system, every method therefore uses the same target variance to normalize error.
\begin{table}[tbp]
\centering
\caption{Oscillator prediction in a fixed KAHM output representation.
Within each system all methods share the same target:
$(C,\omega)=(10,2)$ for Duffing and $(25,4)$ for Van der Pol.
Entries are mean $R_h^2\pm$ standard deviation over three evaluation
trajectories for one fitted model per method. The DMD/EDMD polynomial
degrees and ridge regularization are prespecified rather than selected
by a separate hyperparameter search.}
\label{tab:exp_external}
\small
\setlength{\tabcolsep}{4pt}
\begin{tabular}{llccc}
\toprule
System & Method & $R_1^2$ & $R_{50}^2$ & $R_{200}^2$\\
\midrule
Duffing
& KAHKM (NLMS)
& $0.9835\pm0.0003$
& $0.5056\pm0.0167$
& $-0.3155\pm0.0552$\\
& State DMD
& $0.9895\pm0.0021$
& $-0.7108\pm0.1433$
& $-15.0419\pm1.7772$\\
& State EDMD, degree 2
& $0.9984\pm0.0015$
& $0.9975\pm0.0019$
& $0.9546\pm0.0134$\\
& State EDMD, degree 3
& $0.9990\pm0.0017$
& $0.9961\pm0.0009$
& $0.9774\pm0.0114$\\
\midrule
Van der Pol
& KAHKM (NLMS)
& $0.9963\pm0.0002$
& $0.9020\pm0.0024$
& $0.7724\pm0.0044$\\
& State DMD
& $0.9574\pm0.0015$
& $0.0909\pm0.0037$
& $0.2051\pm0.0019$\\
& State EDMD, degree 2
& $0.9578\pm0.0014$
& $0.0777\pm0.0054$
& $0.2061\pm0.0027$\\
& State EDMD, degree 3
& $1.0000\pm0.0000$
& $0.5339\pm0.0176$
& $0.4075\pm0.0106$\\
\bottomrule
\end{tabular}
\end{table}
The common-target ranking differs between systems. On Duffing, degree-three polynomial EDMD remains accurate at long horizons ($R_{200}^2=0.9774$), while the reported NLMS abstraction-space predictor gives $R_{200}^2=-0.3155$. On Van der Pol, the NLMS coordinate predictor gives $R_{50}^2=0.9020$ and $R_{200}^2=0.7724$. Degree-three EDMD gives $0.5339$ and $0.4075$, respectively, despite having nearly perfect one-step prediction at the reported precision. Neither prediction route is best across both systems and all horizons. The reported NLMS errors belong to particular fitted prediction matrices; another matrix acting on the same coordinates could have smaller error. The DMD/EDMD settings were prespecified, without separate optimization. The comparison therefore supports a ranking of these fitted models under the stated protocol, not a general ranking of Koopman or nonlinear forecasting methods.
\begin{figure}[tbp]
\centering
\includegraphics[width=0.86\linewidth]{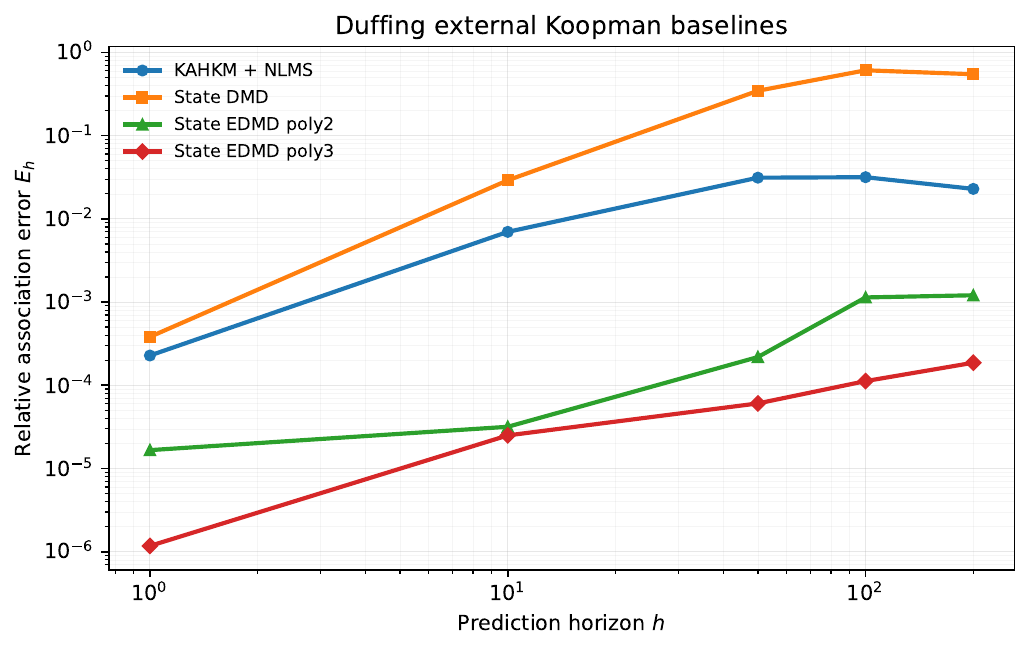}
\caption{Duffing state-space baseline comparison in the KAHM soft-coordinate space $(C,\omega)=(15,12)$. All methods in this plot share that target. Polynomial EDMD has the smallest plotted long-horizon errors. The separate comparison in Table~\ref{tab:exp_external} uses the selected map $(10,2)$.}
\Description{Relative soft-coordinate error versus horizon for the KAHKM predictor, DMD, and degree-two and degree-three EDMD, evaluated in the reference Duffing soft-coordinate space.}
\label{fig:exp_external_duffing_reference}
\end{figure}
\begin{figure}[tbp]
\centering
\includegraphics[width=0.86\linewidth]{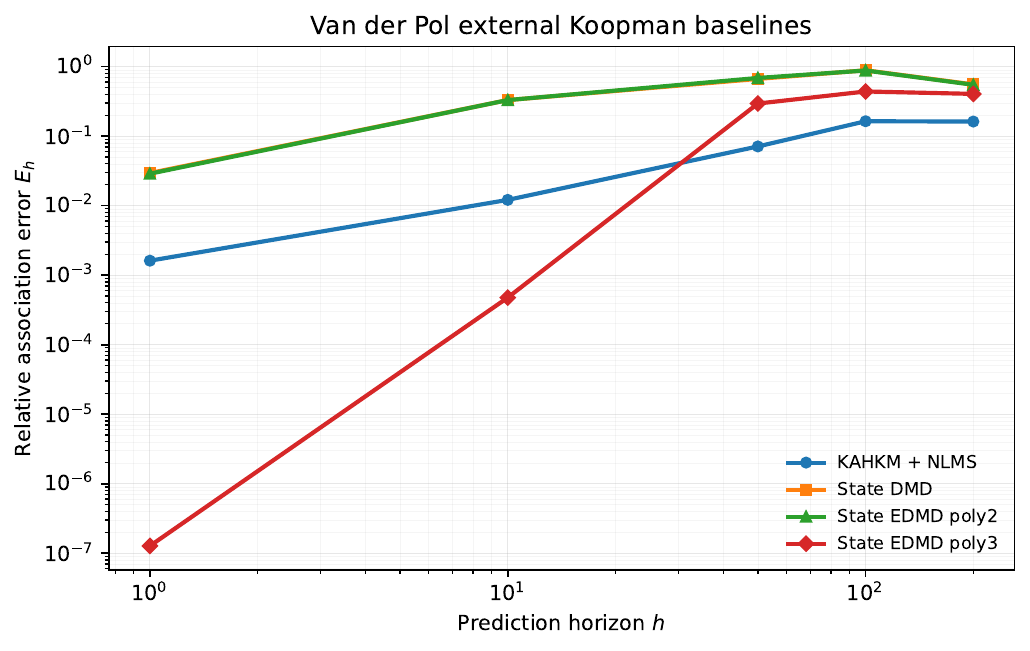}
\caption{Van der Pol state-space baseline comparison in the KAHM soft-coordinate space $(C,\omega)=(20,4)$. All methods in this plot share that target. The KAHKM predictor has lower plotted long-horizon error than the state predictors at this reference setting. Table~\ref{tab:exp_external} instead uses the noise-aware selected map $(25,4)$ and reports $R^2$. Both the target map and error normalization differ, so the numerical scores cannot be compared directly.}
\Description{Relative soft-coordinate error versus horizon for the KAHKM predictor, DMD, and two polynomial EDMD baselines in the reference Van der Pol soft-coordinate space.}
\label{fig:exp_external_vanderpol_reference}
\end{figure}
\subsection{Fixed-policy prediction and behavioral summaries}\label{sec:exp_policy}
The control tasks assess two uses of the representation under fixed randomized policies. Repeated prediction evaluates the soft coordinates along observed stochastic trajectories. Behavioral summaries group states by their largest coordinate, count visits to each group (occupancy), and report observed actions. Randomness in actions contributes to rollout error, so this is not the deterministic residual $\Phi(F(x))-B^\top\Phi(x)$. These summaries describe the fixed policies; they do not show that the coordinates suffice to choose actions. CartPole and MountainCar configurations are selected by mean $E_h$ over $h\in\{1,5,10,20,50\}$, giving $(C,\omega)=(20,1)$ and $(100,1)$. The Acrobot search uses seed $0$, with $C\in\{20,40,50,75,100,150\}$ and $\omega\in\{0.5,1,2,4,8\}$. Minimizing $E_{50}$ selects $(C,\omega)=(150,0.5)$. We assess this setting in a separate post-selection experiment with training seeds $0,1,2$ and evaluation seeds $100,101,102$. Table~\ref{tab:exp_policy} pairs training seeds
$0,1,2$ with evaluation seeds $1000,1001,1002$ for CartPole and
MountainCar and reports the corresponding three-seed Acrobot results.
These evaluations describe variability at the selected settings. They are not final-test estimates: no test set was kept separate from the complete model-selection process. Each episode reset seed equals the base seed plus the episode index. Runs with adjacent base seeds therefore share some reset seeds (Appendix~\ref{app:protocol}).
\begin{table}[tbp]
\centering
\caption{Post-selection prediction diagnostics for the selected soft-abstraction
models under the fixed randomized behavior policies. Entries are mean
$\pm$ standard deviation over three fitted runs with base seeds
$0,1,2$; Acrobot standard deviations use divisor $3$.
The $R^2$ scores compare prediction error with always predicting the mean evaluation-target vector. Because some episode reset seeds overlap
across base-seed runs, the three runs should not be interpreted as
wholly independent episode datasets.}
\label{tab:exp_policy}
\small
\setlength{\tabcolsep}{4pt}
\begin{tabular}{lcrrrr}
\toprule
Task & $(C,\omega)$ & $E_1$ & $R_1^2$ & $E_{50}$ & $R_{50}^2$\\
\midrule
CartPole & $(20,1)$ & $0.0716\pm0.0061$ & $-0.133\pm0.189$ & $0.0768\pm0.0115$ & $-0.235\pm0.146$\\
MountainCar & $(100,1)$ & $0.00386\pm0.00058$ & $0.937\pm0.006$ & $0.1305\pm0.0047$ & $-1.111\pm0.079$\\
Acrobot & $(150,0.5)$ & $0.0096\pm0.0014$ & $0.7566\pm0.0307$ & $0.0577\pm0.0019$ & $-0.3860\pm0.2369$\\
\bottomrule
\end{tabular}
\end{table}
The reported selection means $\overline E_{1:50}$ over $h\in\{1,5,10,20,50\}$ are $0.0741\pm0.0082$ for CartPole and $0.0641\pm0.0043$ for MountainCar. Figure~\ref{fig:exp_regime_count} shows a separate single-fit sensitivity diagnostic. For each task and class count $C$, we choose the exponent $\omega$ with the smallest $E_{50}$. The plot then shows $E_1$ and $E_{50}$ for that configuration. This diagnostic uses training seed $0$, evaluation seed $100$, and the implementation setting \texttt{random\_state=0}; no uncertainty bars are shown. Across its plotted range $C\in\{10,15,20,30,50,80,100\}$, larger class counts increase MountainCar one-step error while reducing its 50-step error after the initial peak. CartPole exhibits a smaller long-horizon decrease and little systematic one-step improvement. The effect of class count depends on horizon. This single-fit diagnostic is separate from the three-seed results in Table~\ref{tab:exp_policy}.
\begin{figure}[tbp]
\centering
\includegraphics[width=0.86\linewidth]{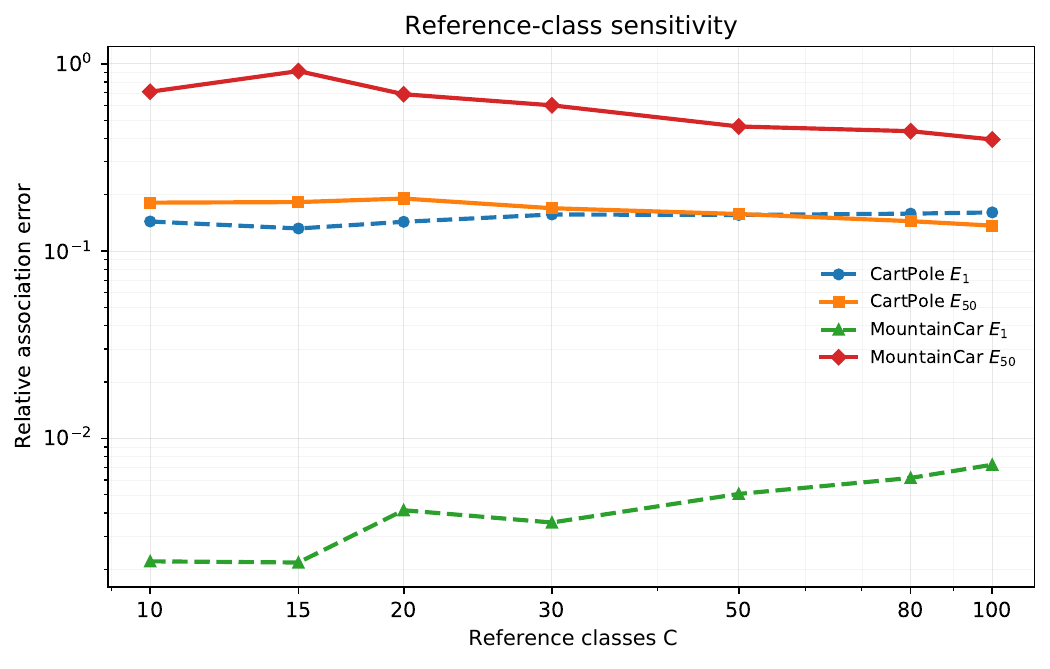}
\caption{CartPole and MountainCar sensitivity to class count in a single diagnostic fit, using training seed $0$, evaluation seed $100$, and \texttt{random\_state=0}. For each task and class count $C$, $\omega$ is selected from the diagnostic grid to minimize soft-abstraction relative error at $h=50$; dashed curves show $E_1$ and solid curves show $E_{50}$ for that selected configuration. No uncertainty bars are shown. This diagnostic is separate from the three-seed selected-model results in Table~\ref{tab:exp_policy}.}
\Description{One-step and 50-step relative soft-coordinate errors against class count for CartPole and MountainCar.}
\label{fig:exp_regime_count}
\end{figure}
Under the fixed randomized behavior policies, MountainCar and Acrobot
have positive mean one-step $R^2$, whereas CartPole does not. All three selected models have negative mean $R_{50}^2$. Their mean variance-normalized squared errors therefore exceed the constant evaluation-mean predictor's value of one.
These values characterize prediction of realized stochastic
soft-coordinate trajectories and should not be interpreted as estimates of
the deterministic closure risk. The coordinates may still group states with similar observed actions. The action-frequency and visit-count summaries below assess this descriptive use separately from long-horizon prediction. Acrobot additionally compares direct soft-coordinate prediction with
polynomial state predictors (Table~\ref{tab:exp_acrobot}). The state
models fit the next observation directly from polynomial features,
recompute those features after each predicted step, and clip every
predicted observation coordinate to $[-5,5]$ during multistep rollout.
Clipping limits numerical growth but does not ensure a valid Acrobot observation, whose coordinates have different admissible ranges. The multistep scores therefore evaluate these polynomial predictors together with the specified clipping rule. The affine, degree-two, and degree-three state predictors share the selected KAHM coordinate target. Each centroid-based method instead predicts its own coordinate map. Among predictors sharing the KAHM target, direct coordinate prediction has the lowest reported mean $E_{50}$. The quadratic and cubic state predictors have smaller one-step errors. Thus the ordering changes with horizon. Other constraints on state predictions or other rollout-stabilization rules were not compared. The lower block of Table~\ref{tab:exp_acrobot} instead compares different
coordinate maps in their own output spaces.
\begin{table}[tbp]
\centering
\caption{Acrobot comparison at $C=150$, $\omega=0.5$. Entries are means $\pm$ standard deviations over three fits (divisor $3$). State predictors share the KAHM soft-coordinate target and use the per-step clipping rule
described in the text; the lower block predicts each centroid-based map's own coordinates. Per-step simplex projection leaves the reported KAHKM scores unchanged at the displayed precision.}
\label{tab:exp_acrobot}
\small
\setlength{\tabcolsep}{3pt}
\begin{tabular}{lrrrr}
\toprule
Method & $E_1$ & $R_1^2$ & $E_{50}$ & $R_{50}^2$\\
\midrule
KAHKM (NLMS) & $0.0096\pm0.0014$ & $0.7566\pm0.0307$ & $0.0577\pm0.0019$ & $-0.3860\pm0.2369$\\
KAHKM + vector projection & $0.0096\pm0.0014$ & $0.7566\pm0.0307$ & $0.0577\pm0.0019$ & $-0.3860\pm0.2369$\\
Affine state predictor & $0.0096\pm0.0003$ & $0.7535\pm0.0325$ & $0.0663\pm0.0150$ & $-0.5856\pm0.3776$\\
Polynomial state predictor, degree 2 & $0.0059\pm0.0001$ & $0.8475\pm0.0208$ & $0.0806\pm0.0163$ & $-0.8703\pm0.0928$\\
Polynomial state predictor, degree 3 & $0.0030\pm0.0003$ & $0.9247\pm0.0076$ & $0.0790\pm0.0083$ & $-0.9422\pm0.5819$\\
\midrule
$K$-means RBF + NLMS & $0.0313\pm0.0167$ & $0.7730\pm0.0598$ & $0.1524\pm0.0382$ & $-0.0068\pm0.1800$\\
$K$-means distance + NLMS & $0.1886\pm0.0510$ & $0.5052\pm0.0680$ & $0.3526\pm0.0313$ & $0.0232\pm0.0828$\\
$K$-means hard + NLMS & $0.8162\pm0.0379$ & $0.1720\pm0.0366$ & $0.9887\pm0.0082$ & $-0.0039\pm0.0085$\\
\bottomrule
\end{tabular}
\end{table}
Table~\ref{tab:exp_behavior} summarizes behavior under the fixed policies. Each observation receives a dominant-coordinate label $c_t^{\mathrm{dom}}\in\argmax_c(\Phi(x_t))_c$, with fixed tie breaking; this label need not equal its reference label $c(x_t)$. For each observed group, we compute the frequency of its most common action. We then average these frequencies, weighting each group by its observation count. The result is the fraction of observations whose action matches the most common action in their group. Without comparison with overall action frequencies or alternative groupings, it does not establish improved action prediction.
\begin{table}[tbp]
\centering
\caption{Dominant-coordinate behavioral summaries under the fixed policies.
CartPole and MountainCar use one diagnostic fit with training seed $0$
and evaluation seed $2000$; Acrobot entries summarize three fits with standard deviations using divisor $3$. Observed labels: number of distinct largest-coordinate indices, shown relative to $C$. Max. action frequency: frequency of the most common action in each group, averaged with group-size weights. Projected diagonal: diagonal entries of the row-projected $\widehat B$, averaged with those weights. Label persistence: fraction of consecutive observations sharing the same largest-coordinate index.}
\label{tab:exp_behavior}
\small
\setlength{\tabcolsep}{4pt}
\begin{tabular}{lrrrr}
\toprule
Task & Observed labels & Max. action frequency & Projected diagonal &
Label persistence\\
\midrule
CartPole & $17/20$ & $0.792$ & $0.764$ & $0.016$\\
MountainCar & $94/100$ & $0.883$ & $0.999$ & $0.443$\\
Acrobot & $(116.0\pm14.4)/150$ & $0.8805\pm0.0179$ & NA & NA\\
\bottomrule
\end{tabular}
\end{table}
\begin{figure}[tbp]
\centering
\includegraphics[width=0.80\linewidth]{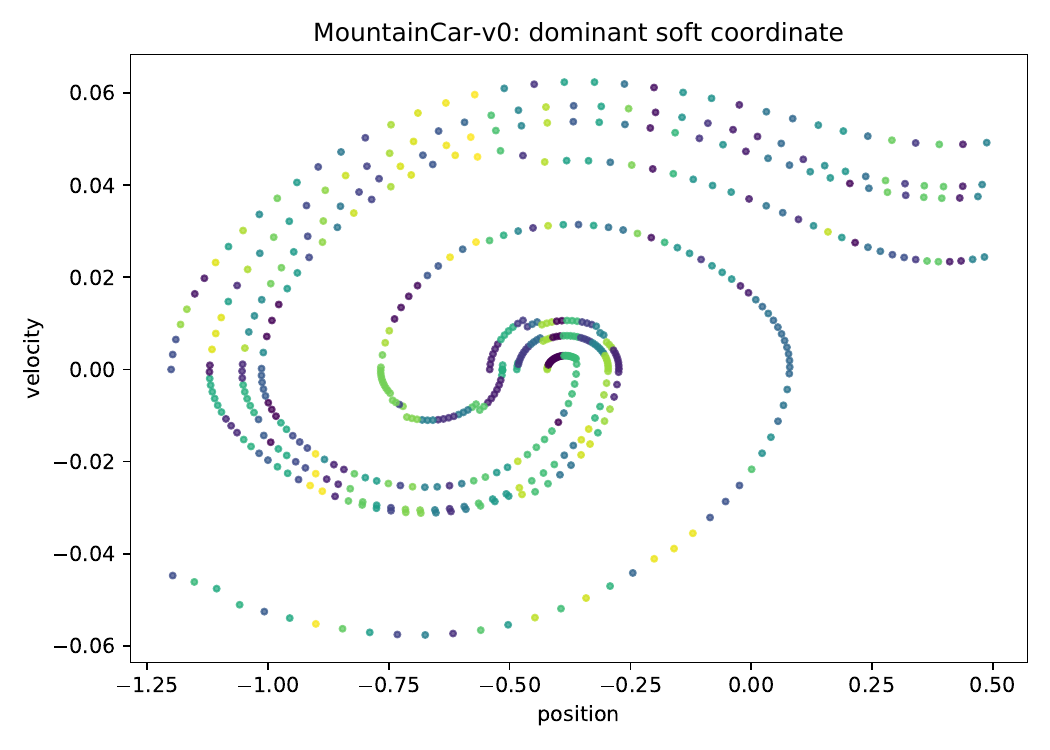}
\caption{MountainCar dominant abstraction coordinates at $C=100$, $\omega=1$, using training seed $0$ and evaluation seed $2000$. Colors identify each state's largest-coordinate index along position--velocity trajectories. Indices are category labels: their numerical order has no dynamical meaning. The plot shows which groups are visited under the fixed policy.}
\Description{MountainCar position and velocity colored by dominant abstraction coordinate.}
\label{fig:exp_regimes}
\end{figure}
\begin{figure}[tbp]
\centering
\includegraphics[width=0.80\linewidth]{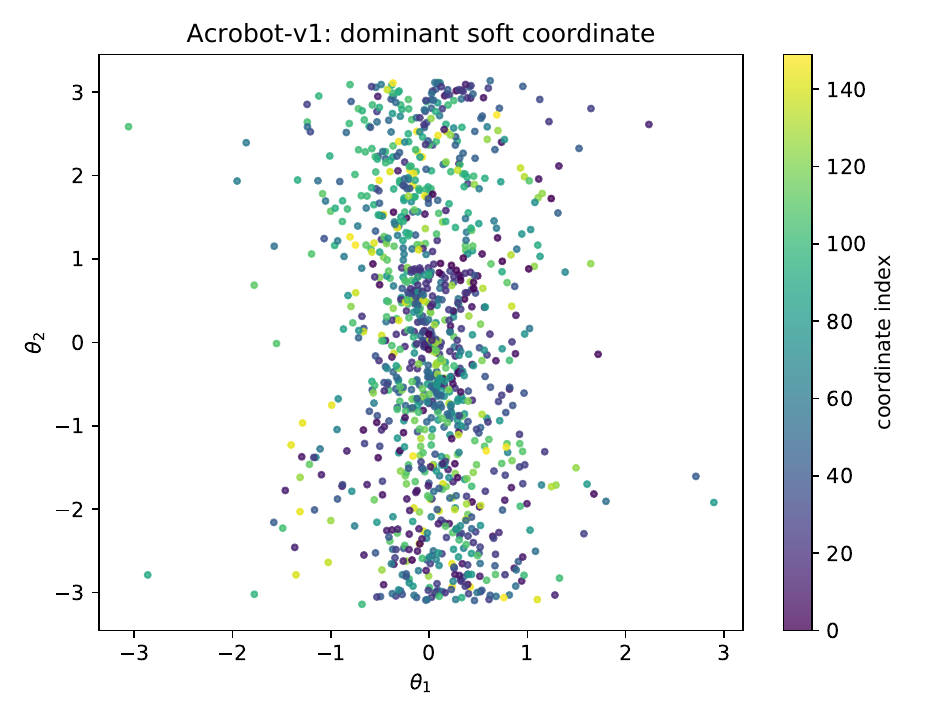}
\caption{Acrobot dominant abstraction coordinates in the joint-angle plane $(\theta_1,\theta_2)$ for a representative fit at $C=150$, $\omega=0.5$. Colors identify the largest-coordinate index at each state; index order has no dynamical meaning. Figure~\ref{fig:exp_acrobot_energy} provides a second projection of the same fitted abstraction.}
\Description{Acrobot joint-angle coordinates colored by dominant abstraction coordinate.}
\label{fig:exp_acrobot_views}
\end{figure}
\begin{figure}[tbp]
\centering
\includegraphics[width=0.80\linewidth]{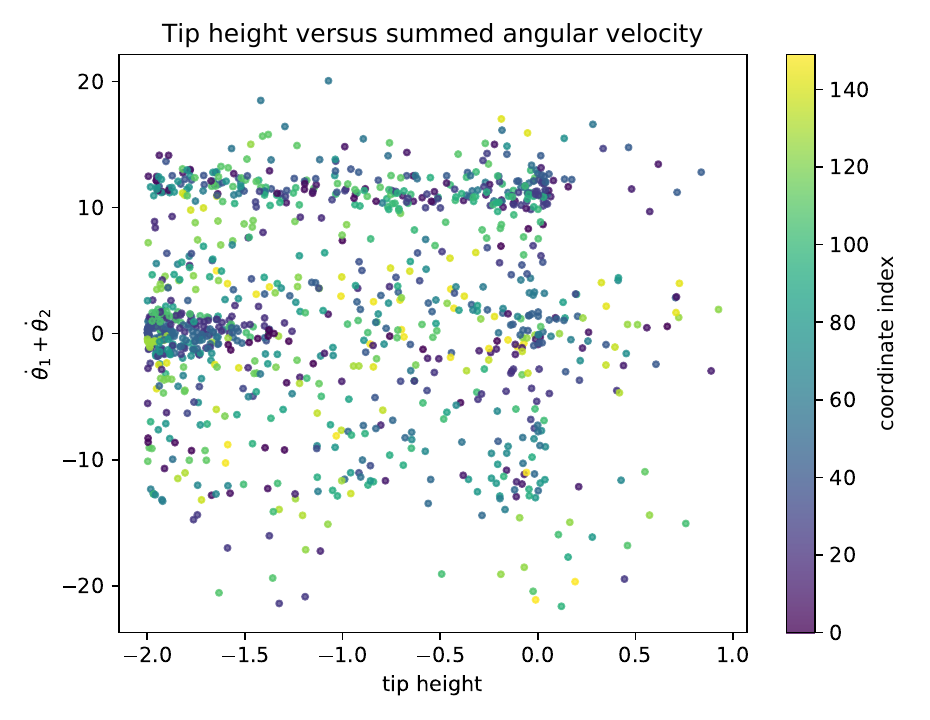}
\caption{Acrobot dominant abstraction coordinates in tip-height and combined-angular-velocity coordinates, $\dot\theta_1+\dot\theta_2$, for the same fit as Figure~\ref{fig:exp_acrobot_views}. These coordinates summarize visited states under the policy; they do not measure total mechanical energy.}
\Description{Acrobot tip height versus combined angular velocity, colored by the index of the largest abstraction coordinate.}
\label{fig:exp_acrobot_energy}
\end{figure}
The corresponding training relative errors and $R^2$ values are
$(0.0610,0.1981)$ for CartPole and $(0.00347,0.9397)$ for
MountainCar; across the Acrobot fits they are
$0.0109\pm0.0006$ and $0.7479\pm0.0209$. These values describe
training fit; Table~\ref{tab:exp_policy} reports evaluation results. Figures~\ref{fig:exp_regimes}--\ref{fig:exp_acrobot_energy} group visited states by their largest soft coordinate; action frequencies summarize behavior within these groups. These summaries do not show that the groups predict actions better than simply using each action's overall frequency. Diagonal entries of the row-projected prediction matrix and observed persistence of the largest-coordinate label measure different properties. The
count-weighted diagonal of the projected soft-coordinate matrix is $0.764$
for CartPole and $0.999$ for MountainCar, whereas the empirical
frequency of an unchanged dominant coordinate is $0.016$ and $0.443$.
The projected matrix acts on the full soft-coordinate vector. Its diagonal entries are therefore not estimates of the probability that the largest-coordinate label stays unchanged.
\subsection{Held-out spectral validation}\label{sec:exp_spectral}
For each candidate function, we compare weighted successor features with an eigenvalue times weighted current features. This aggregate spectral diagnostic differs from coordinate-prediction error. Process noise and the absence of established exact closure prevent its interpretation as the deterministic adjoint residual. The selected representations are $(C,\omega)=(10,2)$ for Duffing and $(25,4)$ for Van der Pol. We use 3600 training transitions from seeds $0,1,2$, 2400 held-out transitions from seeds $100,101,102$, and a fixed 800-pair held-out spectral subsample. Besides NLMS, ridge closure minimizes $\|\Phi_{\mathrm{train}}^+-B^\top\Phi_{\mathrm{train}}\|_F^2+10^{-8}\|B\|_F^2$, using the summed loss. The columns of $\Phi_{\mathrm{train}}$ and $\Phi_{\mathrm{train}}^+$ contain current and successor training features, respectively. An uncentered SVD of $\Phi_{\mathrm{train}}$ gives the basis $\widehat Q_\Phi$, retaining directions satisfying
\[
\sigma_j>10^{-10}\sigma_{\max}.
\]
The numerical training rank $\widehat r_\Phi$ counts directions retained from a finite sample under the SVD threshold. It may differ from $r_\Phi$, the dimension spanned by feature vectors over all states. Define
\[
M_{\mathrm{red}}
=\widehat Q_\Phi^\top\widehat B\widehat Q_\Phi
\in\mathbb{R}^{\widehat r_\Phi\times\widehat r_\Phi}.
\]
For an eigenpair $(\lambda,\widetilde w)$ of $M_{\mathrm{red}}^\top$, set $w=\widehat Q_\Phi\widetilde w$. On the observed spectral pairs
$\mathrm E_{\mathrm{spec}}=\{(x'^i,x'^{+,i})\}_{i=1}^{800}$, form
\[
\Phi_{\mathrm{spec}}
:=\begin{bmatrix}
\Phi(x'^1)&\cdots&\Phi(x'^{800})
\end{bmatrix},
\qquad
\Phi_{\mathrm{spec}}^+
:=\begin{bmatrix}
\Phi(x'^{+,1})&\cdots&\Phi(x'^{+,800})
\end{bmatrix}
\in\mathbb R^{C\times800}.
\]
Choose weights $v$ so that the weighted sum of current feature vectors approximates the candidate coefficient vector $w$. Specifically, minimize $\|\Phi_{\mathrm{spec}}v-w\|^2+10^{-8}\|v\|^2$. We then test whether the weighted successor vectors equal $\lambda$ times the weighted current vectors. The two diagnostics from Remark~\ref{rem_spectal_validation} are
\begin{equation}
\eta_{\mathrm{rep}}
=\frac{\|\Phi_{\mathrm{spec}}v-w\|^2}{\|w\|^2},
\qquad
\rho
=\frac{\|\Phi_{\mathrm{spec}}^+v-\lambda\Phi_{\mathrm{spec}}v\|^2}
{\|\Phi_{\mathrm{spec}}^+v\|^2}.
\label{eq:exp_spectral_residual}
\end{equation}
The first ratio, $\eta_{\mathrm{rep}}$, measures how accurately the weighted current features represent $w$. The second, $\rho$, compares the weighted successor features with $\lambda$ times the weighted current features, using the normalization in Definition~\ref{def_210720260931}. The successor columns here contain observed stochastic outcomes. The implementation also checks the equivalent Gram-matrix denominator $v^*Rv$, where $*$ denotes conjugate transpose and
\[
R=(\Phi_{\mathrm{spec}}^+)^*\Phi_{\mathrm{spec}}^+.
\]
The numerical training feature rank equals $C$: $\widehat r_\Phi=10$ for Duffing and $25$ for Van der Pol. If the sampled features span fewer directions, or the numerical threshold removes some directions, the same basis construction restricts the analysis to those retained.
\begin{table}[tbp]
\centering
\caption{Aggregate spectral diagnostics on held-out data. Candidates are eigenpairs of $M_{\mathrm{red}}^\top$, the transpose of the fitted prediction matrix projected onto the span of training feature vectors. The residual $\rho$ and representation defect $\eta_{\mathrm{rep}}$ use~\eqref{eq:exp_spectral_residual} on a fixed 800-pair held-out spectral subsample. The count triple reports the number of valid candidates satisfying $\rho\leq0.01$, $0.05$, and $0.10$, respectively. Here $\widehat r_\Phi=C$; $\operatorname{spr}$ denotes spectral radius.}
\label{tab:exp_spectra}
\scriptsize
\setlength{\tabcolsep}{2.6pt}
\resizebox{\linewidth}{!}{%
\begin{tabular}{llrrcrrrrl}
\toprule
System &
Operator &
$E_1$ &
$R_1^2$ &
$\widehat r_\Phi/C$ &
$\operatorname{spr}(M_{\mathrm{red}})$ &
Min.\ $\rho$ &
Median $\rho$ &
Max.\ $\eta_{\mathrm{rep}}$ &
Counts\\
\midrule
Duffing &
NLMS &
$2.50\times10^{-4}$ &
0.9835 &
$10/10$ &
1.000 &
$2.75\times10^{-5}$ &
$5.88\times10^{-2}$ &
$4.58\times10^{-10}$ &
$3/4/9$\\

&
Ridge least squares &
$2.41\times10^{-4}$ &
0.9842 &
$10/10$ &
1.000 &
$9.52\times10^{-7}$ &
$6.44\times10^{-4}$ &
$3.66\times10^{-12}$ &
$9/10/10$\\
\midrule

Van der Pol &
NLMS &
$3.05\times10^{-3}$ &
0.9963 &
$25/25$ &
1.000 &
$1.89\times10^{-5}$ &
$1.75\times10^{-4}$ &
$1.32\times10^{-14}$ &
$23/25/25$\\

&
Ridge least squares &
$3.03\times10^{-3}$ &
0.9964 &
$25/25$ &
1.000 &
$1.91\times10^{-5}$ &
$1.01\times10^{-4}$ &
$1.13\times10^{-14}$ &
$25/25/25$\\
\bottomrule
\end{tabular}%
}
\end{table}
NLMS and ridge least squares have similar one-step prediction scores within each system. Their candidate spectral residuals nevertheless differ, especially on Duffing. For Duffing, the median held-out spectral residual decreases from $5.88\times10^{-2}$ under NLMS to $6.44\times10^{-4}$ under ridge least squares; the number of candidates with $\rho\leq0.01$ increases from $3$ of $10$ to $9$ of $10$. For Van der Pol, both estimators yield small residuals: $23$ of $25$ NLMS candidates and all $25$ ridge candidates satisfy $\rho\leq0.01$, with median residuals $1.75\times10^{-4}$ and $1.01\times10^{-4}$, respectively. Thus similar one-step errors can accompany substantially different aggregate spectral diagnostics.
\begin{figure}[tbp]
\centering
\begin{minipage}{0.49\linewidth}
\includegraphics[width=\linewidth]{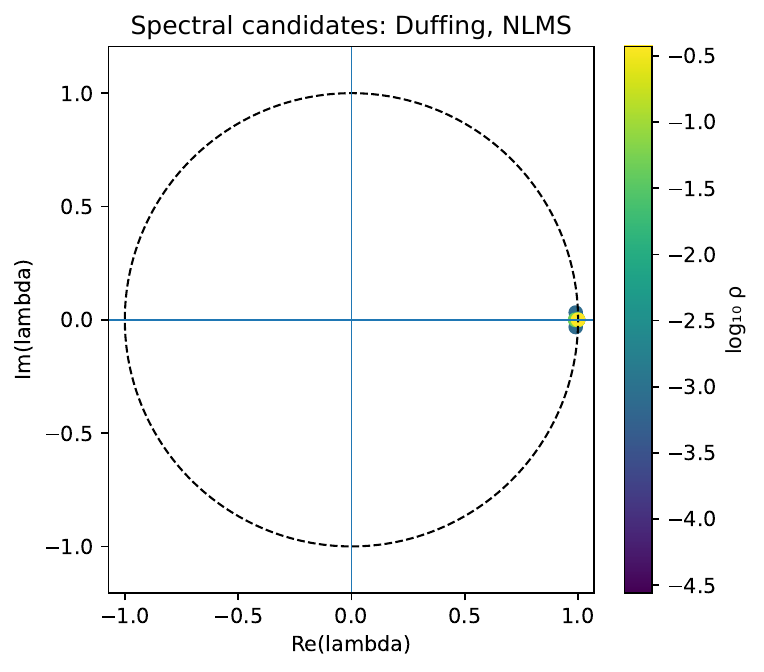}
\end{minipage}\hfill
\begin{minipage}{0.49\linewidth}
\includegraphics[width=\linewidth]{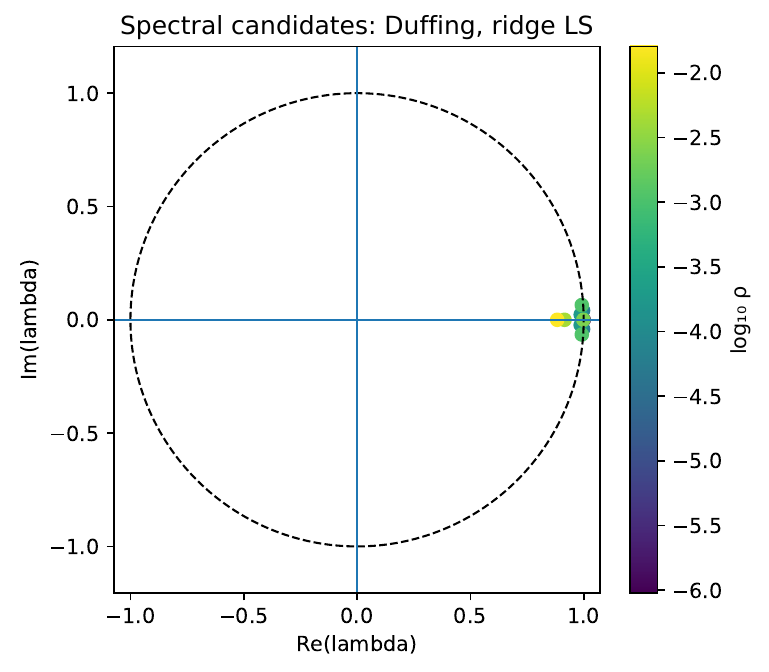}
\end{minipage}\\[8pt]
\begin{minipage}{0.49\linewidth}
\includegraphics[width=\linewidth]{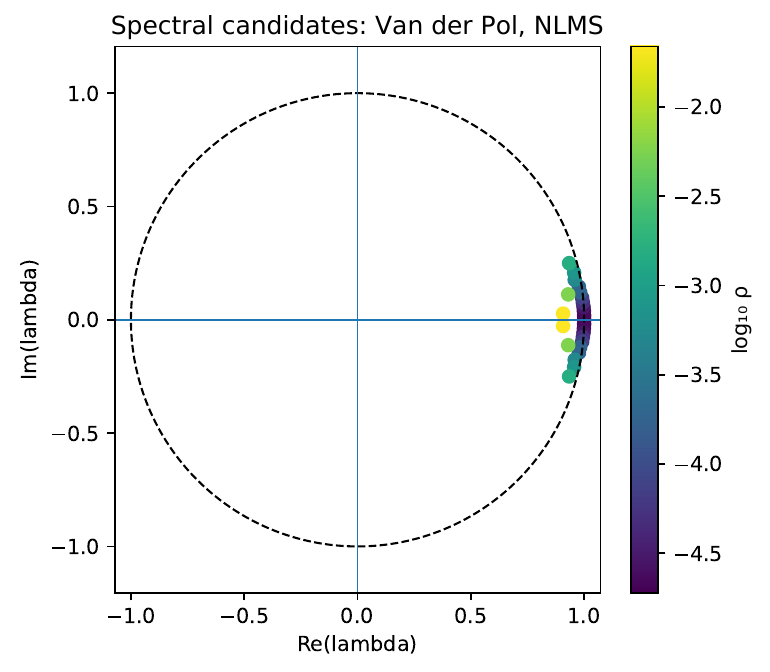}
\end{minipage}\hfill
\begin{minipage}{0.49\linewidth}
\includegraphics[width=\linewidth]{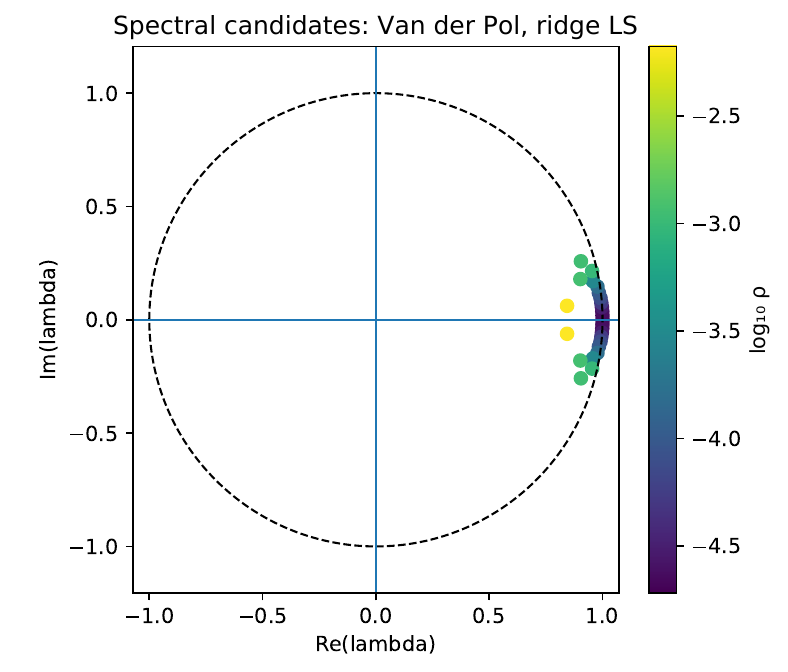}
\end{minipage}
\caption{Spectral candidates in the empirical feature span for Duffing $(C,\omega)=(10,2)$ above and Van der Pol $(25,4)$ below; NLMS is at left and ridge least squares at right. Candidates come from $M_{\mathrm{red}}^\top$, using the SVD basis estimated from training features. The dashed circle has unit radius. Colors show $\log_{10}\rho$ for the held-out spectral residual in~\eqref{eq:exp_spectral_residual}; color scales are determined separately for each panel. The colors measure discrepancies between weighted feature sums; they do not certify eigenfunctions.}
\Description{Four complex-plane plots of reduced-matrix spectral candidates for Duffing and Van der Pol, with a unit circle and color bars showing the logarithm of the held-out spectral residual.}
\label{fig:exp_spectral_panels}
\end{figure}
The weighted current feature vectors $\Phi_{\mathrm{spec}}v$ closely approximate the candidate coefficient vectors $w$. The largest $\eta_{\mathrm{rep}}$ is $4.58\times10^{-10}$ for Duffing NLMS and is below $1.4\times10^{-14}$ for both Van der Pol operators. Direct feature-space evaluation of $\rho$ and the corresponding Gram-matrix formula agree to within $2.7\times10^{-14}$ in the reported runs. Thus the chosen weights represent the candidates accurately. The separate quantity $\rho$ measures agreement between weighted successor and current features. All four spectral radii round to $1.000$, which does not establish bounded rollouts. Rounding may hide moduli above one; nonorthogonal eigenvectors can cause transient amplification; nontrivial Jordan blocks at unit-modulus eigenvalues can cause unbounded growth. Here $\rho$ uses weighted sums, whose terms can cancel, rather than an average of individual squared errors. It also depends on the chosen representation weights and observed stochastic successors. Only under deterministic exact closure is it the adjoint residual in Definitions~\ref{def_210720260931}-\ref{def_0307202260713}. These experimental values therefore do not certify population eigenrelations or exact Koopman eigenfunctions.
\subsection{Experimental scope and reproducibility}\label{sec:exp_scope}
The experiments separately measure coordinate variation, prediction errors, and observed actions within largest-coordinate groups. A representation that performs well on one of these measures need not perform well on the others. The reported evidence is limited to the stated representations, predictors, and data-generating procedures and does not establish reward preservation, a planning model, or calibrated event prediction. Source code, experiment scripts, archived numerical outputs, raw input data, and reproducibility records are available in the accompanying public repository (\url{https://github.com/MohitKumarRostock/Kernel_Affine_Hull_Koopman_Machine}). Appendix~\ref{app:protocol} identifies the scripts and saved outputs associated with each study and explains the roles of tuning, validation, and evaluation data.
\FloatBarrier
\section{Discussion}\label{sec:discussion}
Agreement with reference labels does not ensure accurate linear prediction. Differences between class reconstruction scores bound how far soft coordinates can lie from one-hot reference labels. The exclusion certificate gives a lower bound on the smallest root-mean-square prediction error allowed by the prediction-matrix norm limit. The eigenfunction results require deterministic dynamics and exact coordinate prediction at every state.

\subsection{The contribution of the reference-label comparison}\label{sec:discussion_abstraction}
Both a suitable uniform regression-risk guarantee and our certificate can establish that every permitted linear prediction matrix exceeds a chosen error tolerance. Equation~\eqref{eq:optimized_empirical_lower_bound} requires the minimum empirical risk or a certified lower bound on it. Our certificate uses assignment loss and successor variation around sample class means. It bounds the error of hard-label prediction from below and the possible improvement from soft coordinates from above. If their difference exceeds the tolerance, refitting the prediction matrix within the spectral-norm limit cannot achieve the required accuracy. The comparison identifies when coordinates close to the same class label cannot distinguish states well enough for accurate prediction under the matrix-norm limit. The certificate covers any fixed measurable simplex-valued map. KAHM supplies a coordinate construction, reconstruction-margin bounds on label agreement, and a four-state example with unavoidable prediction error. The induced RKHS consists of linear combinations of the coordinates. Exact closure is required for the stated Koopman and adjoint eigenfunction interpretations.
\subsection{Informativeness and reference-class design}\label{sec:discussion_design}
If $L_{\kappa,\delta}>\varepsilon$, the confidence guarantee establishes that every prediction matrix with spectral norm at most $\kappa$ has root-mean-square error above $\varepsilon$. For fixed coordinates and i.i.d. evaluation pairs, Corollary~\ref{cor:certificate_consistency} guarantees eventual detection, almost surely, whenever the population lower bound exceeds the tolerance. That bound need not equal the minimum attainable error. In the four-state example, the minimum root-mean-square error is $15/32$, but the certificate converges to $13/32$. It eventually detects every tolerance below $13/32$, while some larger, still unattainable tolerances remain undetected. Estimating class means from evaluation data underestimates within-class successor variance on average (Proposition~\ref{prop:class_occupancy_bias}). With few pairs per class, the certificate can remain below the tolerance. Changing class count or assignment sharpness changes both predictor inputs and targets, so neither change guarantees better accuracy or a larger lower bound. Selecting a representation using evaluation results requires fresh data or a guarantee covering that selection. For a fixed target, subdividing classes cannot remove successor randomness that remains even when the full current state is known.
\subsection{What the experiments establish}\label{sec:discussion_prediction}
When predictors share the same soft-coordinate target, polynomial EDMD has lower long-horizon error on Duffing, while direct coordinate prediction has lower long-horizon error on the selected Van der Pol map. These rankings compare particular fitted models; they do not determine the minimum error over all prediction matrices. In the abstraction comparisons, each model predicts a different coordinate map. Those errors describe how well each representation is predicted by its own fitted model, rather than prediction of one common quantity. In the control tasks, groups defined by the largest coordinate summarize visits and observed actions. For each run, \(1-R_{50}^2\) is the model’s squared error divided by that of the constant evaluation-mean predictor. A negative mean \(R_{50}^2\) therefore means that this error ratio exceeds one on average. Duffing matrices fitted by NLMS and ridge least squares have similar one-step errors but different discrepancies between weighted feature sums. These diagnostics describe distinct properties; they establish neither population eigenrelations nor deterministic exact closure.

\subsection{Scope and practical use}\label{sec:discussion_scope}
Proofs, consistency analysis, and the four-state construction establish the certificate and its properties. The benchmarks use low-dimensional observations, limited initial-condition coverage, prespecified baselines, and mostly three replicates. They do not evaluate the certificate numerically or establish general forecasting superiority, reward preservation, or planning sufficiency. Appendix~\ref{sec:fidelity_diagnostics} specifies which theoretical quantities were evaluated. 

Fix the coordinate map, evaluation distribution, acceptable root-mean-square error, and prediction-matrix spectral-norm limit. Use evaluation pairs drawn independently from the same fixed joint distribution and kept separate from construction, fitting, and selection. Alternatively, apply Appendix H to independent, identically distributed trajectories, giving each trajectory equal weight. A lower confidence bound above the tolerance excludes every permitted prediction matrix. An upper confidence bound at or below it certifies a fitted matrix, provided it meets the norm limit. For a joint guarantee, the two failure probabilities must sum to at most the allowed total. If the lower bound does not exceed the tolerance and no permitted fitted matrix has an upper bound at or below it, these tests leave attainability undecided.

\section{Conclusion}\label{sec:conclusion}
We give a computable lower confidence bound on the root-mean-square error of predicting next-state soft coordinates. It holds simultaneously for all linear prediction matrices within a specified spectral-norm limit. A predictor using only a reference label assigns one successor vector to every state in that class, so within-class successor variation creates unavoidable error. Soft coordinates may reduce this error, but their distance from the labels and the matrix-norm limit bound the possible improvement. Independent, identically distributed evaluation pairs provide a lower confidence bound without fitting a prediction matrix.

Using sample class means underestimates within-class successor variance on average; we quantify this bias. For fixed coordinates and i.i.d. evaluation pairs, the certificate converges almost surely to the population lower bound. Any tolerance strictly below that limit is eventually certified unattainable by prediction matrices within the specified norm limit. For KAHM, reconstruction-score margins bound the distance from labels. Keeping one nonconstant coordinate map fixed, the four-state construction permits exact linear prediction under one dynamics map and has unavoidable prediction error under another. The benchmarks do not evaluate the certificate numerically.

The experiments measure coordinate variation, prediction errors, action frequencies within state groups, and spectral diagnostics from weighted feature sums. Under deterministic exact closure, eigenvectors of the prediction matrix and its reduced transpose generate Koopman and adjoint eigenfunctions, respectively, provided the resulting functions are nonzero.

\begin{acks}
The research reported in this paper has been supported by the Austrian Ministry for Transport, Innovation and Technology, the Federal Ministry for Digital and Economic Affairs, and the State of Upper Austria through the SCCH competence center INTEGRATE (FFG grant no. 892418), part of the FFG COMET Competence Centers for Excellent Technologies Programme. 
\end{acks}

\printbibliography
\appendix

\section*{Appendix A. KAHM Construction Details}
This appendix records the KAHM construction used in~\eqref{eq_220420251843}, following prior KAHM work~\cite{kumar2025operator,kumar2026semantic}. The construction has four components: an encoding matrix, a Gaussian kernel, regularized regression functions, and normalized affine reconstruction weights. Throughout this appendix, the reference matrix $ \mathbf X = \left[
 x^1\ \cdots\ x^N
 \right]^\top
 \in\mathbb R^{N\times n}$ is assumed to contain $N\geq2$ pairwise distinct samples. The encoding returned by Algorithm~\ref{algorithm_encoding_matrix} must also map distinct reference samples to distinct vectors, i.e., $
 \mathbf P_{\mathbf X}x^i
 \neq
 \mathbf P_{\mathbf X}x^j,
 \; i\neq j$.
The encoding must also retain at least one direction, and the encoded reference points must have positive-definite sample covariance. A dataset failing any of these requirements is excluded from the construction.
\begin{itemize}
\item $\mathbf{P}_{\mathbf{X}}
\in\mathbb{R}^{\underline{n}\times n}$,
with
$\underline{n}\in\{1,\cdots,n\}$,
is an encoding matrix such that
$\mathbf{P}_{\mathbf{X}}x$ is an
$\underline n$-dimensional encoding of $x$.
The encoding matrix is computed from the reference samples
$\mathbf X$ using Algorithm~\ref{algorithm_encoding_matrix}. The algorithm orders encoding directions by decreasing sample-covariance eigenvalue. It then checks the sample range of each encoded coordinate. If any range is below $10^{-3}$ and more than one direction remains, it removes the last direction and checks again. Every retained eigenvalue exceeds double-precision machine epsilon $\epsilon_{\rm mach}$. 
\begin{algorithm}
\caption{Determination of Encoding Matrix $\mathbf{P}_{\mathbf{X}}$}
\begin{algorithmic}[1]
\Require $\mathbf X\in\mathbb R^{N\times n}$, $N\geq2$.
\State Let $r_{\mathbf X}$ count sample-covariance eigenvalues $>\epsilon_{\rm mach}$.
\If{$r_{\mathbf X}=0$}
\State \Return inadmissible.
\EndIf
\State $\underline n\gets\min(20,n,N-1,r_{\mathbf X})$.
\State Form the rows of $\mathbf P_{\mathbf X}$ from the transposed eigenvectors for the largest $\underline n$ covariance eigenvalues.
\While{$\underline n>1$ and
$\displaystyle\min_j\bigl[\max_i(\mathbf P_{\mathbf X}x^i)_j-\min_i(\mathbf P_{\mathbf X}x^i)_j\bigr]<10^{-3}$}
\State Remove $\mathbf P_{\mathbf X}$'s last row; $\underline n\gets\underline n-1$.
\EndWhile
\State \Return $\mathbf P_{\mathbf X}$.
\end{algorithmic}
\label{algorithm_encoding_matrix}
\end{algorithm}
For a matrix returned by
Algorithm~\ref{algorithm_encoding_matrix}, the KAHM construction proceeds only when
$\{\mathbf P_{\mathbf X}x^i\}_{i=1}^N$ are pairwise distinct and their sample covariance is positive definite. These are the admissibility requirements imposed on class-specific reference datasets in Definition~\ref{definition_200720261153}.
\item Define the encoded space by
\begin{IEEEeqnarray}{rCl}
 \underline{\mathcal X}
 &:=&
 \{\mathbf P_{\mathbf X}x\mid x\in\mathbb R^n\},
\end{IEEEeqnarray}
and equip it with the positive-definite real-valued kernel
$k_{\mathbf X}:
\underline{\mathcal X}\times\underline{\mathcal X}\rightarrow
\mathbb R$ and its corresponding reproducing kernel Hilbert space
$\mathcal H_{k_{\mathbf X}}(\underline{\mathcal X})$, where
\begin{IEEEeqnarray}{rCl}
\label{eq_090420261651}
 k_{\mathbf X}(\underline x^i,\underline x^j)
 &:=&
 \exp\left(
 -\frac{1}{2\underline n}
 (\underline x^i-\underline x^j)^\top
 \Sigma_{\mathbf X}^{-1}
 (\underline x^i-\underline x^j)
 \right).
\end{IEEEeqnarray}
Here
$\underline x^i,\underline x^j\in\underline{\mathcal X}$ and
$\Sigma_{\mathbf X}\succ0$ is the sample covariance matrix of
\[
 \{\mathbf P_{\mathbf X}x^1,\cdots,
   \mathbf P_{\mathbf X}x^N\}.
\]
\item For each $i\in\{1,\cdots,N\}$, the function
$h_{\mathbf X}^i:
\underline{\mathcal X}\rightarrow\mathbb R$, with
$h_{\mathbf X}^i\in
\mathcal H_{k_{\mathbf X}}(\underline{\mathcal X})$,
fits target $1$ at encoded reference point $i$ and $0$ at the other reference points by kernel-regularized least squares:
\begin{IEEEeqnarray}{rCl}
 h_{\mathbf X}^i
 &=&
 \arg\min_{f\in
 \mathcal H_{k_{\mathbf X}}(\underline{\mathcal X})}
 \left(
 \sum_{j=1}^N
 \left|
 \mathbbm{1}_{\{\mathbf P_{\mathbf X}x^i\}}
 (\mathbf P_{\mathbf X}x^j)
 -
 f(\mathbf P_{\mathbf X}x^j)
 \right|^2
 +
 \lambda_{\mathbf X}^*
 \|f\|_{\mathcal H_{k_{\mathbf X}}
 (\underline{\mathcal X})}^2
 \right).
 \IEEEeqnarraynumspace
\end{IEEEeqnarray}
Because the encoded reference points are pairwise distinct,
\[
 \mathbbm{1}_{\{\mathbf P_{\mathbf X}x^i\}}
 (\mathbf P_{\mathbf X}x^j)
 =
 \delta_{ij}.
\]
Thus the target vector for $h_{\mathbf X}^i$ is the $i$th row of $\mathbf I_N$. Regularization means the fitted values need not interpolate these targets exactly. The regularization parameter
$\lambda_{\mathbf X}^*\in\mathbb R_+$ is given by
\begin{IEEEeqnarray}{rCl}
\label{eq_020920251811}
 \lambda_{\mathbf X}^*
 &=&
 \hat e+\frac{2}{nN}\|\mathbf X\|_F^2,
\end{IEEEeqnarray}
where $\hat e$ is the unique fixed point of $\mathcal R_{\mathbf X}$ in its first argument, with the second argument fixed as shown:
\begin{IEEEeqnarray}{rCl}
\label{eq_090120230831}
 \hat e
 &=&
 \mathcal R_{\mathbf X}
 \left(
 \hat e,\frac{2}{nN}\|\mathbf X\|_F^2
 \right),
\end{IEEEeqnarray}
with
$\mathcal R_{\mathbf X}:
\mathbb R_+\times\mathbb R_+\rightarrow\mathbb R_+$ defined by
\begin{IEEEeqnarray}{rCl}
 \mathcal R_{\mathbf X}(e,\kappa)
 &:=&
 \frac{1}{nN}
 \sum_{j=1}^n
 \left\|
 (\mathbf X)_{:,j}
 -
 \mathbf K_{\mathbf X}
 \left(
 \mathbf K_{\mathbf X}
 +(e+\kappa)\mathbf I_N
 \right)^{-1}
 (\mathbf X)_{:,j}
 \right\|^2.
\end{IEEEeqnarray}
Here $\mathbf K_{\mathbf X}$ is the
$N\times N$ kernel matrix whose $(i,j)$th entry is
\begin{IEEEeqnarray}{rCl}
\label{eq_020920251810}
 (\mathbf K_{\mathbf X})_{ij}
 &:=&
 k_{\mathbf X}
 (\mathbf P_{\mathbf X}x^i,
  \mathbf P_{\mathbf X}x^j).
\end{IEEEeqnarray}
The iterations
\begin{IEEEeqnarray}{rCl}
\label{eq_090420261740}
 e|_{it+1}
 &=&
 \mathcal R_{\mathbf X}
 \left(
 e|_{it},\frac{2}{nN}\|\mathbf X\|_F^2
 \right),
 \qquad it\in\{0,1,\cdots\},
 \\
\label{eq_090420261741}
 e|_0
 &\in&
 \left(
 0,\frac{1}{nN}\|\mathbf X\|_F^2
 \right)
\end{IEEEeqnarray}
converge to $\hat e$. Specifically, Theorem~1 in
\cite{kumar2024kahm} establishes that, for the regularization offset
$\tfrac{2}{nN}\|\mathbf X\|_F^2$, the fixed-point iterations
\eqref{eq_090420261740}--\eqref{eq_090420261741}
converge to the unique fixed point $\hat e$ of
$\mathcal R_{\mathbf X}
(\cdot,\tfrac{2}{nN}\|\mathbf X\|_F^2)$.
Using the identity $\mathbbm{1}_{\{\mathbf P_{\mathbf X}x^i\}}
(\mathbf P_{\mathbf X}x^j)=\delta_{ij}$,
the kernel-regularized least-squares solution is
\begin{IEEEeqnarray}{rCl}
\label{eq_010920251549}
 h_{\mathbf X}^i(\cdot)
 &=&
 (\mathbf I_N)_{i,:}
 \left(
 \mathbf K_{\mathbf X}
 +\lambda_{\mathbf X}^*\mathbf I_N
 \right)^{-1}
 \left[
 \begin{IEEEeqnarraybox*}[][c]{,c/c/c,}
 k_{\mathbf X}(\cdot,\mathbf P_{\mathbf X}x^1)
 &
 \cdots
 &
 k_{\mathbf X}(\cdot,\mathbf P_{\mathbf X}x^N)
 \end{IEEEeqnarraybox*}
 \right]^\top.
\end{IEEEeqnarray}
The regression output $h_{\mathbf X}^i(\mathbf P_{\mathbf X}x)$ corresponds to reference sample $i$. Dividing it by the sum of all outputs gives that sample's affine reconstruction weight. The matrix solve in~\eqref{eq_010920251549} depends only on reference data and is performed once during construction. For each new query, we compute its encoding and kernel values, then multiply by the stored regression coefficients. Reusing these coefficients avoids solving a new regression problem for every reconstruction.
\item For $x\in\Omega_{\mathbf X}$, the denominator in~\eqref{eq_220420251843} is nonzero, and
\begin{equation}
\label{eq:kahm_affine_normalization}
\sum_{i=1}^{N}
\frac{h_{\mathbf X}^{i}(\mathbf P_{\mathbf X}x)}
{\sum_{j=1}^{N}h_{\mathbf X}^{j}(\mathbf P_{\mathbf X}x)}=1.
\end{equation}
Thus $\mathcal A_{\mathbf X}[\Omega_{\mathbf X}]\subseteq\mathrm{aff}(\{x^1,\cdots,x^N\})$.
\end{itemize}
\section*{Appendix B. Proof of Proposition~\ref{proposition_300620260944}}
For every $g\in\mathcal H_{\mathcal K}(\mathcal X)$, the reproducing property and adjoint relation give
\[
\langle g,\mathcal K(\cdot,F(x))\rangle
=g(F(x))=(K_Fg)(x)
=\langle K_Fg,\mathcal K(\cdot,x)\rangle
=\langle g,K_F^*\mathcal K(\cdot,x)\rangle.
\]
Both functions belong to the RKHS. Since their inner products with every $g$ agree, their difference is orthogonal to the whole space and must be zero. This proves~\eqref{eq:adjoint_kernel_section_identity_background}.
\section*{Appendix~C. Proof of Proposition~\ref{proposition_070720261027}}
Write $w=w^1+\mathrm i w^2$. Because $\Phi(x)$ is real and lies in $V_\Phi$, the component of $w$ orthogonal to $V_\Phi^{\mathbb C}$ contributes zero to every function value:
\begin{equation}
w^\top\Phi(x)=P_\Phi^{\mathbb C}(w)^\top\Phi(x).
\label{eq_050720261638}
\end{equation}
Thus $f_w=f_{P_\Phi(w^1)}+\mathrm i f_{P_\Phi(w^2)}\in\mathcal H_\Phi^{\mathbb C}$.
The real-space identity $\langle f_u,f_v\rangle=u^\top v$ for $u,v\in V_\Phi$, together with the complex inner product in~\eqref{eq_050720262018}, gives
\[
\langle f_w,f_{w'}\rangle_{\mathcal H_\Phi^{\mathbb C}}
=P_\Phi^{\mathbb C}(w)^\top\overline{P_\Phi^{\mathbb C}(w')}.
\]
Taking $w'=w$ yields the norm formula and shows that the represented function is zero exactly when its projected coefficient is zero.
\section*{Appendix~D. Proof of Proposition~\ref{proposition_170720262159}}
Since $Q_\Phi^\top Q_\Phi=\mathbf I_{r_\Phi}$, the vector $Q_\Phi Q_\Phi^\top w$ lies in $V_\Phi^{\mathbb C}$ and
$Q_\Phi^\top(w-Q_\Phi Q_\Phi^\top w)=0$.
The remainder is therefore orthogonal to $V_\Phi^{\mathbb C}$.
Uniqueness of orthogonal projection gives~\eqref{eq_170720261001}.
\section*{Appendix~E. Proof of Proposition~\ref{prop_koopman_eigenfunction}}
Under the exact-closure assumption
\[
 \Phi(F(x))=B^\top\Phi(x),
 \qquad x\in\mathcal X,
\]
the space $V_\Phi$ is invariant under $B^\top$. Indeed,
$B^\top\Phi(x)=\Phi(F(x))\in V_\Phi$ for every $x\in\mathcal X$,
and hence, by linearity,
\[
 B^\top v\in V_\Phi,
 \qquad v\in V_\Phi.
\]
The same invariance holds for the complexification
$V_\Phi^{\mathbb C}$. The exact-closure relation also shows that composition with $F$ maps $\mathcal H_\Phi^{\mathbb C}$ into itself, as the calculation below makes explicit. Since
$\mathcal H_\Phi^{\mathbb C}$ is finite dimensional, the restricted
complexified Koopman operator is bounded and therefore admits an
adjoint. For any $w\in\mathbb C^C$, the definition of $f_{w}$ and
exact closure give
\begin{align}
 \left(K_{F,\mathbb C}f_{w}\right)(x)
 &=
 f_{w}(F(x)) \nonumber\\
 &=
 w^\top\Phi(F(x)) \nonumber\\
 &=
 w^\top B^\top\Phi(x) \nonumber\\
 &=
 (Bw)^\top\Phi(x) \nonumber\\
 &=
 f_{Bw}(x).
 \label{eq_060720261927}
\end{align}
Thus
\[
 K_{F,\mathbb C}f_{w}
 =
 f_{Bw},
\]
which proves~\eqref{eq_190720260749}. If $Bw=\lambda w$ and $f_{w}\neq0$, then
\[
 K_{F,\mathbb C}f_{w}
 =
 f_{\lambda w}
 =
 \lambda f_{w}.
\]
Hence $f_{w}$ is a Koopman eigenfunction with eigenvalue $\lambda$, proving item~\ref{prop_koopman_eigenfunction_1}. We next express the adjoint in the orthonormal coefficient basis $Q_\Phi$. Let $\widetilde w,\widetilde w'\in\mathbb C^{r_\Phi}$. Since $Q_\Phi\widetilde w\in V_\Phi^{\mathbb C}$,
Proposition~\ref{proposition_070720261027} and
$P_\Phi^{\mathbb C}=Q_\Phi Q_\Phi^\top$ give
\begin{align*}
 \left\langle
 K_{F,\mathbb C}f_{Q_\Phi\widetilde w},
 f_{Q_\Phi\widetilde w'}
 \right\rangle_{\mathcal H_\Phi^{\mathbb C}}
 &=
 \left\langle
 f_{BQ_\Phi\widetilde w},
 f_{Q_\Phi\widetilde w'}
 \right\rangle_{\mathcal H_\Phi^{\mathbb C}}\\
 &=
 \left(
 P_\Phi^{\mathbb C}
 (BQ_\Phi\widetilde w)
 \right)^\top
 \overline{Q_\Phi\widetilde w'}.
\end{align*}
Using
$P_\Phi^{\mathbb C}=Q_\Phi Q_\Phi^\top$ and
$\widetilde B_\Phi=Q_\Phi^\top BQ_\Phi$, we obtain
\[
 P_\Phi^{\mathbb C}
 (BQ_\Phi\widetilde w)
 =
 Q_\Phi Q_\Phi^\top BQ_\Phi\widetilde w
 =
 Q_\Phi\widetilde B_\Phi\widetilde w.
\]
Therefore,
\begin{align*}
 \left\langle
 K_{F,\mathbb C}f_{Q_\Phi\widetilde w},
 f_{Q_\Phi\widetilde w'}
 \right\rangle_{\mathcal H_\Phi^{\mathbb C}}
 &=
 \left(
 Q_\Phi\widetilde B_\Phi\widetilde w
 \right)^\top
 \overline{Q_\Phi\widetilde w'}\\
 &=
 \widetilde w^\top
 \widetilde B_\Phi^\top
 \overline{\widetilde w'}\\
 &=
 \widetilde w^\top
 \overline{
 \widetilde B_\Phi^\top\widetilde w'
 }\\
 &=
 \left\langle
 f_{Q_\Phi\widetilde w},
 f_{Q_\Phi\widetilde B_\Phi^\top
 \widetilde w'}
 \right\rangle_{\mathcal H_\Phi^{\mathbb C}},
\end{align*}
where the third equality uses the fact that
$\widetilde B_\Phi$ is real. By the defining property of the adjoint,
\[
 K_{F,\mathbb C}^*
 f_{Q_\Phi\widetilde w'}
 =
 f_{Q_\Phi\widetilde B_\Phi^\top
 \widetilde w'}.
\]
This proves~\eqref{eq_190720260756}. Now suppose
\[
 \widetilde B_\Phi^\top\widetilde w
 =
 \lambda\widetilde w,
 \qquad
 \widetilde w\neq0.
\]
Then~\eqref{eq_190720260756} gives
\[
 K_{F,\mathbb C}^*
 f_{Q_\Phi\widetilde w}
 =
 f_{Q_\Phi\lambda\widetilde w}
 =
 \lambda
 f_{Q_\Phi\widetilde w}.
\]
To establish an eigenfunction, we must also show that the coefficient vector represents a nonzero function.
Because the columns of $Q_\Phi$ are orthonormal,
\[
 \|Q_\Phi\widetilde w\|
 =
 \|\widetilde w\|>0.
\]
Moreover,
$Q_\Phi\widetilde w\in V_\Phi^{\mathbb C}$, so
Proposition~\ref{proposition_070720261027} yields
\[
 \left\|
 f_{Q_\Phi\widetilde w}
 \right\|_{\mathcal H_\Phi^{\mathbb C}}^2
 =
 \|Q_\Phi\widetilde w\|^2
 =
 \|\widetilde w\|^2
 >0.
\]
Hence
$f_{Q_\Phi\widetilde w}\neq0$, and it is an adjoint
Koopman eigenfunction with eigenvalue $\lambda$. This proves
item~\ref{prop_koopman_eigenfunction_2}. Finally, let
\[
 \mathrm E_{\mathrm{spec}}
 =
 \{(x'^i,F(x'^i))\}_{i=1}^M
\]
and $v=(v_1,\cdots,v_M)^\top\in\mathbb C^M$. By the definition of
the induced kernel,
\begin{IEEEeqnarray}{rCCCl}
\label{eq_020720262122}
 g_v
 & = &
 \sum_{i=1}^M
 v_i\mathcal K(\cdot,x'^i)
 & = &
 \left(\Phi(\cdot)\right)^\top
 \sum_{i=1}^M v_i\Phi(x'^i).
\end{IEEEeqnarray}
Using linearity of $K_{F,\mathbb C}^*$ and the kernel-section identity
\eqref{eq_020720261144},
\begin{IEEEeqnarray}{rCCCl}
\label{eq_020720262123}
 K_{F,\mathbb C}^*g_v
 & = &
 \sum_{i=1}^M
 v_i\mathcal K(\cdot,F(x'^i))
 & = &
 \left(\Phi(\cdot)\right)^\top
 \sum_{i=1}^M v_i\Phi(F(x'^i)).
\end{IEEEeqnarray}
Hence
\[
 K_{F,\mathbb C}^*g_v-\lambda g_v
 =
 f_{q},
\]
where
\[
 q
 :=
 \sum_{i=1}^M v_i\Phi(F(x'^i))
 -
 \lambda\sum_{i=1}^M v_i\Phi(x'^i).
\]
Each feature vector appearing in $q$ belongs to $V_\Phi$, and thus
$q\in V_\Phi^{\mathbb C}$. Proposition~\ref{proposition_070720261027}
therefore gives
\[
 \left\|
 K_{F,\mathbb C}^*g_v-\lambda g_v
 \right\|_{\mathcal H_\Phi^{\mathbb C}}^2
 =
 \left\|
 \sum_{i=1}^M v_i\Phi(F(x'^i))
 -
 \lambda\sum_{i=1}^M v_i\Phi(x'^i)
 \right\|^2,
\]
which proves~\eqref{eq_210720260634}. Similarly,
\[
 \left\|
 K_{F,\mathbb C}^*g_v
 \right\|_{\mathcal H_\Phi^{\mathbb C}}^2
 =
 \left\|
 \sum_{i=1}^M v_i\Phi(F(x'^i))
 \right\|^2,
\]
which proves~\eqref{eq_210720260635}.
\section*{Appendix F. Proof of Proposition~\ref{proposition_110820262057}}
\label{app:energy_identity}
Fix $x\in\mathcal X$ and abbreviate
\[
 v:=\Phi(x),\qquad
 s:=\|v\|^2,\qquad
 u:=\frac{\beta}{1+\beta s},\qquad
 Q:=B_*-\widehat B.
\]
Also write
\[
 \zeta_f:=\zeta_{\mathrm{fit}}(x),
 \qquad
 \zeta_p:=\zeta_{\mathrm{proto}}(x).
\]
By~\eqref{eq:nlms_residuals},
\[
 \zeta_f=Q^\top v+\zeta_p.
\]
The hypothetical update~\eqref{eq:hypothetical_update} therefore gives
\[
 B_*-\widehat B^+(x)
 =
 Q-uv\zeta_f^\top.
\]
\paragraph{Parameter-error identity.} Expanding the change in squared Frobenius norm yields
\begin{align}
 D_\beta(x)
 &=
 \|Q-uv\zeta_f^\top\|_F^2-\|Q\|_F^2
 \nonumber\\
 &=
 -2u\langle Q,v\zeta_f^\top\rangle_F
 +u^2\|v\zeta_f^\top\|_F^2
 \nonumber\\
 &=
 -2u\langle Q^\top v,\zeta_f\rangle
 +u^2s\|\zeta_f\|^2
 \nonumber\\
 &=
 -u(2-us)\|\zeta_f\|^2
 +2u\langle\zeta_p,\zeta_f\rangle.
 \label{eq:appendix_energy_expansion}
\end{align}
The third equality uses the rank-one identities
\[
 \langle Q,v\zeta_f^\top\rangle_F
 =
 \langle Q^\top v,\zeta_f\rangle,
 \qquad
 \|v\zeta_f^\top\|_F^2
 =
 \|v\|^2\|\zeta_f\|^2.
\]
For the final equality, substitute
\[
 Q^\top v=\zeta_f-\zeta_p.
\]
This proves~\eqref{eq:nlms_exact_energy}.
\paragraph{Residual bound.}
Define
\[
 a:=u(2-us),
 \qquad
 r:=\|\zeta_f\|.
\]
Since
\[
 us=\frac{\beta s}{1+\beta s}<1,
\]
we have $a>0$. Rearranging
\eqref{eq:appendix_energy_expansion} and applying the
Cauchy--Schwarz inequality gives
\begin{equation}
 ar^2
 \leq
 2u\|\zeta_p\|r+|D_\beta(x)|.
 \label{eq:energy_quadratic}
\end{equation}
Equivalently,
\[
 ar^2-2u\|\zeta_p\|r-|D_\beta(x)|\leq0.
\]
Because $a>0$ and $r\geq0$, $r$ is bounded by the nonnegative root of
the corresponding quadratic equation:
\begin{align}
 r
 &\leq
 \frac{
 u\|\zeta_p\|
 +
 \sqrt{
 u^2\|\zeta_p\|^2+a|D_\beta(x)|
 }
 }{a}
 \nonumber\\
 &\leq
 \frac{2u}{a}\|\zeta_p\|
 +
 \frac{\sqrt{|D_\beta(x)|}}{\sqrt a}
 \nonumber\\
 &=
 \frac{2}{2-us}\|\zeta_p\|
 +
 \frac{\sqrt{|D_\beta(x)|}}{\sqrt a}.
 \label{eq:energy_root_bound}
\end{align}
The second inequality uses $\sqrt{p+q}\leq\sqrt p+\sqrt q$ for $p,q\geq0$. Because $v=\Phi(x)\in\Delta_C$, its coordinates are nonnegative and sum to one. Hence
\[
 1
 =
 \left(\sum_{c=1}^C v_c\right)^2
 \leq
 C\sum_{c=1}^C v_c^2
 =
 Cs,
\]
by the Cauchy--Schwarz inequality, while
\[
 s=\sum_{c=1}^C v_c^2
 \leq
 \left(\sum_{c=1}^C v_c\right)^2
 =1.
\]
Thus
\[
 \frac1C\leq s\leq1.
\]
For the first coefficient in~\eqref{eq:energy_root_bound},
\begin{align}
 \frac{2}{2-us}
 &=
 \frac{2(1+\beta s)}{2+\beta s}
 \nonumber\\
 &=
 1+\frac{\beta s}{2+\beta s}
 \nonumber\\
 &\leq
 1+\frac{\beta}{2+\beta}
 =
 c_\beta.
 \label{eq:energy_factor_bound}
\end{align}
Here the inequality follows from $s\leq1$ and the fact that
\[
 q\mapsto\frac{q}{2+q}
\]
is increasing for $q\geq0$.

For the second coefficient,
\begin{equation}
 a
 =
 \frac{\beta(2+\beta s)}{(1+\beta s)^2}.
 \label{eq:energy_a_expression}
\end{equation}
Define
\[
 \varphi(q):=\frac{2+q}{(1+q)^2},
 \qquad q\geq0.
\]
Then
\[
 \varphi'(q)
 =
 -\frac{3+q}{(1+q)^3}<0,
\]
so $\varphi$ is decreasing on $[0,\infty)$. Since
$0\leq\beta s\leq\beta$, it follows from
\eqref{eq:energy_a_expression} that
\begin{align}
 a
 &=
 \beta\,\varphi(\beta s)
 \nonumber\\
 &\geq
 \beta\,\varphi(\beta)
 =
 \frac{\beta(2+\beta)}{(1+\beta)^2}
 =
 k_\beta^{-2}.
 \label{eq:energy_coercivity_bound}
\end{align}
Consequently,
\[
 \frac{1}{\sqrt a}\leq k_\beta.
\]
Substituting
\eqref{eq:energy_factor_bound} and
\eqref{eq:energy_coercivity_bound} into
\eqref{eq:energy_root_bound} gives
\[
 \|\zeta_{\mathrm{fit}}(x)\|
 \leq
 c_\beta\|\zeta_{\mathrm{proto}}(x)\|
 +
 k_\beta\sqrt{|D_\beta(x)|},
\]
which is~\eqref{eq:nlms_pointwise_bound} and completes the proof.
\section*{Appendix G. Proof of Proposition~\ref{proposition_120820261330}}
\label{app:closure_risk_estimation}
For a vector-valued function $g$, use the notation
\[
 \|g\|_{L^2(\mathbb P_x)}
 :=
 \left(
 \mathbb E_{x\sim\mathbb P_x}
 \!\left[\|g(x)\|^2\right]
 \right)^{1/2}.
\]
All fitted objects are treated as fixed after conditioning on
$\mathcal F_{\mathrm{fit}}$.
\paragraph{Direct estimation bound.}
Subtracting $\widehat B^\top\Phi(x)$ from
\eqref{eq_110820261955} gives
\begin{equation}
 \zeta_{\mathrm{fit}}(x)
 =
 (B_*-\widehat B)^\top\Phi(x)
 +B_*^\top\bigl(y(x)-\Phi(x)\bigr)
 +\xi_{\mathrm{res}}(x).
 \label{eq:appendix_exact_residual_decomposition}
\end{equation}
Applying the triangle inequality (Minkowski's inequality) in $L^2(\mathbb P_x)$ bounds the residual by the sum of the three component norms:
\begin{align}
 \|\zeta_{\mathrm{fit}}\|_{L^2(\mathbb P_x)}
 &\leq
 \|(B_*-\widehat B)^\top\Phi\|_{L^2(\mathbb P_x)}
 +\|B_*^\top(y-\Phi)\|_{L^2(\mathbb P_x)}
 +\|\xi_{\mathrm{res}}\|_{L^2(\mathbb P_x)}
 \nonumber\\
 &\leq
 \varepsilon_{\mathrm{est}}
 +\|B_*\|_2a_\Phi
 +\sigma_{\mathrm{res}}.
 \label{eq:appendix_direct_risk}
\end{align}
Indeed, by the spectral-norm inequality,
\[
 \|B_*^\top(y(x)-\Phi(x))\|
 \leq
 \|B_*\|_2\|y(x)-\Phi(x)\|
\]
pointwise, and hence
\[
 \|B_*^\top(y-\Phi)\|_{L^2(\mathbb P_x)}
 \leq
 \|B_*\|_2a_\Phi.
\]
The other two terms in
\eqref{eq:appendix_direct_risk} are
$\varepsilon_{\mathrm{est}}$ and
$\sigma_{\mathrm{res}}$ by
\eqref{eq:estimation_residual_scales}.
Since
\[
 \|\zeta_{\mathrm{fit}}\|_{L^2(\mathbb P_x)}
 =
 R_{\mathbb P_x}(\Phi,\widehat B)^{1/2},
\]
equation~\eqref{eq:appendix_direct_risk} proves \eqref{eq:direct_closure_bound}. No orthogonality among the three
terms in \eqref{eq:appendix_exact_residual_decomposition} is required.
\paragraph{Bound using parameter-error change.}
Proposition~\ref{proposition_110820262057} gives the pointwise
inequality
\[
 \|\zeta_{\mathrm{fit}}(x)\|
 \leq
 c_\beta\|\zeta_{\mathrm{proto}}(x)\|
 +k_\beta\sqrt{|D_\beta(x)|}.
\]
Applying Minkowski's inequality gives
\begin{align}
 \|\zeta_{\mathrm{fit}}\|_{L^2(\mathbb P_x)}
 &\leq
 c_\beta
 \|\zeta_{\mathrm{proto}}\|_{L^2(\mathbb P_x)}
 +
 k_\beta
 \left(
 \mathbb E_{x\sim\mathbb P_x}
 [|D_\beta(x)|]
 \right)^{1/2}
 \nonumber\\
 &=
 c_\beta
 \|\zeta_{\mathrm{proto}}\|_{L^2(\mathbb P_x)}
 +
 k_\beta\varepsilon_{\mathrm{en}}.
 \label{eq:appendix_energy_risk}
\end{align}
From~\eqref{eq:prototype_residual},
\[
 \zeta_{\mathrm{proto}}(x)
 =
 B_*^\top\bigl(y(x)-\Phi(x)\bigr)
 +\xi_{\mathrm{res}}(x),
\]
so another application of Minkowski's inequality gives
\begin{align*}
 \|\zeta_{\mathrm{proto}}\|_{L^2(\mathbb P_x)}
 &\leq
 \|B_*^\top(y-\Phi)\|_{L^2(\mathbb P_x)}
 +
 \|\xi_{\mathrm{res}}\|_{L^2(\mathbb P_x)}\\
 &\leq
 \|B_*\|_2a_\Phi+\sigma_{\mathrm{res}}.
\end{align*}
Substituting this bound into
\eqref{eq:appendix_energy_risk} yields
\[
 R_{\mathbb P_x}(\Phi,\widehat B)^{1/2}
 \leq
 c_\beta
 \bigl(
 \|B_*\|_2a_\Phi+\sigma_{\mathrm{res}}
 \bigr)
 +
 k_\beta\varepsilon_{\mathrm{en}},
\]
which proves~\eqref{eq:energy_closure_bound}. Both~\eqref{eq:direct_closure_bound} and
\eqref{eq:energy_closure_bound} hold for a fixed fitted matrix $\widehat B$ and do not require independence among the transitions
used to fit that matrix.
\paragraph{Class-mean sensitivity and reconstruction margins.}
The assignment contribution is $\|B_*^\top(y-\Phi)\|_{L^2(\mathbb P_x)}$. Replacing it by any upper bound $q_\Phi$ preserves the preceding inequalities. In that case,
\begin{align*}
 R_{\mathbb P_x}(\Phi,\widehat B)^{1/2}
 &\leq
 q_\Phi+\sigma_{\mathrm{res}}
 +\varepsilon_{\mathrm{est}},\\
 R_{\mathbb P_x}(\Phi,\widehat B)^{1/2}
 &\leq
 c_\beta
 \bigl(q_\Phi+\sigma_{\mathrm{res}}\bigr)
 +k_\beta\varepsilon_{\mathrm{en}}.
\end{align*}
Proposition~\ref{prop:prototype_diameter} gives
\[
 \|B_*^\top(y-\Phi)\|_{L^2(\mathbb P_x)}
 \leq
 L_{\mathrm{proto}}\sqrt{A_{\mathrm{fid}}}.
\]
Taking
\[
 q_\Phi
 =
 L_{\mathrm{proto}}\sqrt{A_{\mathrm{fid}}}
\]
in the preceding two inequalities proves \eqref{eq:fidelity_direct_risk}--\eqref{eq:fidelity_energy_risk}. For the KAHM-specific geometric bound, Theorem~\ref{thm:kahm_margin_fidelity} gives
\[
 A_{\mathrm{fid}}\leq A_{\mathrm{geom}}.
\]
Combining this inequality with Proposition~\ref{prop:prototype_diameter} yields
\[
 \|B_*^\top(y-\Phi)\|_{L^2(\mathbb P_x)}
 \leq
 L_{\mathrm{proto}}\sqrt{A_{\mathrm{fid}}}
 \leq
 L_{\mathrm{proto}}\sqrt{A_{\mathrm{geom}}}.
\]
Hence we may take
\[
 q_\Phi
 =
 L_{\mathrm{proto}}\sqrt{A_{\mathrm{geom}}}.
\]
Defining
\[
 T_{\mathrm{geom}}
 :=
 L_{\mathrm{proto}}\sqrt{A_{\mathrm{geom}}}
 +\sigma_{\mathrm{res}},
\]
the two resulting bounds are
\[
 R_{\mathbb P_x}(\Phi,\widehat B)^{1/2}
 \leq
 T_{\mathrm{geom}}+\varepsilon_{\mathrm{est}}
\]
and
\[
 R_{\mathbb P_x}(\Phi,\widehat B)^{1/2}
 \leq
 c_\beta T_{\mathrm{geom}}
 +k_\beta\varepsilon_{\mathrm{en}}.
\]
Since both inequalities hold simultaneously, taking their minimum
gives~\eqref{eq:geometry_closure_bound}. The class-mean distances and reconstruction-score margins give bounds on the assignment contribution. They do not by themselves control the other terms. These are the within-class successor variation $\sigma_{\mathrm{res}}$, the prediction difference from $B_*$ measured by $\varepsilon_{\mathrm{est}}$, and the hypothetical-update term $\varepsilon_{\mathrm{en}}$.
\section*{Appendix H. Concentration Details and Independent-Trajectory Evaluation}
\label{app:fidelity_concentration}
Condition throughout on $\mathcal F_{\mathrm{fit}}$, so the representation and fitted matrices are fixed. The evaluation data are sampled independently of the data and random choices used for construction, fitting, and selection.
\paragraph{Assignment and geometric losses.}
Conditional on the fitted objects, the evaluation losses $\ell_i=(1-(\Phi(x'^{i}))_{c_i})^2$ are independent, lie in $[0,1]$, and have common mean $A_{\mathrm{fid}}$. Hoeffding's exponential-moment inequality~\cite{hoeffding1963} gives
\begin{equation}
\mathbb P(A_{\mathrm{fid}}-\widehat A_{\mathrm{fid}}>t)
\leq\inf_{\lambda>0}\exp(-\lambda Mt+M\lambda^2/8).
\label{eq:appendix_fidelity_mgf}
\end{equation}
Choosing $\lambda=4t$ yields
\begin{equation}
\mathbb P(A_{\mathrm{fid}}-\widehat A_{\mathrm{fid}}>t)\leq e^{-2Mt^2}.
\label{eq:appendix_fidelity_tail}
\end{equation}
Set $t=\sqrt{\log(1/\delta)/(2M)}$ and use $A_{\mathrm{fid}}\leq1$ to obtain~\eqref{eq:fidelity_confidence_event}. Averaging this conditional guarantee over the construction, fitting, and selection data gives the same $1-\delta$ coverage without conditioning.

Fix positive margin thresholds $g_c$ before evaluation. The loss $W_g$ in~\eqref{eq:geometric_evaluation_loss} lies in $[0,1]$, has expectation $A_{\mathrm{geom}}$, and is at least as large as the assignment loss at every state. The same argument gives
$A_{\mathrm{fid}}\leq A_{\mathrm{geom}}\leq U_{\mathrm{geom},\delta}$ with confidence $1-\delta$.
On the same sample at the same confidence level, $\widehat A_{\mathrm{fid}}\leq\widehat A_{\mathrm{geom}}$ implies $U_\delta\leq U_{\mathrm{geom},\delta}$.
The geometric version explains the role of reconstruction-score separation, while direct evaluation of assignment loss gives an equally tight or tighter upper bound. For a joint confidence statement about several bounds, allocate their failure probabilities so that their sum is at most the allowed total, or establish another simultaneous guarantee. Margin thresholds and representations must be fixed before evaluation, unless a simultaneous confidence guarantee covers their selection.

\paragraph{Independent trajectories.}
Let $\mathsf T_j=(T_j,x_{j1},\ldots,x_{jT_j})$, $j=1,\ldots,J$, be i.i.d. evaluation trajectories independent of $\mathcal F_{\mathrm{fit}}$, with $1\leq T_j<\infty$ almost surely. Within-trajectory dependence is unrestricted. Define
\begin{equation}
Z_j^{\mathrm{traj}}:=\frac1{T_j}\sum_t(1-(\Phi(x_{jt}))_{c(x_{jt})})^2\in[0,1],
\label{eq:trajectory_fidelity_loss}
\end{equation}
and the state distribution obtained by drawing a trajectory and then choosing one of its states uniformly:
\begin{equation}
\overline{\mathbb P}_x(A):=\mathbb E\left[\frac1T\sum_t\mathbbm{1}_A(x_t)\right],
\qquad A\in\mathcal B(\mathcal X).
\label{eq:trajectory_weighted_law}
\end{equation}
The independent losses have common mean
$\overline A_{\mathrm{fid}}=\int(1-\Phi_{c(x)}(x))^2\,d\overline{\mathbb P}_x(x)$.
With $\widehat A_{\mathrm{fid}}^{\mathrm{traj}}=J^{-1}\sum_jZ_j^{\mathrm{traj}}$, Hoeffding gives
\[
\mathbb P\!\left(\overline A_{\mathrm{fid}}\leq
\min\left\{1,\widehat A_{\mathrm{fid}}^{\mathrm{traj}}+
\sqrt{\frac{\log(1/\delta)}{2J}}\right\}\right)\geq1-\delta.
\]
The same statement applies to within-trajectory averages of $W_g$. This distribution gives every trajectory equal weight and every state within a trajectory equal weight. Thus a state in a short trajectory receives more weight than a state in a long trajectory. Pooling all states with equal weights would give a different distribution when lengths differ. Class probabilities, successor means, variances, and risks must all be computed for the stated distribution, with positive probability for every retained class.

Include successors $x_{jt}^+$ for transition losses. Define their joint law by
\[
\mathbb E_{\overline{\mathbb P}_{x,x^+}}h=\mathbb E[T^{-1}\sum_t h(x_t,x_t^+)].
\]
To apply Proposition~\ref{prop:evaluable_risk}, average each of its three losses within each trajectory and replace the sample count $M$ by $J$.
For Proposition~\ref{prop:exclusion_certificate}, use
\begin{align*}
\widehat f&=\frac1J\sum_j\frac1{T_j}\sum_t(1-\Phi_{c(x_{jt})}(x_{jt}))^2,\\
\widehat s&=\min_{a^1,\ldots,a^C\in\Delta_C}
\frac1J\sum_j\frac1{T_j}\sum_t\|\Phi(x_{jt}^+)-a^{c(x_{jt})}\|^2.
\end{align*}
Compute each class mean using transition weights $1/(JT_j)$ for trajectory $j$. Empty classes contribute zero to the loss. Replacing one complete trajectory changes $\widehat s$ by at most $2/J$. Also, its expected value is no larger than the population within-class successor variance under $\overline{\mathbb P}_{x,x^+}$, because the empirical means minimize the weighted sample loss. The same bounded-differences and assignment-loss arguments therefore give~\eqref{eq:exclusion_certificate} with $r_\delta=\log(2/\delta)/J$. The confidence correction therefore uses the number of independent trajectories, $J$, rather than the total number of transitions.

\section*{Appendix I. Infimum identity used in Result~\ref{result_210820160745}}\label{app:compact_risk}
For $a,b\geq0$,
\begin{equation}
\inf_{t>0}[(1+t)a+(1+t^{-1})b]=(\sqrt a+\sqrt b)^2.
\label{eq:young_infimum_repaired}
\end{equation}
The expression equals $a+b+ta+b/t$. If $a,b>0$, its minimum occurs at $t=\sqrt{b/a}$ and equals $a+b+2\sqrt{ab}$. If only $b=0$, take $t\downarrow0$; if only $a=0$, take $t\to\infty$. If both vanish, every $t>0$ attains the infimum.
\setcounter{section}{9}
\section{Experimental provenance and reporting conventions}
\label{app:protocol}
\subsection{Selection records and evaluation roles}
Table~\ref{tab:exp_search_protocol} lists the candidate grids used in the principal oscillator and Acrobot searches. Each count is the number of $(C,\omega)$ combinations; repetitions across seeds and noise levels are counted separately. The Duffing reference, Duffing expanded, and Van der Pol clean searches repeat each candidate with seeds $0,1,2$. The Van der Pol noise-aware search repeats each candidate with those three seeds at five training-noise levels. Van der Pol selection uses centered error $1-R_h^2$. The clean sweep averages this error over the stated horizons, forms a one-standard-error set, and selects within that set using coordinate variation. The noise-aware sweep forms its one-standard-error set using the weighted score at horizons $50,100,200$, then uses variation to break ties if needed. The Acrobot grid search uses seed $0$ only; after selecting $(C,\omega)=(150,0.5)$, that configuration is evaluated separately with seeds $0,1,2$. CartPole and MountainCar use separate task-specific parameter sweeps; their selected settings and objectives are described in Section~\ref{sec:exp_policy}. 

The repository documents which experimental runs were completed and records reproducibility checks. Run-accounting information is stored under \path{reproduction/run_accounting/}. A complete 27-job execution of the Duffing reference grid, including the scheduled configurations, exit status, logs, and output checksums, is stored under \path{reproduction/prospective_campaigns/duffing_reference_27/}. The repository-level summary \path{reproduction/REPRODUCIBILITY_EVIDENCE.md} links these records and describes the supporting evidence for the reported results.

\begin{table}[htbp]
\centering
\caption{Documented candidate grids. Count is the Cartesian-product size in $(C,\omega)$ only. Duffing reference, Duffing expanded, and Van der Pol clean are repeated over seeds $0,1,2$; Van der Pol noise-aware additionally repeats over five training-noise levels; Acrobot search uses seed $0$ only and the selected configuration is subsequently assessed over seeds $0,1,2$. For the clean Van der Pol sweep, $\mathcal H=\{1,10,50,100,200\}$ and $\overline{(1-R^2)}_{\mathcal H}$ denotes the mean centered error over those horizons. The noise-aware weighted score is evaluated at $h=50,100,200$. ``1-SE/$\rho_{\mathrm{var}}$'' denotes formation of the one-standard-error set followed, when multiple finalists remain, by normalized retained variation as the primary tie-break and normalized entropy effective rank as the secondary tie-break.}
\label{tab:exp_search_protocol}
\small
\setlength{\tabcolsep}{4pt}
\begin{tabular}{lp{0.25\linewidth}p{0.21\linewidth}rp{0.19\linewidth}}
\toprule
Study & $C$ & $\omega$ & Count & Objective\\
\midrule
Duffing reference
& $15,25,35$
& $4,8,12$
& 9
& $E_1$\\

Duffing expanded
& $8,10,12,15,20,25,30,40$
& $2,4,6,8,10,12,16,20$
& 64
& $\overline E_{\mathcal H}$\\

Van der Pol clean
& $10,15,20,25,30,40,50$
& $0.5,1,2,4,6,8,12$
& 49
& $\overline{(1-R^2)}_{\mathcal H}$; 1-SE/$\rho_{\mathrm{var}}$\\

Van der Pol noise-aware
& $10,15,20,25,30,40$
& $0.25,0.5,1,2,4$
& 30
& weighted $(1-R^2)$; 1-SE/$\rho_{\mathrm{var}}$\\
Acrobot
& $20,40,50,75,100,150$
& $0.5,1,2,4,8$
& 30
& $E_{50}$\\
\bottomrule
\end{tabular}
\end{table}
Estimator and projection ablations hold the soft map fixed. Assignment ablations compare different coordinate maps at a fixed class count $C$. Expanded Duffing tuning and the Van der Pol clean sweep both use per-step process-noise standard deviation $10^{-5}$ and seeds $0,1,2$; neither adds observation noise. Because these scores are used to choose hyperparameters, they are not performance estimates from a final test set held completely separate from tuning. The Acrobot candidate grid is searched with seed $0$ only, and the selected $(150,0.5)$ configuration is subsequently assessed over seeds $0,1,2$. This post-selection experiment uses different evaluation seeds from the search, but it is not designated as a final test because
no dataset was reserved until completion of all experimental model-selection decisions. The trajectory-count study keeps the evaluation trajectory seeds fixed while varying the number of training trajectories and reports variation across the three fitted models. The external-baseline study fits one model per method to the same concatenated training trajectories, then reports the mean over three separate evaluation trajectories. The trajectory-count study replicates model fits, whereas the external-baseline study replicates evaluation trajectories for one fit. Their variability summaries therefore describe different sources of variation and cannot be pooled into one standard error.

CartPole and MountainCar closure runs pair base training seeds $0,1,2$ with base evaluation seeds $1000,1001,1002$. Acrobot pairs them with $100,101,102$. The policy drivers generate episode reset seeds by adding the episode index to the base seed. Adjacent base seeds therefore yield overlapping reset-seed ranges. Within each fit, training and evaluation reset-seed ranges are separate. Across fits, overlapping reset seeds mean the three episode datasets are not wholly independent. The single-run CartPole and MountainCar behavioral analysis instead uses base seeds $0$ and $2000$.

Every simplex-valued target has squared norm at least $1/C$. Thus the denominator of $E_h$ is positive whenever at least one valid rollout start exists. $R_h^2$ requires nonzero target variation; a constant target makes this score undefined. Its reference is the evaluation-set mean and is not a separately fitted deployable baseline. The projection comparisons in Section~\ref{sec:exp_closure} evaluate changes in prediction error. They do not report $\widehat\nu_{\mathrm E_t}$, the mean squared distance of predictions from the simplex defined in Section~\ref{sec:finite_koopman_closure}.

\subsection{Implementation and result provenance}
The public repository linked in Section~\ref{sec:exp_scope} contains the principal implementation in \path{kernel_affine_hull_koopman_machines.py}, with KAHM utilities in \path{parallel_autoencoders.py} and \path{combine_multiple_autoencoders_extended.py}. The repository also contains the figure files used in the manuscript, the experiment outputs supporting the reported results, and the scripts used to generate the derived tables and figures. For the oscillator studies, the main entry points are \path{run_exp_duffing_tuning.py}, \path{run_exp09_vanderpol_tuning.py}, \path{run_exp12_vanderpol_noise_aware_tuning.py}, \path{run_exp13_vanderpol_tuned_trajectory_generalization.py}, and the two \path{run_exp15_*_tuned_external_baselines.py} runners. CartPole and MountainCar use \path{run_exp16_tuned.py} and \path{run_exp17_tuned_interpretability.py}. Acrobot uses \path{run_exp18_acrobot_tuning_pylance_clean.py} for the grid search and \path{run_exp18_acrobot_selected_3seed_validation.py} for the three-seed evaluation of the selected configuration. Spectral diagnostics use \path{experiment_19_residual_verified_kahkm_spectra.py}. Each runner implements its own study's protocol and should be used only for that study.

The repository stores CSV, JSON, NPZ, and compressed result files supporting the numerical values reported in the tables and figures. Reviewers who want one entry point can begin with \path{reproduction/REPRODUCIBILITY_EVIDENCE.md}, which summarizes the available evidence and links to the detailed records. The raw input datasets supporting the reported empirical results, together with their manifest and checksums, are under \path{reproduction/raw_data/canonical/}. Run-level coverage records are under \path{reproduction/run_accounting/}, and a complete 27-job execution of the Duffing reference grid, including scheduled configurations, exit status, logs, and output checksums, is stored under \path{reproduction/prospective_campaigns/duffing_reference_27/}. The reference Apple Silicon software environment is recorded in \path{provenance/reference_arm64_environment/} and \path{requirements-lock-arm64.txt}. Author-generated data and results are released under CC BY 4.0, and author-owned source code is released under MIT. A final release tag and DOI will be added when an immutable archival release is created.

\subsection{Figure provenance}
The horizon curves, noise and trajectory-count diagnostics, reference-class-count plot, reference-class maps, and spectral panels are generated from the experimental studies described above. In archived figure labels, ``KAHKM folding'' denotes the KAHM-affinity model, and ``association error'' denotes the plotted relative coordinate-prediction error $E_h$. Figures based on the fixed reference configurations and tables based on selected configurations come from separate experiments, as stated in their captions. The file \path{reproduction/reported_results.tsv} links each manuscript result to its supporting source files and records the canonical source and output paths. Version-controlled generator scripts reproduce the derived tables and spectral figures. The spectral panels use the held-out spectral residual $\rho$ defined in~\eqref{eq:exp_spectral_residual}. Candidates come from the transposed reduced matrix computed in the sampled training feature span. Colors show $\log_{10}\rho$ on fixed held-out spectral subsamples. The corresponding feature-rank records, subsample indices, per-mode residual rows, and Table~\ref{tab:exp_spectra} summaries are stored in the public repository. Results obtained with fixed reference configurations and results obtained with selected configurations are kept separate because they use different soft-abstraction maps and therefore different prediction targets.
\subsection{Numerical scope of the closure-risk bounds}
\label{sec:fidelity_diagnostics}
The benchmark experiments measure the fitted models' prediction errors and candidate spectral residuals; they do not evaluate the margin bounds, Result~\ref{result_210820160745}, or Propositions~\ref{prop:evaluable_risk} and~\ref{prop:exclusion_certificate}.
For independent evaluation states with $c_i=c(x'^{i})$, the assignment statistic is
\begin{equation}
\widehat A_{\mathrm{fid}}=M^{-1}\sum_i(1-(\Phi(x'^{i}))_{c_i})^2.
\label{eq:empirical_fidelity_diagnostics}
\end{equation}
Result~\ref{result_210820160745} additionally contains unknown population means and expectations. Proposition~\ref{prop:evaluable_risk} gives a computable upper bound using class means estimated before evaluation. Proposition~\ref{prop:exclusion_certificate} gives a lower bound that accounts for estimating class means and measuring within-class residual variation on the same evaluation sample. For observations dependent within trajectories, Appendix~H gives the corresponding bounds when the trajectories themselves are independent and identically distributed. The exact four-state calculation in Section~\ref{sec:attainable_closure} establishes that the lower bound can exclude a positive tolerance for an admissible KAHM map; it is separate from the benchmark results. Aggregate benchmark errors alone do not evaluate or certify these theoretical bounds.
\section{Reproducibility Checklist for JAIR}
\label{app:checklist}
The checklist refers to the public repository linked in Section~\ref{sec:exp_scope} and described in Appendix~\ref{app:protocol}. The repository contains the reproducibility records used for the statements below. A final immutable release tag and DOI will be added when the archival release is created.
\subsection*{All articles:}
\begin{enumerate}
\item All claims investigated in this work are clearly stated. [yes]
\item Clear explanations are given how the work reported substantiates the claims. [yes]
\item Limitations or technical assumptions are stated clearly and explicitly. [yes]
\item Conceptual outlines and/or pseudo-code descriptions of the AI methods introduced in this work are provided, and important implementation details are discussed. [yes]
\item Motivation is provided for all design choices, including algorithms, implementation choices, parameters, data sets and experimental protocols beyond metrics. [yes] The manuscript explains the principal design choices. Exact implementation defaults and selection rules are recorded in the version-controlled runner scripts, \path{reproduction/reported_results.tsv}, \path{reproduction/generated/}, and \path{reproduction/raw_data/canonical_input_plan.json}.
\end{enumerate}

\subsection*{Articles containing theoretical contributions:}
Does this paper make theoretical contributions? [yes]
\begin{enumerate}
\item All assumptions and restrictions are stated clearly and formally. [yes]
\item All novel claims are stated formally (e.g., in theorem statements). [yes]
\item Proofs of all non-trivial claims are provided in sufficient detail to permit verification by readers with a reasonable degree of expertise. [yes]
\item Complex formalism, such as definitions or proofs, is motivated and explained clearly. [yes]
\item The use of mathematical notation and formalism serves the purpose of enhancing clarity and precision. [yes]
\item Appropriate citations are given for all non-trivial theoretical tools and techniques. [yes]
\end{enumerate}

\subsection*{Articles reporting on computational experiments:}
Does this paper include computational experiments? [yes]
\begin{enumerate}
\item All source code required for conducting experiments is included in an online appendix or will be made publicly available upon publication of the paper. The online appendix follows best practices for source code readability and documentation as well as for long-term accessibility. [yes] The public repository identified in Section~\ref{sec:exp_scope} contains the experiment drivers, reproduction instructions, the file that links reported results to their source outputs, the locked software environment, and verification utilities. A final release tag and DOI will be added when the immutable archival release is created.
\item The source code comes with a license that allows free usage for reproducibility purposes. [yes] Author-owned source code is released under MIT; third-party components retain their licenses.
\item The source code comes with a license that allows free usage for research purposes in general. [yes] The MIT license permits research use subject to its terms.
\item Raw, unaggregated data from all experiments is included in an online appendix or will be made publicly available upon publication of the paper. The online appendix follows best practices for long-term accessibility. [yes] The repository contains the raw input datasets supporting the reported empirical results, together with the unaggregated run-level result files used to produce the reported summaries. The raw-input release under \path{reproduction/raw_data/canonical/} includes dataset documentation, a manifest, and SHA-256 checksums. The search-protocol summary in Appendix~\ref{app:protocol} documents parameter grids and therefore does not correspond to a separate experimental run.
\item The unaggregated data comes with a license that allows free usage for reproducibility purposes. [yes] Author-generated experiment data/metadata are released under CC BY 4.0.
\item The unaggregated data comes with a license that allows free usage for research purposes in general. [yes] CC BY 4.0 permits research reuse subject to attribution.
\item If an algorithm depends on randomness, then the method used for generating random numbers and for setting seeds is described in a way sufficient to allow replication of results. [yes] Appendix~\ref{app:protocol}, \path{reproduction/raw_data/canonical_input_plan.json}, raw-data JSON metadata, and the version-controlled experiment scripts and their command-line arguments record the seeds, seed-generation formulas, and procedures for generating noisy or randomized inputs.
\item The execution environment for experiments, the computing infrastructure (hardware and software) used for running them, is described, including GPU/CPU makes and models; amount of memory (cache and RAM); make and version of operating system; names and versions of relevant software libraries and frameworks. [yes] The reference run used an Apple M1 iMac with native ARM64 execution. The recorded hardware and operating-system information is in \path{provenance/reference_arm64_environment/}; \path{REPRODUCIBILITY.md} explains the reference platform, and \path{requirements-lock-arm64.txt} gives the exact Python package versions. These files are the authoritative record of the hardware and software details captured for the reference reproduction environment.
\item The evaluation metrics used in experiments are clearly explained and their choice is explicitly motivated. [yes]
\item The number of algorithm runs used to compute each result is reported. [yes] Figure and table captions state the number of runs or fitted models used for each summary. Appendix~\ref{app:protocol} gives the corresponding seed sets, and \path{reproduction/reported_results.tsv} links each manuscript result to the source output files from which it was derived.
\item Reported results have not been ``cherry-picked'' by silently ignoring unsuccessful or unsatisfactory experiments. [yes] The repository records run-level coverage checks for the reported experiments under \path{reproduction/run_accounting/}. For the Duffing reference grid, a complete 27-job execution with scheduled configurations, exit status, logs, and output checksums is provided under \path{reproduction/prospective_campaigns/duffing_reference_27/}. The reported-result registry \path{reproduction/reported_results.tsv} links each manuscript result to its supporting source outputs.
\item Analysis of results goes beyond single-dimensional summaries of performance to include measures of variation, confidence, or other distributional information. [yes]
\item All (hyper-) parameter settings for the algorithms/methods used in experiments have been reported, along with the rationale or method for determining them. [yes] The manuscript reports the settings that define each experiment. The version-controlled runner defaults, \path{reproduction/reported_results.tsv}, \path{reproduction/generated/}, and \path{reproduction/raw_data/canonical_input_plan.json} record the exact fixed parameters, candidate grids, and selection rules used by the reported experiments.
\item The number and range of (hyper-) parameter settings explored prior to conducting final experiments have been indicated, along with the effort spent on (hyper-) parameter optimisation. [yes] 
Table~\ref{tab:exp_search_protocol} reports the documented candidate grids and selection objectives. The records under \path{reproduction/run_accounting/} document coverage of the corresponding grid combinations. The complete Duffing reference-grid execution under \path{reproduction/prospective_campaigns/duffing_reference_27/} additionally records the execution time of each scheduled job.
\item Appropriately chosen statistical hypothesis tests are used to establish statistical significance in the presence of noise effects. [NA] No statistical-significance claim is made; reported standard deviations are descriptive.
\end{enumerate}

\subsection*{Articles using data sets:}
Does this work rely on one or more data sets (possibly obtained from a benchmark generator or similar software artifact)? [yes]
\begin{enumerate}
\item All newly introduced data sets are included in an online appendix or will be made publicly available upon publication of the paper. The online appendix follows best practices for long-term accessibility with a license that allows free usage for research purposes. [yes] The author-generated trajectory, transition, and training-noise input data supporting the reported empirical results are released under \path{reproduction/raw_data/canonical/} and licensed under CC BY 4.0.
\item The newly introduced data set comes with a license that allows free usage for reproducibility purposes. [yes]
\item The newly introduced data set comes with a license that allows free usage for research purposes in general. [yes]
\item All data sets drawn from the literature or other public sources are accompanied by appropriate citations. [yes]
\item All data sets drawn from the existing literature are publicly available. [yes]
\item All data sets that are not publicly available are described in detail. [yes] The experiment inputs generated for this work are made public in the repository. \path{reproduction/raw_data/DATASET_CARD.md}, \path{reproduction/raw_data/canonical/dataset_index.tsv}, and the accompanying JSON metadata describe what each dataset contains, which experiment uses it, how it was generated, and how its integrity is checked.
\item All new data sets and data sets that are not publicly available are described in detail, including relevant statistics, the data collection process and annotation process if relevant. [yes] The raw-data manifest and per-dataset metadata record the generating system or environment, dataset role, seeds, array sizes, and checksums. No human annotation is used; reference labels are produced by the model or clustering procedure and are identified as derived quantities.
\item All methods used for preprocessing, augmenting, batching or splitting data sets are described in detail. [yes] Section~\ref{sec:exp_protocol}, Appendix~\ref{app:protocol}, and the repository metadata describe how trajectories and episodes are divided into training and evaluation data, how training observation noise is added, the sample order used by NLMS, the rules for valid rollout
horizons, and any batching or subsampling used by a study. The reported experiments use the numerical state or observation coordinates generated by each benchmark without additional per-coordinate standardization, whitening, or data-dependent rescaling before $K$-means or KAHM construction.
\end{enumerate}

\end{document}